\pdfoutput=1 % Required by arXiv

\documentclass[letterpaper]{article} %DO NOT CHANGE THIS
\usepackage{aaai18}  %Required
\usepackage{times}  %Required
\usepackage{helvet}  %Required
\usepackage{courier}  %Required
\usepackage{url}  %Required
\usepackage{graphicx}  %Required
\frenchspacing  %Required
\setlength{\pdfpagewidth}{8.5in}  %Required
\setlength{\pdfpageheight}{11in}  %Required
   
\usepackage[ruled,linesnumbered]{algorithm2e}
\usepackage{booktabs}
\usepackage{amsthm}
\usepackage{amsfonts}
\usepackage{mathtools}
\usepackage{thmtools}
\usepackage{thm-restate}
\usepackage{paralist}
\usepackage{ifdraft}

\declaretheorem[name=Lemma]{lemma}

\declaretheorem[name=Definition]{definition}
\declaretheorem[name=Example]{example}

\newif\ifextendedversion
\extendedversiontrue % comment this line for switching to the short version

% Macros
\renewcommand\vec{\mathbf}

\newcommand{\myparagraph}[1]{\smallskip \noindent \textbf{#1}\ }

\newcommand{\segment}[2]{#1[#2]}
\newcommand{\head}{\ensuremath{\operatorname{head}}}
\newcommand{\body}{\ensuremath{\operatorname{body}}}

\newcommand{\program}{\ensuremath{\Pi}}
\newcommand{\radius}{\ensuremath{\operatorname{rad}}}
\newcommand{\offset}{\ensuremath{\Delta}}

\newcommand{\edb}[1]{\ensuremath{\mathit{#1}}}
\newcommand{\idb}[1]{\ensuremath{\mathit{#1}}}

\newcommand{\alignhere}{\ensuremath{& \;}}

\newcommand{\rank}{\ensuremath{\operatorname{rank}}}

\newcommand{\criticaldomain}[1]{\ensuremath{{\Omega_{#1}}}}
\newcommand{\criticalquery}[1]{\ensuremath{{\Theta_{#1}}}}
\newcommand{\criticalupdate}[1]{\ensuremath{{\Upsilon_{#1}}}}

\newcommand{\nrcriticalupdate}[1]{\ensuremath{{\Upsilon^\mathrm{b}_{#1}}}}

% Macros for complexity classes
\newcommand{\hard}{\textnormal{-hard}}
\newcommand{\complete}{\textnormal{-complete}}
\newcommand{\pspace}{\textsc{PSpace}}

\newcommand{\pspacecomplete}{\pspace\complete}
\newcommand{\ptime}{\textsc{P}}

\newcommand{\logspace}{\textsc{LogSpace}}

\newcommand{\aczero}{\textsc{AC}\ensuremath{^0}}

\newcommand{\conexptime}{\textnormal{co-}\textsc{NExp}}
\newcommand{\conexptimehard}{\conexptime\hard}
\newcommand{\conexptimecomplete}{\conexptime\complete}

\newcommand{\connected}{connected}

% algorithms

\newcommand{\tin}{\tau_{\mathit{in}}}
\newcommand{\tout}{\tau_{\mathit{out}}}
\newcommand{\tmem}{\tau_{\mathit{mem}}}

% PDF Info Is Required:
 \pdfinfo{
/Title (Stream Reasoning in Temporal Datalog)
/Author (Alessandro Ronca, Mark Kaminski, Bernardo Cuenca Grau, Boris Motik, Ian Horrocks)}

\setcounter{secnumdepth}{2}

\nocopyright{}

\begin{document}

\title{Stream Reasoning in Temporal Datalog}
\author{Alessandro Ronca, Mark Kaminski, Bernardo Cuenca Grau, Boris Motik \and Ian Horrocks\\
     Department of Computer Science, University of Oxford, UK\\
     $\{$alessandro.ronca, mark.kaminski, bernardo.cuenca.grau, boris.motik, ian.horrocks$\}$@cs.ox.ac.uk\\
}

\maketitle

\begin{abstract}
In recent years, there has been an increasing interest in extending traditional
stream processing engines with logical, rule-based, reasoning capabilities.
This poses significant theoretical and practical challenges since rules can
derive new information and propagate it both towards past and future time
points; as a result, streamed query answers can depend on data that has not yet
been received, as well as on data that arrived far in the past. Stream
reasoning algorithms, however, must be able to stream out query answers as soon
as possible, and can only keep a limited number of previous input facts in
memory. In this paper, we propose novel reasoning problems to deal with these
challenges, and study their computational properties on Datalog extended with a
temporal sort and the successor function---a core rule-based language for
stream reasoning applications.

\end{abstract}

\section{Introduction}

Query processing over data streams is a key aspect of Big
Data applications. For instance, algorithmic trading 
relies on real-time analysis of stock tickers and financial news items \cite{nuti2011algorithmic};
oil and gas companies continuously monitor and analyse data coming from their wellsites in order 
to detect equipment malfunction and predict maintenance needs \cite{cosad2009wellsite}; 
network providers perform real-time analysis of network flow data
to identify traffic anomalies and DoS attacks \cite{munz2007real}.

In stream processing, an input data stream is seen as an unbounded, append-only, relation  of timestamped tuples,
where timestamps are either added
by the external device that issued the tuple  or by the stream management
system receiving it  \cite{babu2001continuous,babcock2002models}.
The analysis of the input stream is performed using a standing query, the answers to which 
are also issued as a stream. 
Most applications of stream processing require near real-time 
analysis using limited resources, which poses significant challenges to stream management systems.
On the one hand, systems must be able to
compute query answers over the partial data received so far as if the
entire (infinite) stream had been available; furthermore, they must stream query answers
out with the minimum possible delay.
On the other hand, due to memory limitations, 
systems can only keep a limited \emph{history} of previously received
input facts in memory to perform computations. 
These challenges have been addressed by 
extending  traditional database query languages with window constructs, which 
declaratively specify the finite part of the input stream relevant to the
answers at the current time \cite{arasu2006cql}.

In recent years, there has been an increasing interest in
extending traditional stream management systems with logical, rule-based,
reasoning capabilities \cite{barbieri2010incremental,calbimonte2010enabling,anicic2011ep,le2011native,zaniolo2012streamlog,ozcep2014stream,beck2015lars,dao2015comparing}.  
Rules can be very useful in stream processing applications for capturing complex analysis tasks in a declarative way, 
as well as for representing background knowledge
about the application domain. 
\begin{example} \label{ex:running-example}
    Consider a number of wind turbines scattered throughout the North Sea.
    Each turbine is equipped with a sensor, which
    continuously records temperature levels of key devices within the turbine 
    and sends those readings to a data centre monitoring the
    functioning of the turbines. Temperature levels are streamed by sensors
    using a ternary predicate $\mathit{Temp}$, whose 
    arguments identify the 
    device, the temperature level, and the time of the reading. A monitoring task in the data centre
    is to track the activation of cooling measures in each turbine, 
    record temperature-induced malfunctions and shutdowns, and identify parts  at risk of future malfunction.
    This task is captured by the following set of rules:
    \begin{align}
        \mathit{Temp}(x,\mathit{high},t)&\to\mathit{Flag}(x,t)
        \label{eq:flag}\\
        \mathit{Flag}(x,t)\land\mathit{Flag}(x,t+1)&\to\mathit{Cool}(x,t+1)
        \label{eq:cool}\\
        \mathit{Cool}(x,t)\land\mathit{Flag}(x,t+1)&\to\mathit{Shdn}(x,t+1)
        \label{eq:shdn}\\
        \mathit{Shdn}(x,t)&\to\mathit{Malfunc}(x,t-2) \label{eq:malfunc}\\
        \mathit{Shdn}(x,t)\land\mathit{Near}(x,y)&\to\mathit{AtRisk}(y,t)
        \label{eq:risk}\\
        \mathit{AtRisk}(x,t)&\to\mathit{AtRisk}(x,t+1) \label{eq:rec}
    \end{align}
    Rule \eqref{eq:flag} `flags' a device whenever a high temperature reading is
    received. 
    Rule \eqref{eq:cool} says that two consecutive flags on
    a device trigger cooling measures.
    Rule \eqref{eq:shdn} says that
    an additional consecutive flag after activating cooling measures triggers a
    pre-emptive shutdown.
    By Rule \eqref{eq:malfunc}, a shutdown is due to a
    malfunction that occurred when the first flag leading to
    shutdown was detected.
    Finally, Rules \eqref{eq:risk} and \eqref{eq:rec}
    identify devices located near a shutdown device as being at risk and
    propagate risk recursively into the future.
\end{example}

The power and flexibility provided by rules poses additional challenges.
As seen in our  example, 
rules can derive information and propagate it both towards past
and future time points. As a result, query answers 
can depend  on data that has not yet been received 
(thus preventing the system from streaming out answers as soon as new input arrives), 
as well as on data
that arrived far in the past (thus forcing the system to 
keep in memory a potentially large input history).

Towards developing a solid foundation for rule-based stream reasoning,
we propose in Section \ref{sec:stream-problems}  
a suite of decision problems
that can be exploited by a
stream reasoning algorithm to deal with the aforementioned challenges.

\begin{itemize}
    \item The \emph{definitive time point} (DTP) problem 
        is to check whether query
        answers to be issued at a given time $\tout$
        will remain unaffected by any future input data
        given the current history; if so,  $\tout$
        is \emph{definitive} and answers at $\tout$ 
        can be safely output
        by the algorithm.
    \item 
        The \emph{forgetting} problem is to determine whether facts received at 
        a given previous time point and recorded in the current history can be `forgotten', in that they cannot
        affect future query answers. Forgetting allows the algorithm to maintain
        as small a history as possible.
    \item The \emph{delay} problem is to check, given a time gap $d$, whether time
        point $\tin - d$ is definitive for each time point $\tin$ at which new input
        facts are received and each history up to $\tin$.  Delay can thus be seen as
        a data-independent variant of DTP: the delay $d$ can be computed offline
        before receiving any data, and the algorithm can then safely output answers
        at $\tin-d$ as data at $\tin$ is being received.
    \item The \emph{window size} problem is a data-independent variant of
        forgetting.  The task is to determine, given a window size $s$, whether
        all history facts at
        time points up to $\tin-s$ can be forgotten for each time $\tin$ at
        which new input facts are received and each history up to $\tin$.  A stream
        reasoning algorithm can compute $s$ in an offline phase and then, in the online phase,
        immediately delete all history facts
        older than $s$ time points as new data arrives.
\end{itemize}

In Section \ref{sec:complexity}, we proceed to the study of the computational properties of the aforementioned
problems.
For this, we consider as query language \emph{temporal Datalog}---negation-free Datalog with
a special temporal sort to which the successor function (or, equivalently, addition by a constant)
is applicable \cite{chomicki1988temporal}. This is a core temporal rule-based language, which
captures other prominent temporal languages \cite{abadi1989temporal,baudinet1992temporal}
and forms the basis of more expressive formalisms for stream reasoning recently proposed in the
literature  \cite{zaniolo2012streamlog,beck2015lars}.

We show in Section \ref{sec:definitive} that DTP is $\pspace$-complete in data complexity and becomes
tractable for nonrecursive queries under very mild additional restrictions; 
thus, DTP is no harder than query evaluation \cite{chomicki1988temporal}.
In Section \ref{sec:forgetting}, we show that
forgetting is undecidable; however, quite surprisingly, the aforementioned restrictions
to nonrecursive queries allows
us to regain not only decidability, but also tractability in data complexity.
In Section \ref{sec:delay}, we turn our attention to data-independent problems.
We show that both delay and window size are undecidable in general and become
$\conexptime$-complete for nonrecursive queries.

Our results show that, although stream reasoning problems are either intractable in data
complexity or undecidable in general, they become feasible in practice for
nonrecursive queries under very mild additional restrictions. On the one hand, the data-dependent problems (DTP and forgetting) become tractable
in data complexity (a very important requirement for achieving near real-time computation in practice);
on the other hand, although the data-independent problems (delay and window size) remain intractable, these are
one-time problems which 
only need to be solved once prior to receiving any input data.

\ifextendedversion
    The proofs of all results are given in the appendix of this
    paper.
\else
    The proofs of all results are given in an extended version
    of this paper \cite{extendedversion}.
\fi

\section{Preliminaries}
\label{sec:preliminaries}

\noindent \textbf{Syntax}\ 
A vocabulary consists of \emph{predicates}, \emph{constants} and
\emph{variables}, where constants are partitioned into \emph{objects} and 
integer \emph{time points} and 
variables are partitioned into \emph{object variables} and \emph{time
variables}.
An \emph{object term} is an object or an object variable.
A \emph{time term} is either a time point, a time variable, 
or an expression of the form $t + k$ where $t$ is a 
time variable, $k$ is an integer number,
and $+$ is the standard 
\emph{integer addition function}. 
The offset $\offset(s)$ of a time term $s$ equals zero if $s$ is a time variable
or a time point and it equals $k$ if $s$ is of the form $s= t+k$.

Predicates are partitioned into 
\emph{extensional} (EDB) and 
\emph{intensional} (IDB) and they come with a nonnegative integer
\emph{arity} $n$, where each position ${1 \leq i \leq n}$ is of either
\emph{object} or \emph{time sort}. 
A predicate is \emph{rigid} if
all its positions are of object sort and it is \emph{temporal} 
if the last position is 
of time sort and all other positions are of object sort.
An \emph{atom} is an expression $P(t_1, \ldots, t_n)$ where
$P$ is a predicate and each $t_i$ is a term of the required sort.
A \emph{rigid} atom (respectively, temporal, IDB,
EDB) is an atom involving a rigid predicate
(respectively, temporal, IDB, EDB).

A  \emph{rule} $r$ is of the form $\bigwedge_i \alpha_i
\rightarrow \alpha$, where $\alpha$ and each $\alpha_i$ are rigid or temporal
atoms, and $\alpha$ is IDB whenever $\bigwedge_i \alpha_i$ is non-empty.
Atom $\head(r) = \alpha$ is the \emph{head} of $r$, and $\body(r) = \bigwedge_i \alpha_i$
is the \emph{body} of $r$.
Rules are assumed to be \emph{safe}: each head variable
must occur in the body.
An \emph{instance} $r'$ of  $r$ is obtained by
applying a substitution to $r$.
A  \emph{program} $\program$ is a finite set of rules.
Predicate $P$ is
$\program$\emph{-dependent} on a predicate $P'$ if there is a rule of $\program$
with $P$ in the head and $P'$ in the body.
The rank $\rank(P, \Pi)$ of $P$ w.r.t.\ $\Pi$ is $0$ if
$P$ does not occur in head position in $\Pi$, and is the
maximum of the values $\rank(P') + 1$ for $P'$ a predicate such that $P$ is
$\Pi$-dependent on $P'$ otherwise. We write $\rank(P)$ for $\rank(P, \Pi)$ if
$\Pi$ is clear from the context. The rank $\rank(\Pi)$ of $\Pi$ is
the maximum rank of a predicate in $\Pi$.

A  \emph{query} is a pair $Q = \langle P_Q, \program_Q \rangle$ where $\program_Q$ is a program and
$P_Q$ is an IDB predicate in $\Pi_Q$; query $Q$ is \emph{temporal}
(\emph{rigid}) if $P_Q$ is a temporal (rigid) predicate. 
A term, atom, rule, or program is \emph{ground} if it contains no variables.
A \emph{fact} $\alpha$ is a ground, function-free rigid or temporal atom;
every fact $\alpha$ corresponds to a rule of the
form $\top \rightarrow \alpha$ where $\top$ denotes the empty conjunction, 
so we use $\alpha$ and $\top \rightarrow \alpha$ interchangeably.
A \emph{dataset} $D$ is a program consisting of EDB facts.
The \emph{$\tau$-segment} $\segment{D}{\tau}$ of dataset $D$
is the subset of $D$ containing all
rigid facts and all temporal facts with time argument
$\tau' > \tau$. 

A program $\Pi$ (respectively, query $Q$) is:
\emph{Datalog} if no temporal predicate occurs in $\Pi$ (in $\Pi_Q$);
and \emph{nonrecursive} if the directed graph induced by the $\Pi$-dependencies ($\Pi_Q$-dependencies) 
is acyclic.

\myparagraph{Semantics and standard reasoning}
Rules are interpreted in the standard way as universally quantified
first-order sentences. A Herbrand interpretation $\mathcal{H}$ is a
(possibly infinite) set of facts.
Interpretation $\mathcal{H}$ \emph{satisfies} a 
rigid atom $\alpha$ if $\alpha \in \mathcal{H}$, and it satisfies
a temporal atom $\alpha$ if evaluating the addition 
function in $\alpha$ yields a fact in $\mathcal{H}$.
The notion of satisfaction is extended to conjunctions of ground atoms,
rules and programs in the standard way.
If $\mathcal{H} \models \program$, then $\mathcal{H}$ is a \emph{model} of $\program$.
Program $\program$ \emph{entails} a fact $\alpha$, 
written $\program \models \alpha$, 
if $\mathcal{H} \models \program$ implies $\mathcal{H} \models \alpha$. 
The \emph{answers} to a query $Q$ over a dataset $D$, written $Q(D)$, 
are the tuples $\vec{a}$ of constants such that $\program_Q \cup D \models P_Q(\vec{a})$.
If $Q$ is a temporal query, we denote with $Q(D,\tau)$ the subset of answers in
$Q(D)$ referring to time point $\tau$.
Given an input query $Q$, dataset $D$ and tuple $\vec{a}$,
the \emph{query evaluation problem} is to check whether $\vec{a}$ is an
answer to $Q$ over $D$; the \emph{data complexity} of query evaluation is the
complexity when $Q$ is considered fixed.
Finally, a query $Q_1$ is \emph{contained} in a query $Q_2$, written $Q_1
\sqsubseteq Q_2$, if $Q_1(D) \subseteq Q_2(D)$ for every dataset $D$.
Given input queries $Q_1$ and $Q_2$,
the \emph{query containment problem} is to check
whether $Q_1$ is contained in $Q_2$.

\myparagraph{Complexity}
Query evaluation is
$\pspacecomplete$ in data complexity assuming that numbers are coded in unary 
\cite{chomicki1988temporal}.
Data complexity drops to the circuit class $\aczero$  for 
nonrecursive programs.
By standard results in nontemporal Datalog, containment of 
temporal queries is undecidable \cite{shmueli1993equivalence}.
Furthermore, it is $\conexptime$-hard for nonrecursive queries \cite{benedikt2010impact}.

\section{Stream Reasoning Problems}
\label{sec:stream-problems}

A stream reasoning algorithm  receives as input  a
query $Q$ and a stream of temporal EDB facts, and
produces as output a stream of answers to $Q$.
Both input facts and query answers are 
processed 
by increasing value of their
timestamps, where $\tin$ and $\tout$ represent the current times at which
input facts are received and query answers are being streamed out, respectively. 
Answers at  $\tout$ are only output
when the algorithm can determine that 
they cannot be affected by future input facts.
In turn, input facts received so far 
are  kept in a \emph{history dataset} $D$ since
future query answers can be influenced by facts received at an earlier time;
practical systems, however, have limited memory and hence 
the algorithm  must also \emph{forget} facts in the history as
soon as it can determine that they will not influence
future query answers.

\begin{algorithm}[t]
    \begin{footnotesize}
        \DontPrintSemicolon
        \SetKwRepeat{Loop}{loop}{end}
        \SetKwInput{Params}{Parameters}
        \Params{Temporal query $Q$}
        $D:=\emptyset,~\tin:=0,~\tout:=0,~\tmem:=0$\;
        \Loop{}{
            receive facts $U$ that hold at $\tin$ and set $D:=D\cup U$ \;
            \While{$\tout\le\tin$ $\textup{{\bfseries and}}$  $\text{$\tout$ is
                definitive for $Q,D,\tin$}$}{
                stream output $Q(D,\tout)$\;
                $\tout:=\tout+1$\;
            }
            \While{$\tmem<\tout$ $\textup{{\bfseries and}}$ $\text{$\tmem$ is forgettable for $Q,D,\tin,\tout$}$}{
                forget all temporal facts in $D$ holding at $\tmem$\;
                $\tmem:=\tmem+1$\;
            }
        $\tin:=\tin+1$\; }
        \caption{`Online' Stream Reasoning Algorithm}
    \end{footnotesize}
\end{algorithm}

Algorithms 1 and 2 provide two different realisations of such a stream reasoning
algorithm, which we refer to as \emph{online} and \emph{offline}, respectively.

%% Online

The \emph{online algorithm} (see Algorithm 1) decides which answers to stream and
which history facts to forget `on the fly' as new input data arrives. The
algorithm records the latest time point $\tout$ for which answers have not yet
been streamed; 
as  $\tin$ increases and new data arrives, the algorithm checks 
(lines 4-7) whether answers at $\tout$ can now be streamed and, if so, it
continues incrementing $\tout$ until it finds a time point for which answers cannot be provided yet. 
This process relies on deciding whether
the considered $\tout$ are \emph{definitive}---that is, 
the answers to $Q$ at  $\tout$ for the history $D$ will remain stable even if $D$ were extended with an unknown
(and thus  arbitrary) set $U$
of future input facts.

\begin{definition}
    A \emph{$\tin$-history} $D$ is a dataset  consisting of 
    rigid facts and temporal facts with time argument at most $\tin$.
    A \emph{$\tin$-update} $U$ is a dataset  consisting of 
    temporal facts with time argument strictly greater than $\tin$.
\end{definition}

\begin{definition}
    \label{definition:delay}
    An instance $I$ of the \emph{Definitive Time Point} ($\textsc{DTP}$)
    problem is a tuple  $\langle Q,D,\tin,\tout\rangle$, with $Q$ a temporal query, $D$ a $\tin$-history
    and $\tout \leq \tin$. $\textsc{DTP}$ holds
    for $I$ iff $Q(D, \tout) = Q(D \cup U, \tout)$ for each $\tin$-update
    $U$.
\end{definition} 

\begin{example}
    Consider Example~\ref{ex:running-example}, and suppose we are interested in
    determining the time points at which a turbine malfunctions.  Thus, let the
    query $Q$ have output predicate $\mathit{Malfunc}$ and include rules
    \eqref{eq:flag}--\eqref{eq:malfunc} together with rule
    \begin{align*}
        \mathit{Temp}(x, \mathit{na}, t) &\to \mathit{Malfunc}(x,t)
    \end{align*}
    which defines an invalid reading as a malfunction. For a history $D$
    consisting of the fact $\mathit{Temp}(\mathit{a}, \mathit{high}, 0)$, we have
    that $\textsc{DTP}(Q, D, 0, 0)$ is false, since $Q(D, 0)$ is empty and
    $Q(D \cup U, 0)$ is not if the update $U$ contains
    $\mathit{Temp}(\mathit{a}, \mathit{high}, 1)$ and
    $\mathit{Temp}(\mathit{a}, \mathit{high}, 2)$.
    For a history $D'$ consisting of the fact
    $\mathit{Temp}(\mathit{a}, \mathit{na}, 0)$, we have that
    $\textsc{DTP}(Q, D', 0, 0)$ is true, since $Q(D', 0)$ already includes the
    only possible answer.
\end{example}

Algorithm 1 also records the latest time point $\tmem$ for which history facts
have not yet been forgotten. As $\tin$ increases, the algorithm checks in lines
8-11 whether all history facts at time $\tmem$ can now be forgotten and, if so,
it continues incrementing $\tmem$ until it finds a point where this is no longer
possible. For this, the algorithm decides whether the relevant $\tmem$
are \emph{forgettable}, in the sense that no future answer to $Q$ can be
affected by the history facts at $\tmem$.

\begin{definition}   
    \label{definition:forgetting}
    An instance $I$ of\/ $\textsc{Forget}$ is a tuple of the form
    $\langle Q,D,\tin,\tout,\tmem\rangle$, with $Q$ a temporal query, $D$ a $\tin$-history, and $\tmem \leq \tout \leq \tin$.
    $\textsc{Forget}$ holds for $I$ iff $Q(D \cup U, \tau) = Q(\segment{D}{\tmem} \cup U, \tau)$ for each
    $\tin$-update $U$ and each time point $\tau\ge\tout$.  
\end{definition}

\begin{example}
    Consider the query $Q$ with output predicate $\mathit{Shdn}$ and rules
    \eqref{eq:flag}--\eqref{eq:shdn} from Example~\ref{ex:running-example}.
    For a history $D$ consisting of facts
    $\mathit{Temp}(\mathit{a}, \mathit{high}, 0)$ and
    $\mathit{Temp}(\mathit{a}, \mathit{low}, 1)$, we have that
    $\textsc{Forget}(Q, D, 1,1,1)$ is true, since $Q(D[1] \cup U, 1)$ is empty for
    every $1$-update $U$.
    For a history $D'$ containing facts
    $\mathit{Temp}(\mathit{a}, \mathit{high}, 0)$ and
    $\mathit{Temp}(\mathit{a}, \mathit{high}, 1)$, we have that
    $\textsc{Forget}(Q, D, 1,1,1)$ is false, since $Q(D[1] \cup U, 1)$ is empty but
    $Q(D \cup U, 1)$ is not for the $1$-update containing
    $\mathit{Temp}(\mathit{a}, \mathit{high}, 2)$.
\end{example}

%%Offline

\begin{algorithm}[t]
    \begin{footnotesize}
        \DontPrintSemicolon
        \SetKwRepeat{Loop}{loop}{end}
        \SetKwInput{Params}{Parameters}
        \Params{Temporal query $Q$}  
        \caption{`Offline' Stream Reasoning Algorithm}
        $D:=\emptyset,~\tin:=0$\;
        compute minimal delay $d$ and minimal window size $s$ for $Q$\;
        \Loop{}{
            receive facts $U$ that hold at $\tin$ and set $D:=D\cup U$ \;
            \lIf{$\tin-d\ge 0$}{stream output $Q(D,\tin-d)$}
            forget all temporal facts in $D$ holding at $\tin-s$\;
        $\tin:=\tin+1$\; }
    \end{footnotesize}
\end{algorithm}

The \emph{offline algorithm} (Algorithm 2) precomputes 
the minimum  \emph{delay} $d$ and  \emph{window
size} $s$ for the standing query $Q$ in a way that is independent
from the input data stream. 

Intuitively, $d$ represents the
smallest time gap needed to ensure that, for any input stream and any time point $\tin$, 
the time point $\tout = \tin -d$ is definitive; in other words, that it is
always safe to stream answers with a delay $d$ relative to the currently processed input
facts. 

\begin{definition}
    \label{definition:di-delay}
    An instance $I$ of\/ $\textsc{Delay}$ is a pair 
    $\langle Q,d \rangle$, with $Q$ a temporal query and $d$ a nonnegative
    integer.  $\textsc{Delay}$ holds for $I$ iff $Q(D, \tin-d) = Q(D \cup U, \tin-d)$ 
    for each time
    point $\tin$, each $\tin$-history $D$, and each $\tin$-update $U$.
\end{definition}

\begin{example}
    In Example~\ref{ex:running-example},
    $0$ is a valid delay for the $\mathit{AtRisk}$ and $\mathit{Shdn}$ queries,
    and so is $2$ for the $\mathit{Malfunc}$ query.
\end{example}

In turn, $s$ represents the size of the smallest time interval for
which the history needs to be kept; in other words, for any input stream and any
time point $\tin$, it is safe to forget all history facts with timestamp smaller
than $\tin - s$.  

\begin{definition}
    \label{definition:di-forgetting}
    An instance $I$ of\/ $\textsc{Window}$ is a triple
    $\langle Q,d,s \rangle$, with $Q$ a temporal query and $d$ and $s$ nonnegative
    integers.  $\textsc{Window}$ holds for $I$ iff $Q(D \cup U, \tout) = Q(\segment{D}{\tin - s} \cup U, \tout)$
    for all time
    points $\tin$ and $\tout$ with $\tout > \tin-d$, each
    $\tin$-history
    $D$, and each $\tin$-update
    $U$.
\end{definition}

\begin{example}
    Consider Example~\ref{ex:running-example}.
    Assuming that we want to evaluate queries with delay $0$,
    a valid window size for the $\mathit{Shdn}$ query is $2$;
    whereas the $\mathit{AtRisk}$ query has no valid window size 
    (or, equivalently, the query requires a window of infinite size),
    since answers for that query can depend on facts arbitrarily far in the past.
\end{example}

Once the delay $d$ and window size $s$ have been determined, they remain fixed
during execution of the algorithm: indeed, as $\tin$ increases and new data
arrives in each iteration of the main loop, Algorithm 2 simply streams query
answers at $\tin-d$ and forgets all history facts at $\tin-s$. This is in
contrast to the online approach, where the algorithm had to decide in each
iteration of the main loop which answers to stream and which facts to forget.

\section{Complexity of Stream Reasoning}\label{sec:complexity}

We now start our investigation of the computational properties of 
the stream reasoning problems introduced in Section~\ref{sec:stream-problems}.
For all problems, we consider both the general case applicable to arbitrary
inputs and the restricted setting where the input queries are nonrecursive.
All our results assume that numbers are coded in unary.

\subsection{Definitive Time Point}\label{sec:definitive}
\label{subsec:dtp}

Let $I = \langle Q, D, \tin, \tout
\rangle$ be a fixed, but arbitrary, instance  of $\textsc{DTP}$ and denote with
$\criticaldomain{I}$  the set consisting of all objects in $\Pi_Q \cup D$ and a fresh object $o_I$ unique to $I$.

As stated in Definition \ref{definition:delay},  $\textsc{DTP}$ holds
for $I$ if and only if the query answers at time $\tout$ 
over the
history $D$ coincide with the answers at the same time point but over
$D$ extended with an arbitrary $\tin$-update
$U$. Note that, in addition to new facts over existing objects in $D$ and $Q$,
the update $U$ may also include facts about new objects. The following
proposition shows that, to decide \textsc{DTP}, it suffices to consider updates
involving only objects from $\criticaldomain{I}$. Intuitively, updates
containing fresh objects can be homomorphically embedded into updates over
$\criticaldomain{I}$ by mapping all fresh objects to $o_I$.

\begin{restatable}{proposition}{dtpcriticaldomain}
    \label{prop:delay}
    $\textsc{DTP}$ holds for $I$ iff $\vec{o} \in Q(D \cup U, \tout)$ implies
    $\vec{o} \in Q(D, \tout)$ for every tuple $\vec{o}$ over\/
    $\criticaldomain{I}$ and every $\tin$-update $U$ involving only objects in\/
    $\criticaldomain{I}$.
\end{restatable}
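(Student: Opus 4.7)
My plan is to prove the two directions separately, with the forward direction being immediate from monotonicity and the backward direction relying on a homomorphism argument that ``collapses'' any updates containing fresh objects.

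For the ($\Rightarrow$) direction, I would simply observe that if $\textsc{DTP}$ holds for $I$, then $Q(D,\tout) = Q(D\cup U, \tout)$ for every $\tin$-update $U$, in particular for those whose objects come from $\criticaldomain{I}$; since $\vec{o}\in Q(D\cup U,\tout)$ then gives $\vec{o}\in Q(D,\tout)$ directly, the restricted implication follows. (Note that the inclusion $Q(D,\tout)\subseteq Q(D\cup U,\tout)$ is automatic by monotonicity of positive Datalog, so the content of $\textsc{DTP}$ is exactly the reverse inclusion.)

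For the ($\Leftarrow$) direction I would argue by contraposition. Assume $\textsc{DTP}$ fails, so there exist a $\tin$-update $U$ and a tuple $\vec{o}$ with $\vec{o}\in Q(D\cup U,\tout)\setminus Q(D,\tout)$. Define a substitution $h$ that is the identity on $\criticaldomain{I}$ and maps every other object to $o_I$; extend $h$ as the identity on time terms and the obvious way on atoms and sets of atoms. Since all objects of $\Pi_Q\cup D$ lie in $\criticaldomain{I}$, we have $h(\Pi_Q)=\Pi_Q$ and $h(D)=D$; moreover $h(U)$ is a $\tin$-update whose objects are in $\criticaldomain{I}$. By the standard homomorphism preservation property of positive Datalog (which is unaffected by the successor/addition function, as $h$ acts trivially on time terms), $h(\vec{o})\in Q(D\cup h(U),\tout)$, and $h(\vec{o})$ is a tuple over $\criticaldomain{I}$. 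Applying the restricted hypothesis then yields $h(\vec{o})\in Q(D,\tout)$, which I would contradict as follows: if $\vec{o}$ was already over $\criticaldomain{I}$, then $h(\vec{o})=\vec{o}\notin Q(D,\tout)$ by assumption; otherwise $h(\vec{o})$ contains the fresh object $o_I$, which appears nowhere in $\Pi_Q\cup D$, so $h(\vec{o})$ cannot be an answer over $D$ by safety of the rules. Either way we obtain a contradiction.

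The only subtle point, and therefore the part that most deserves care, is the two-case analysis on whether $\vec{o}$ already lies over $\criticaldomain{I}$; one might be tempted to simply invoke homomorphism preservation and be done, but it is crucial that the ``collapsed'' tuple $h(\vec{o})$ genuinely witnesses a failure of the restricted condition. The safety of rules together with freshness of $o_I$ is what makes the second case go through, and I would make this explicit. The invocation of homomorphism preservation itself is routine for positive Datalog once one notes that $h$ commutes with the integer addition occurring in time terms because it fixes them pointwise.
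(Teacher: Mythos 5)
Your proof is correct and follows essentially the same route as the paper's: the forward direction is immediate, and the backward direction uses the same collapsing map $h$ (identity on $\criticaldomain{I}$, everything else to $o_I$) together with the same two-case analysis on whether the collapsed tuple contains $o_I$, justified by freshness of $o_I$ in $\Pi_Q\cup D$. The only cosmetic difference is that you phrase the backward direction as a contradiction where the paper directly exhibits the witness.
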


\myparagraph{The general case.} We next show that DTP is decidable and provide
tight complexity bounds. Our upper bounds are obtained by showing that, to
decide DTP, it suffices to consider a single \emph{critical update} and a
slight modification of the query, which we refer to as the \emph{critical
query}. Intuitively, the critical update is a dataset that contains all
possible facts at the next time point $\tin + 1$ involving EDB predicates from
$Q$ and objects in $\criticaldomain{I}$. In turn, the critical query extends
$Q$ with rules that propagate all facts in the critical update recursively into
the future.  The intention is that the answers to the critical query over $D$
extended with the critical update will capture the answers to $Q$ over $D$
extended with any arbitrary future update.
In the following definition, we use $\psi$ to denote the renaming mapping each
temporal EDB predicate to a fresh temporal IDB predicate of the same arity.

\begin{definition}\label{def:critical-query}
    Let $\edb{A}$ be a fresh unary temporal EDB predicate.  The \emph{critical
    update} $\criticalupdate{I}$ for $I$ is the $\tin$-update containing the
    fact $\edb{A}(\tin + 1)$, and all facts $P(\vec{o}, \tin + 1)$ for each
    temporal EDB predicate $P$ in $\Pi_Q$ and each tuple $\vec{o}$ over
    $\criticaldomain{I}$.

    Let $\idb{V}$ be a fresh unary temporal IDB predicate. 
    The \emph{critical query} $\criticalquery{I}$ for $I$ is the query where
    $P_{\criticalquery{I}} = P_Q$ and $\Pi_{\criticalquery{I}}$ is obtained from
    $\psi(\Pi_Q)$
    by 
    adding rule $ \edb{A}(t) \rightarrow \idb{V}(t)$, rule
    $ \idb{V}(t) \rightarrow V(t+1)$, and the following rules for each temporal
    EDB predicate $P$ occurring in $\Pi_Q$, where $P'=\psi(P)$:
    \begin{align*}
        P(\vec{x}, t) \rightarrow \alignhere P'(\vec{x}, t) \\
        \idb{V}(t+1) \wedge  P'(\vec{x}, t) \rightarrow \alignhere P'(\vec{x},
        t+1)
    \end{align*}
\end{definition}

The construction of the critical query and update ensures, on
the one hand, that $Q(D,\tout) = \criticalquery{I}(D,\tout)$ and,
on the other hand, that 
$Q(D\cup U,\tout) \subseteq \criticalquery{I}(D\cup\criticalupdate{I},\tout)$  for each
$\tin$-update $U$ involving only objects in $\criticaldomain{I}$. 
We can exploit these properties, together with Proposition \ref{prop:delay},  to
show that DTP can be decided  by checking whether, at $\tout$,
the answers to the critical query over $D$ remain the same if $D$ is extended
with the critical update.
\begin{restatable}{lemma}{dtpupper} 
    \label{lem:dtp-upper}
    $\textsc{DTP}$ holds for $I$ 
    iff   
    $\vec{o} \in \criticalquery{I}(D \cup \criticalupdate{I}, \tout)$ implies 
    $\vec{o} \in \criticalquery{I}(D, \tout)$ for every tuple $\vec{o}$ over 
    $\criticaldomain{I}$.
\end{restatable}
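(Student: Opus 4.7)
The plan is to combine Proposition~\ref{prop:delay} (which restricts attention to updates over $\criticaldomain{I}$) with three bridging facts that formalise the intuition given in the paragraph preceding the lemma:
(a) $\criticalquery{I}(D, \tout) = Q(D, \tout)$;
(b) $Q(D \cup U, \tout) \subseteq \criticalquery{I}(D \cup \criticalupdate{I}, \tout)$ for every $\tin$-update $U$ whose objects lie in $\criticaldomain{I}$;
(c) for every tuple $\vec{o}$ over $\criticaldomain{I}$ with $\vec{o} \in \criticalquery{I}(D \cup \criticalupdate{I}, \tout)$, there exists a $\tin$-update $U$ over $\criticaldomain{I}$ with $\vec{o} \in Q(D \cup U, \tout)$.

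For (a), I would observe that without the EDB fact $\edb{A}(\tin+1)$ from $\criticalupdate{I}$, the predicate $\idb{V}$ is never derived in $\criticalquery{I}(D, \cdot)$, so the propagation rule $\idb{V}(t+1)\wedge P'(\vec x,t)\to P'(\vec x,t+1)$ is inert and $P'$ is populated solely via $P(\vec x,t)\to P'(\vec x,t)$; thus $\psi(\Pi_Q)$ behaves identically to $\Pi_Q$ modulo the renaming $\psi$, and both queries produce the same $P_Q$-answers over $D$. For (b), I would show that $\edb{A}(\tin+1)$ triggers $\idb{V}(\tau)$ for every $\tau\ge \tin+1$, and for every $\tin$-update $U$ over $\criticaldomain{I}$ each fact $P(\vec a,\tau)\in U$ with $\tau>\tin$ forces $P'(\vec a,\tau)$ in $\criticalquery{I}(D\cup\criticalupdate{I},\cdot)$ either directly (via the EDB facts $P(\vec a,\tin+1)$ that $\criticalupdate{I}$ provides, propagated through $\idb{V}$) or via the rule $P(\vec x,t)\to P'(\vec x,t)$ applied to $D$-facts; hence any $Q$-derivation from $D\cup U$ can be replayed inside $\criticalquery{I}(D\cup\criticalupdate{I},\cdot)$ by replacing each $P$-atom at time $\tau>\tin$ by its $P'$-counterpart.

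For (c), I would exploit compactness of positive Datalog: a proof of $P_Q(\vec o,\tout)$ in $\criticalquery{I}(D\cup\criticalupdate{I},\cdot)$ is a finite tree mentioning only finitely many atoms $P'(\vec a,\tau)$ with $\tau>\tin$. Letting $U$ consist of the corresponding facts $P(\vec a,\tau)$ yields a finite $\tin$-update over $\criticaldomain{I}$ whose presence lets me translate the proof, step by step, into a derivation of $P_Q(\vec o,\tout)$ in $Q$ over $D\cup U$: the $\psi$-renamed rules are undone, $\idb{V}$- and propagation-steps are short-circuited by the freshly added facts in $U$, and every $P'(\vec a,\tau')$ with $\tau'\le\tin$ used in the original proof is already backed by a $P(\vec a,\tau')$ fact in $D$.

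With (a)--(c) in hand, both directions follow directly. For ``$\Rightarrow$'', if $\textsc{DTP}$ holds and $\vec{o}\in\criticalquery{I}(D\cup\criticalupdate{I},\tout)$, then (c) produces a $\tin$-update $U$ over $\criticaldomain{I}$ with $\vec{o}\in Q(D\cup U,\tout)$; Proposition~\ref{prop:delay} and the $\textsc{DTP}$ hypothesis then give $\vec{o}\in Q(D,\tout)$, which equals $\criticalquery{I}(D,\tout)$ by (a). For ``$\Leftarrow$'', Proposition~\ref{prop:delay} reduces $\textsc{DTP}$ to updates over $\criticaldomain{I}$; any $\vec{o}\in Q(D\cup U,\tout)$ lies in $\criticalquery{I}(D\cup\criticalupdate{I},\tout)$ by (b), hence in $\criticalquery{I}(D,\tout)=Q(D,\tout)$ by the hypothesis and (a). The main obstacle I anticipate is (c): it is where the real content of the reduction lives, since the critical query's rules generate infinitely many $P'$-atoms, and one must argue that only finitely many of them ever matter and that they can be mimicked by an honest, finite, temporal EDB update over $\criticaldomain{I}$; the translation also has to track carefully the two possible origins of each $P'$-atom in the derivation.
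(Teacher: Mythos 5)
Your proposal is correct and follows essentially the same route as the paper: both directions reduce to Proposition~\ref{prop:delay} via the two bridging properties $Q(D,\tout)=\criticalquery{I}(D,\tout)$ and $Q(D\cup U,\tout)\subseteq\criticalquery{I}(D\cup\criticalupdate{I},\tout)$, and your step (c) is exactly the paper's argument, which takes a finite derivation of $P_Q(\vec{o},\tout)$ from $\Pi_{\criticalquery{I}}\cup D\cup\criticalupdate{I}$, strips the auxiliary $\idb{V}$/propagation rules, undoes the renaming $\psi$, and reads off the witnessing $\tin$-update from the leaves with time argument greater than $\tin$. You merely make explicit the verification of (a) and (b), which the paper asserts by construction.
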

It follows from Lemma \ref{lem:dtp-upper} that, to decide DTP, we need to
perform two temporal query evaluation tests for each candidate tuple $\vec{o}$.
Since temporal query evaluation is feasible in $\pspace$ in data
complexity, then so is DTP because 
the number of candidate tuples $\vec{o}$ 
is polynomial if $Q$ is fixed.

Furthermore, query evaluation is reducible to $\textsc{DTP}$, and hence the
aforementioned $\pspace$ upper bound in data complexity is tight.

\begin{restatable}{theorem}{dtptheorem}
    \label{theorem:delayunrestricted} %
    $\textsc{DTP}$ is \pspacecomplete{} in data complexity.
\end{restatable}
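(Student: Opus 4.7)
The plan is to combine Lemma~\ref{lem:dtp-upper} for the upper bound with a direct reduction from temporal query evaluation for the $\pspace$-hardness lower bound. Neither direction is particularly deep once Lemma~\ref{lem:dtp-upper} is in hand; the main step to argue carefully is that the lower-bound reduction really fixes a single query $Q'$ while letting only the dataset, $\tin$, and $\tout$ vary with the input, so that $\pspace$-hardness holds in data complexity.

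For the upper bound I would use Lemma~\ref{lem:dtp-upper} as a decision procedure. Since $Q$ is fixed in data complexity, the critical query $\criticalquery{I}$ is of constant size, and the critical update $\criticalupdate{I}$---together with the set of candidate tuples $\vec{o}$ over $\criticaldomain{I}$---is of polynomial size in $|D|$, because all arities in $Q$ are fixed. For each such $\vec{o}$ the condition in Lemma~\ref{lem:dtp-upper} can be checked by two temporal query evaluations, each in $\pspace$ in data complexity by~\cite{chomicki1988temporal}, placing $\textsc{DTP}$ in $\pspace$.

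For the matching lower bound I would reduce temporal fact entailment---$\pspace$-hard in data complexity by the same result---to $\textsc{DTP}$ via a fixed gadget attached to any fixed $\pspace$-hard temporal query $Q$ with output $P_Q$. The gadget introduces three fresh temporal predicates---EDB predicates $T$ (of the same arity as $P_Q$) and $A$ (with only a time position), and an IDB output predicate $R$ (with only a time position)---together with the two rules
\[
    P_Q(\vec{x}, t) \wedge T(\vec{x}, t) \rightarrow R(t)
    \quad\text{and}\quad
    A(t+1) \rightarrow R(t).
\]
Given an entailment instance $(D, \vec{a}, \tau)$ for $Q$, the reduction produces the $\textsc{DTP}$ instance $\langle Q', D \cup \{T(\vec{a}, \tau)\}, \tau, \tau \rangle$, where $Q'$ is the fixed extended query. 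The correctness check is short: if $\vec{a} \in Q(D, \tau)$ then $R(\tau)$ is derivable already from the history and, since $R$ carries no object argument, the answer set of $Q'$ at $\tau$ equals $\{()\}$ and cannot grow under any update, so $\textsc{DTP}$ holds; if $\vec{a} \notin Q(D, \tau)$ then the freshness of $T$ and $A$ prevents $R(\tau)$ from being derived from the history, but the $\tau$-update $\{A(\tau+1)\}$ derives it through the backward-step rule, so $\textsc{DTP}$ fails. The reduction is clearly computable in logspace, completing the argument.
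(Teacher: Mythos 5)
Your proposal matches the paper's proof: the upper bound is obtained exactly as in the paper from Lemma~\ref{lem:dtp-upper} via polynomially many \pspace{} query-evaluation checks over the polynomially many candidate tuples, and the lower bound is the same reduction-from-query-evaluation idea as the paper's Lemma~\ref{lem:dtp-lower} --- a fixed two-rule gadget attached to $Q$ with a trigger fact at $\tau+1$ placed in the update. Your gadget projects to a time-only output predicate $R$ (so the answer set at $\tau$ is at most a singleton and monotonicity settles the positive case) rather than keeping the arity of $P_Q$ and guarding both rules with $A(\vec{x},t)$ as the paper does, but this is a cosmetic variation and your correctness argument goes through.
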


\myparagraph{Nonrecursive queries}
We next show that $\textsc{DTP}$ becomes tractable in data complexity 
for nonrecursive queries. In the remainder of this section, we fix an
arbitrary instance $I = \langle Q, D, \tin, \tout \rangle$ of $\textsc{DTP}$, where 
$Q$ is nonrecursive.
We assume w.l.o.g.\ that $Q$ does not contain rigid atoms: each such atom 
$P(\vec{c})$ can be replaced with a temporal atom of the
form, e.g., $P'(\vec{c}, 0)$. We make the additional technical
assumption that each rule in $Q$ is restricted as follows.
\begin{definition}
    A rule is \emph{\connected{}} if it contains at most one temporal variable,
    which occurs in the head whenever it occurs in the body. A query is
    \connected{} if so are its rules.
\end{definition}
Restricting our arguments to \connected{} queries allows us to considerably simplify
definitions and proofs.

We start with the observation that the critical query of $I$ 
always includes recursive  rules that propagate information 
arbitrarily far
into the future (see Definition \ref{def:critical-query}).  As a result,
our general algorithm for DTP does not immediately provide an improved upper bound
in the nonrecursive case. 

The need for such recursive rules, however, is motivated by the fact that
the answers to a recursive query $Q$ at time $\tout$ may depend on facts at time
points $\tau$ arbitrarily far from $\tout$; in other
words, there is no bound $b$ for $I$ such that $|\tau-\tout| \leq b$
in every derivation of query answers at $\tout$
involving input facts at $\tau$. If $Q$ is nonrecursive, however, such a bound 
is guaranteed to exist and can be established based on the following notion of
program \emph{radius}.
Intuitively,
query answers at $\tout \leq \tin$ can only be influenced by future facts 
whose timestamp is located within the interval  $[\tin, \tout +  \radius(\Pi_Q)]$.

\begin{definition} \label{def:radius} %
    Let $r$ be a \connected{} rule mentioning a time variable. The \emph{radius}
    $\radius(r)$ of $r$ is the maximum of the values
    $\vert \offset(s) - \offset(s') \vert$ for $s$ the time argument in the head
    of $r$ and $s'$ the time argument in a body atom of $r$.  The \emph{radius}
    $\radius(\Pi)$ of a \connected{} program $\Pi$ is given by the number of
    rules in $\Pi$ multiplied by the maximum radius of a rule in $\Pi$.
\end{definition}

Thus, to show tractability of DTP, we 
identify a polynomially
bounded number of \emph{critical time points} using the radius of 
$\Pi_Q$
and argue that we can dispense with
the aforementioned recursive rules by
constructing a critical update  for
$I$ that includes all facts over these time points.
Since $\Pi_Q$ may contain explicit time points in rules, these also need to be 
taken into account when defining the relevant critical time points and the
corresponding critical update.

\begin{definition} \label{def:nr-critical-update} %
    Let $\tau_0$ be the maximum value between $\tout$ and the largest time point
    occurring in $\Pi_Q$.

    A time point $\tau$ is \emph{critical} $I$ if 
    $\tin < \tau \leq \tau_0 + \radius(\Pi_Q)$.  The \emph{bounded critical
    update} $\nrcriticalupdate{I}$ of $I$ consists of each fact
    $P(\vec{o}, \tau)$ with $P$ a temporal EDB predicate in $\Pi_Q$, $\vec{o}$ a
    tuple over $\criticaldomain{I}$, and $\tau$ a critical time point.
\end{definition}

The following lemma justifies the key property of the critical update
$\nrcriticalupdate{I}$, namely that the answers to $Q$ over
$D \cup \nrcriticalupdate{I}$ capture those over $D$ extended with any future
update.

\begin{restatable}{lemma}{nrdtpcriticalupdate}
    \label{lem:nr-basic}
    Let $a$ be the maximum radius of a rule in $\Pi_Q$ and let $\mathbf{T}$
    consist of $\tout$ and the time points in $\Pi_Q$. 
    If $\Pi_Q \cup D \cup U \models P(\vec{o}, \tau)$ for a $\tin$-update $U$
    involving only objects in $\criticaldomain{I}$ and a predicate $P$ in
    $\Pi_Q$, and
    $|\tau - \tau'|\le a \cdot (\rank(\Pi_Q)-\rank(P))$ for some
    $\tau'\in\mathbf{T}$, then
    $\Pi_Q \cup D \cup \nrcriticalupdate{I} \models P(\vec{o}, \tau)$.
\end{restatable}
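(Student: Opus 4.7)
The plan is to prove Lemma \ref{lem:nr-basic} by induction on $\rank(P)$, using a fixed proof-tree witnessing $\Pi_Q \cup D \cup U \models P(\vec{o}, \tau)$ to carry the argument. The intuition is that in a \connected{} nonrecursive query, each rule application drops the rank by at least one and shifts time arguments by at most $a$, so the proof tree has depth at most $\rank(P)$ and spans a time window of width at most $a\cdot\rank(P)$ around $\tau$. Combined with the hypothesis $|\tau - \tau'| \le a(\rank(\Pi_Q)-\rank(P))$, this shows that every EDB fact from $U$ appearing in the derivation has time argument within $a\cdot\rank(\Pi_Q)$ of some $\tau' \in \mathbf{T}$.

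For the base case, suppose $\rank(P)=0$, so $P$ does not appear in the head of any non-fact rule. Then $P(\vec{o},\tau) \in D \cup U$. If it is in $D$ we are done. If it is in $U$, then $\tau > \tin$ and the hypothesis gives $|\tau - \tau'| \le a\cdot\rank(\Pi_Q)$. Since the rank of $\Pi_Q$ is bounded by the length of the longest chain in the predicate-dependency graph and hence by $|\Pi_Q|$, we get $a\cdot\rank(\Pi_Q) \le \radius(\Pi_Q)$. Combined with $\tau' \le \tau_0$ this yields $\tin < \tau \le \tau_0 + \radius(\Pi_Q)$, so $\tau$ is a critical time point. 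Because $U$ involves only objects in $\criticaldomain{I}$, we have $\vec{o}$ over $\criticaldomain{I}$, hence $P(\vec{o},\tau)\in\nrcriticalupdate{I}$.

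For the inductive step with $\rank(P)\ge 1$, pick the rule $r \in \Pi_Q$ and grounding substitution $g$ used at the root of the proof tree, so that $g(\head(r))=P(\vec{o},\tau)$ and $\Pi_Q\cup D\cup U\models g(\alpha)$ for every body atom $\alpha$ of $r$. Write $g(\alpha)=Q'(\vec{o}',\tau'')$. By \connected{}ness and the definition of radius, $|\tau''-\tau|\le\radius(r)\le a$; by $\Pi_Q$-dependency, $\rank(Q')\le\rank(P)-1$. Therefore
\begin{equation*}
|\tau''-\tau'| \le |\tau''-\tau| + |\tau-\tau'| \le a + a(\rank(\Pi_Q)-\rank(P)) \le a(\rank(\Pi_Q)-\rank(Q')),
\end{equation*}
so the hypothesis of the lemma holds for the body atom $g(\alpha)$. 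Since all objects in $\Pi_Q\cup D\cup U$ lie in $\criticaldomain{I}$, the tuple $\vec{o}'$ does too, and the inductive hypothesis yields $\Pi_Q\cup D\cup\nrcriticalupdate{I}\models g(\alpha)$. Applying $r$ with $g$ then derives $P(\vec{o},\tau)$ from $\Pi_Q\cup D\cup\nrcriticalupdate{I}$, as required.

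The main delicate point is the arithmetic at the EDB leaves: one must verify that the combined error accumulated along the proof tree, together with the slack provided in the hypothesis, is absorbed by $\radius(\Pi_Q)$. The crucial inequality $a\cdot\rank(\Pi_Q)\le\radius(\Pi_Q)$—which relies on the bound $\rank(\Pi_Q)\le|\Pi_Q|$—is what guarantees that the bounded update $\nrcriticalupdate{I}$ is wide enough. A secondary concern is handling body atoms whose predicates are not EDB; the inductive hypothesis applied at rank $\rank(Q')<\rank(P)$ takes care of these uniformly, which is why formulating the statement with the parameter $\rank(\Pi_Q)-\rank(P)$ (rather than as a special case of EDB facts only) makes the induction go through cleanly.
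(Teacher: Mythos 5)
Your proof follows the same overall strategy as the paper's (induction on $\rank(P)$ over a derivation tree, with the same base-case arithmetic and the same rank/offset bookkeeping), but the inductive step has a genuine gap. You claim that for every body atom $g(\alpha)=Q'(\vec{o}',\tau'')$ of the root rule instance, ``by \connected{}ness and the definition of radius, $|\tau''-\tau|\le\radius(r)\le a$''. This holds only when the time argument of $\alpha$ in the rule $r$ has the form $t+k$ for the rule's (unique) time variable $t$. But the setting here explicitly allows rules of $\Pi_Q$ to contain explicit time points---this is precisely why $\mathbf{T}$ includes ``the time points in $\Pi_Q$'' and why $\tau_0$ in Definition~\ref{def:nr-critical-update} is taken as a maximum over the time points occurring in $\Pi_Q$. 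For a body atom whose time argument is a constant $c$, the head is grounded to $\tau=g(t)+k_0$ (with $t+k_0$ the head's time argument) while the body atom is grounded to $\tau''=c$, and $|c-\tau|$ is unbounded as $g(t)$ varies; the rule radius, being $\max\vert\offset(s)-\offset(s')\vert$ with $\offset(c)=0$, says nothing about this gap. Consequently your chain $|\tau''-\tau'|\le|\tau''-\tau|+|\tau-\tau'|\le a + a(\rank(\Pi_Q)-\rank(P))$ breaks down for such atoms, and the inductive hypothesis cannot be invoked for them as you state it.

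The paper closes exactly this case with a separate subcase: if the body atom mentions no time variable, then $\tau''$ is itself a time point occurring in $\Pi_Q$, hence $\tau''\in\mathbf{T}$, and the inductive hypothesis applies with $\tau'\assign\tau''$ since $|\tau''-\tau''|=0\le a\cdot(\rank(\Pi_Q)-\rank(Q'))$. Adding this subcase repairs your argument; the remainder---the base case split on $\tau\le\tin$ versus $\tau$ critical, the observation $a\cdot\rank(\Pi_Q)\le\radius(\Pi_Q)$, and the rank-decrement step for body atoms carrying the time variable---matches the paper's proof.
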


We are now ready to establish the analogue to Lemma \ref{lem:dtp-upper}
in the nonrecursive case.

\begin{restatable}{lemma}{nrdtpmain}
    \label{lem:nonrec-correct}
    $\textsc{DTP}$ holds for $I$ iff
    $\vec{o} \in Q(D \cup \nrcriticalupdate{I}, \tout)$ implies
    $\vec{o} \in Q(D, \tout)$ for every tuple $\vec{o}$ over $\criticaldomain{I}$.
\end{restatable}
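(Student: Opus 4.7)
The plan is to prove the two directions separately, using Proposition \ref{prop:delay} and Lemma \ref{lem:nr-basic} as the main tools. The structure closely mirrors the proof of Lemma \ref{lem:dtp-upper} in the general case, but with $\nrcriticalupdate{I}$ in place of $\criticalupdate{I}$, and with recursion-free propagation justified by the radius-based bounds of Lemma \ref{lem:nr-basic} rather than by the added recursive rules of the critical query.

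For the easy ($\Rightarrow$) direction, I would simply observe that by Definition \ref{def:nr-critical-update} the set $\nrcriticalupdate{I}$ is a $\tin$-update involving only objects in $\criticaldomain{I}$ (every fact has time argument $\tau > \tin$, and the constants are drawn from $\criticaldomain{I}$). Hence if $\textsc{DTP}$ holds for $I$, then by Definition \ref{definition:delay} we have $Q(D,\tout) = Q(D\cup\nrcriticalupdate{I},\tout)$, which immediately gives the claimed implication.

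For the nontrivial ($\Leftarrow$) direction, I would apply Proposition \ref{prop:delay}, so it suffices to show that $\vec{o}\in Q(D\cup U,\tout)$ implies $\vec{o}\in Q(D,\tout)$ for every tuple $\vec{o}$ over $\criticaldomain{I}$ and every $\tin$-update $U$ involving only objects in $\criticaldomain{I}$. Fix such $\vec{o}$ and $U$ and assume $\vec{o}\in Q(D\cup U,\tout)$, i.e., $\Pi_Q\cup D\cup U\models P_Q(\vec{o},\tout)$. I now invoke Lemma \ref{lem:nr-basic} with $P := P_Q$, $\tau := \tout$, and the witness $\tau' := \tout\in\mathbf{T}$. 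The distance hypothesis $|\tau-\tau'|=0\le a\cdot(\rank(\Pi_Q)-\rank(P_Q))$ holds trivially since the right-hand side is a nonnegative integer. The lemma then yields $\Pi_Q\cup D\cup\nrcriticalupdate{I}\models P_Q(\vec{o},\tout)$, i.e., $\vec{o}\in Q(D\cup\nrcriticalupdate{I},\tout)$. The assumed implication from the statement of the lemma now gives $\vec{o}\in Q(D,\tout)$, as required.

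Essentially all of the mathematical content resides in Lemma \ref{lem:nr-basic}, whose proof (not needed here) must exploit nonrecursiveness to bound the temporal reach of any derivation by $a\cdot\rank(\Pi_Q)$ and then argue that derivations of $P_Q(\vec{o},\tout)$ only use update facts at time points that are already included in $\nrcriticalupdate{I}$. The \connected{}ness assumption is what makes the radius notion from Definition \ref{def:radius} a meaningful bound on how far a single rule application can shift time arguments between body and head. Given that lemma, the argument above is essentially bookkeeping: the main subtlety is selecting the correct witness $\tau'\in\mathbf{T}$ for the time point at which we want to re-derive $P_Q(\vec{o},\tout)$, and here the natural choice $\tau'=\tout$ trivially meets the distance condition for the head predicate $P_Q$.
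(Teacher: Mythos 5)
Your proof is correct and follows essentially the same route as the paper's: the forward direction observes that $\nrcriticalupdate{I}$ is itself a $\tin$-update over $\criticaldomain{I}$, and the converse reduces to Proposition~\ref{prop:delay} and then applies Lemma~\ref{lem:nr-basic} with $\tau=\tau'=\tout$ and $P=P_Q$ exactly as the paper does. No gaps.
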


Tractability of $\textsc{DTP}$ then follows from
Lemma \ref{lem:nonrec-correct} and the tractability of query evaluation for
nonrecursive programs. 

\begin{restatable}{theorem}{nrdtptheorem}
    \textsc{DTP} is in \ptime{} in data complexity if restricted to nonrecursive
    \connected{} queries.
\end{restatable}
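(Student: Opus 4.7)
My plan is to derive the $\ptime$ bound directly from Lemma~\ref{lem:nonrec-correct} by showing that each ingredient of the characterisation it provides can be computed or checked in polynomial time in data complexity. Since $Q$ is fixed, the program $\Pi_Q$, its rank, its radius, its set of predicates and arities, and the largest time point occurring in $\Pi_Q$ are all constants; only $D$, $\tin$, and $\tout$ vary with the input.

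The first step is to build the bounded critical update $\nrcriticalupdate{I}$ in polynomial time. The critical domain $\criticaldomain{I}$ has size linear in $|D|$ (plus the constant number of objects in $\Pi_Q$ and the fresh object $o_I$). The set of critical time points is $\{\tau \mid \tin < \tau \leq \tau_0 + \radius(\Pi_Q)\}$ where $\tau_0 = \max(\tout,T)$ for $T$ the largest time point in $\Pi_Q$. Because $\tout \leq \tin$, this set has cardinality bounded by $\radius(\Pi_Q) + \max(0, T - \tin)$, which is a constant in data complexity. Combined with the facts that every temporal EDB predicate in $\Pi_Q$ has constant arity and there are constantly many such predicates, $\nrcriticalupdate{I}$ contains polynomially many facts and can be enumerated in polynomial time.

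The second step is to evaluate $Q$ over both $D$ and $D \cup \nrcriticalupdate{I}$. Since $Q$ is nonrecursive, query evaluation is in $\aczero$ (hence in $\ptime$) in data complexity, as noted in the Preliminaries. To decide $\textsc{DTP}$ via Lemma~\ref{lem:nonrec-correct}, I iterate over every tuple $\vec{o}$ of constants from $\criticaldomain{I}$ of the (constant) arity of $P_Q$, yielding polynomially many tuples, and for each one check whether $\vec{o} \in Q(D \cup \nrcriticalupdate{I}, \tout)$ implies $\vec{o} \in Q(D, \tout)$. The overall procedure is polynomial, and by Lemma~\ref{lem:nonrec-correct} it correctly decides $\textsc{DTP}$.

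The only subtlety—and the step that deserves the most care in the write-up—is verifying that the number of critical time points is indeed polynomial in data complexity. This is immediate once one observes that, with $Q$ fixed, both $\radius(\Pi_Q)$ and $T$ are constants, so the length of the interval $(\tin, \tau_0 + \radius(\Pi_Q)]$ is bounded by a constant independent of $D$. Everything else is a routine combination of polynomial enumeration with the tractability of nonrecursive query evaluation.
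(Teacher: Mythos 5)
Your proof is correct and follows essentially the same route as the paper: apply Lemma~\ref{lem:nonrec-correct}, construct $\nrcriticalupdate{I}$ in polynomial time, and test the polynomially many candidate tuples over $\criticaldomain{I}$ using the $\aczero$ (hence polynomial-time) evaluation of the fixed nonrecursive query. The only quibble is your claim that the number of critical time points is \emph{constant}: since $\tau_0=\max(\tout,T)$ and $\tin$ is part of the unary-coded input, the paper more cautiously bounds $\tau_0-\tin$ as linear in the input rather than constant, but this does not affect the polynomial bound.
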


\subsection{Forgetting}\label{sec:forgetting}
\label{subsec:hardness-forgetting}

We now move on to the forgetting problem as 
given in Definition \ref{definition:forgetting}. Unfortunately, in contrast to $\textsc{DTP}$, forgetting 
is undecidable. 
This follows by a reduction from containment of nontemporal Datalog queries---a
well-known undecidable problem \cite{shmueli1993equivalence}.

\begin{restatable}{theorem}{forgettingtheorem}
    \label{theorem:forgetting}
    $\textsc{Forget}$ is undecidable.
\end{restatable}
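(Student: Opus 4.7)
The plan is a polynomial-time reduction from the complement of containment of (nontemporal) Datalog queries to $\textsc{Forget}$. Given two Datalog queries $Q_1 = \langle P, \Pi_1\rangle$ and $Q_2 = \langle P, \Pi_2\rangle$ sharing the output predicate $P$, I will construct a temporal query $Q$ together with a fixed instance $\langle Q, D, \tin, \tout, \tmem\rangle$ of $\textsc{Forget}$ such that $\textsc{Forget}$ holds for this instance if and only if $Q_1 \sqsubseteq Q_2$.

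The idea is to encode an arbitrary Datalog dataset $D_0$ as the slice at a single time point of the quantified update $U$, and to use the facts forgotten at $\tmem$ as a switch that turns $\Pi_1$ on or off. Concretely, for every predicate $R$ of $\Pi_1 \cup \Pi_2$ introduce a temporal counterpart $\hat R$ (EDB iff $R$ is), and let $B$ be a fresh unary temporal EDB predicate and $A$ a fresh unary temporal IDB predicate. The program $\Pi_Q$ consists of: (i) for every rule $\phi \to \psi$ of $\Pi_2$, the rule $\hat\phi(t) \to \hat\psi(t)$ obtained by uniformly appending a fresh time variable $t$ to every atom; (ii) for every rule $\phi \to \psi$ of $\Pi_1$, the \emph{gated} rule $A(t) \wedge \hat\phi(t) \to \hat\psi(t)$ (with $A(t) \to \hat\psi(\vec c,t)$ in the degenerate case of a ground head); and (iii) the two activation rules $B(t) \to A(t)$ and $A(t) \to A(t+1)$. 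The output predicate of $Q$ is $\hat P$. Finally, set $D = \{B(0)\}$ and $\tin = \tout = \tmem = 0$, so that $\segment{D}{\tmem} = \emptyset$.

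Then I would verify the reduction as follows. With $B(0) \in D$, the activation rules derive $A(\tau)$ for all $\tau \ge 0$, so the $\Pi_1$-rules and $\Pi_2$-rules both fire uniformly on every time slice; without any occurrence of $B$, the predicate $A$ is never derived and only the $\Pi_2$-rules fire. Quantifying over all $0$-updates $U$ and times $\tau \ge 0$, the facts $\hat P(\vec o,\tau) \in Q(D\cup U,\tau)$ coincide with $Q_1(U|_\tau) \cup Q_2(U|_\tau)$ (where $U|_\tau$ is the Datalog dataset obtained by stripping the time $\tau$ from the EDB facts of $U$ at time $\tau$), while $\hat P(\vec o,\tau) \in Q(\segment{D}{\tmem}\cup U,\tau)$ coincides with $Q_2(U|_\tau)$, at least for those $\tau$ to which $B$ has not been propagated from within $U$ itself; when $U$ does contain a $B$-fact at or before $\tau$, the two sets agree trivially. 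Thus $\textsc{Forget}$ holds iff $Q_1(D_0) \subseteq Q_2(D_0)$ for every Datalog dataset $D_0$, i.e.\ iff $Q_1 \sqsubseteq Q_2$. Since Datalog containment is undecidable \cite{shmueli1993equivalence}, so is $\textsc{Forget}$.

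The main technical obstacle is the possibility that the quantified update $U$ itself contains $B$-facts or $\hat R$-facts over fresh objects, which could in principle obscure the equivalence. The propagation rule for $A$ and the uniformity of the translation make the two computations agree whenever $U$ reactivates $\Pi_1$, which neutralises this source of interference; the remaining cases, where $U$ places only $\hat R$-facts at a single target time $\tau \ge 1$, suffice to realise every Datalog dataset $D_0$ as $U|_\tau$ and therefore to witness any failure of containment. Secondary points to check are safety of the translated rules (straightforward, since every head variable of $\Pi_1 \cup \Pi_2$ already appears in its body and the added $t$ appears in every translated atom) and that the reduction is polynomial.
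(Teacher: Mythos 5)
Your overall strategy matches the paper's: reduce from containment of nontemporal Datalog queries by temporalising both programs, letting the quantified update realise an arbitrary Datalog dataset on a time slice, and gating the $Q_1$-side behind an activation predicate that is triggered only by the single history fact that $\textsc{Forget}$ is permitted to discard. The differences in detail (you gate every $\Pi_1$-rule and propagate the activation forward with $A(t)\to A(t+1)$ so that all output times $\tau\ge\tout$ behave uniformly, whereas the paper gates only the top-level rule via an $A(t-1)$ conjunct and argues about a single target time) are immaterial to correctness.

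There is, however, one genuine gap: you never rename the IDB predicates of $\Pi_1$ and $\Pi_2$ apart before merging them into $\Pi_Q$. On a slice where the activation predicate $A$ holds, \emph{all} translated rules fire, so what your program computes there is the output of $\langle P,\Pi_1\cup\Pi_2\rangle$ on $U|_\tau$, not $Q_1(U|_\tau)\cup Q_2(U|_\tau)$: an IDB fact produced by a gated $\Pi_1$-rule can feed an ungated $\Pi_2$-rule and vice versa. Concretely, take $\Pi_1=\{S(x)\to P(x)\}$ with $S$ intensional and underivable in $\Pi_1$, so that $Q_1(D_0)=\emptyset$ for every dataset and $Q_1\sqsubseteq Q_2$ holds vacuously, and take $\Pi_2=\{E(x)\to S(x),\ F(x)\to P(x)\}$. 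On the update encoding $\{E(a)\}$ your merged program derives $\hat P(a,\tau)$ when $A$ is active (via $\hat E\to\hat S$ and then the gated $\hat S\to\hat P$) but not when it is inactive, so your $\textsc{Forget}$ instance is false even though $Q_1\sqsubseteq Q_2$. Your construction therefore decides $\langle P,\Pi_1\cup\Pi_2\rangle\sqsubseteq Q_2$ rather than $Q_1\sqsubseteq Q_2$, and these coincide in only one direction. The repair is exactly the step the paper takes in its reduction lemma: replace each IDB predicate $R$ of $\Pi_i$ by a fresh copy $R_i$, including the output predicate, and let $G_1$ and $G_2$ feed a fresh common output predicate through the gated and ungated top rules respectively. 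With that renaming your argument goes through; a further very minor point is that translating a ground fact of $\Pi_2$ into $\hat\psi(\vec c,t)$ yields an unsafe rule with a free time variable, so facts in the input programs should be excluded or handled separately.
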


In the remainder of this section we show that, by restricting ourselves to
nonrecursive input queries, we can regain not only decidability of forgetting,
but also tractability in data complexity.  Let
$I = \langle Q, D, \tin, \tout, \tmem \rangle$ be an arbitrary instance of
$\textsc{Forget}$ where $Q$ is nonrecursive.  We adopt the same technical
assumptions as in Section \ref{sec:definitive} for $\textsc{DTP}$ in the
nonrecursive case. 
Additionally, we assume that $Q$ does not contain explicit time points in 
rules---note that the rules in our running example satisfy this restriction. 
We believe that dropping this assumption does not
affect tractability, but we leave this question open for future work.

By Definition \ref{definition:forgetting}, to decide $\textsc{Forget}$ for $I$,
we must check whether the answers $Q(D \cup U, \tau)$ are included in
$Q(D[\tmem] \cup U, \tau)$ for every $\tin$-update $U$ and $\tau \geq \tout$.
Similarly to the case of $\textsc{DTP}$, we identify two time intervals of
polynomial size in data, and show that it suffices to consider only updates $U$
over the first interval and only time points $\tau$ over the second interval. In
contrast to $\textsc{DTP}$, however, we need to potentially consider all
possible such updates and cannot restrict ourselves to a single critical
one.

Note, however, that checking the aforementioned inclusion of query answers for
all relevant updates and time points would lead to an exponential blowup in
data complexity.  To overcome this, we define instead nonrecursive queries
$Q_1$ and $Q_2$ such that the desired condition holds if and only if
$Q_1 \sqsubseteq Q_2$, where $Q_1$ and $Q_2$ contain a (fixed) rule set derived
from $Q$ and a portion of the history $D$.  Then, we show that checking such
containment where only the data-dependent rules are considered part of the
input is feasible in polynomial time.

We start by identifying the set of relevant time points for query answers and 
updates.

\begin{definition}
    A time point $\tau$ is 
    \emph{output-relevant} for $I$ if $\tout \le \tau \leq \tmem + \radius(Q)$.
    In turn, 
    it is 
    \emph{update-relevant} for $I$ if it satisfies $\tin < \tau \leq \tmem + 2\cdot\radius(Q)$. 
\end{definition}

Intuitively, answers at time points bigger than $\tmem + \radius(Q)$ cannot be
affected by history facts that hold before $\tmem$; thus, we do not need to
consider answers at time points that are not output-relevant.  In turn,
output-relevant time points cannot depend on facts in an update after
$\tmem + 2\cdot\radius(Q)$; thus, it suffices to consider updates containing
only facts that hold at update-relevant time points.  We next construct the
aforementioned queries $Q_1$ and $Q_2$ using the identified update-relevant and
output-relevant time points. 

\begin{definition}\label{def:construction}
    Let $B$ be a fresh temporal IDB unary predicate and let $D_0$ consist of all
    facts $B(\tau)$ for each update-relevant time point $\tau$.  For $\psi$ as in
    Definition~\ref{def:critical-query}, let $\Pi$ be the smallest program
    containing: \emph{(i)} each rule in $\psi(\Pi_Q)$ having a predicate different
    from $P_Q$ in the head; \emph{(ii)} each rule obtained by grounding the time
    argument to an output-relevant time point in a rule in $\psi(\Pi_Q)$ having
    $P_Q$ as head predicate; and \emph{(iii)} the rule
    $P(\vec{x}, t) \land B(t) \to P'(\vec{x}, t)$ for each temporal EDB predicate
    $P$ occurring in $\Pi_Q$, where $P' = \psi(P)$.

    We now let $Q_1 = \langle P_Q, \Pi_1\rangle$ and
    $Q_2 = \langle P_Q, \Pi_2 \rangle$, where
    $\Pi_1 = \Pi \cup D_0 \cup \psi(D)$, and 
    $\Pi_2 = \Pi \cup D_0 \cup \psi(D[\tmem])$.
\end{definition}

Intuitively, the facts about $B$ are used to `tag' the update-relevant time points; 
rule $P(\vec{x}, t) \land B(t) \to P'(\vec{x}, t)$, when applied to the history $D$ and any update $U$, will `project'  $U$ to the relevant time points and filter out
all facts in $D$.  Finally, the rules in \emph{(i)} and \emph{(ii)}
allow us to derive the same consequences (modulo predicate renaming) as $\Pi_Q$, but only over the output-relevant time points.

We can now establish correctness of our approach.

\begin{restatable}{lemma}{nrforgettingcorrectness}
    \label{lem:forgetting-correctness}
    \textsc{Forget} holds for $I$ iff $Q_1 \sqsubseteq Q_2$, where $Q_1$ and
    $Q_2$ are as given in Definition \ref{def:construction}.
\end{restatable}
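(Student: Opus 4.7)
The plan is to establish both directions of the iff by first giving a clean reformulation of what $Q_1$ and $Q_2$ compute on an arbitrary EDB dataset $E$. Writing $E_\mathrm{ur}$ for the subset of temporal facts in $E$ whose time argument is update-relevant, the facts in $D_0$ together with rule (iii) have the sole effect of promoting exactly $E_\mathrm{ur}$ to the renamed IDB predicates, while $\psi(D)$ (respectively $\psi(\segment{D}{\tmem})$) is hard-coded in $\Pi_1$ (respectively $\Pi_2$) as the renamed image of the history. The rules in group (i) mirror the non-$P_Q$ rules of $\Pi_Q$ under the renaming $\psi$, and the rules in group (ii) derive $P_Q(\vec{a}, \tau)$ only when $\tau$ is output-relevant. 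Combining these observations yields the characterization $Q_1(E) = \bigcup_\tau Q(D \cup E_\mathrm{ur}, \tau)$ and $Q_2(E) = \bigcup_\tau Q(\segment{D}{\tmem} \cup E_\mathrm{ur}, \tau)$, where $\tau$ ranges over output-relevant time points; note that $E_\mathrm{ur}$ is always a $\tin$-update since update-relevant time points strictly exceed $\tin$.

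The forward direction is then immediate. If \textsc{Forget} holds for $I$, fix any $E$ and set $U = E_\mathrm{ur}$; every output-relevant $\tau$ satisfies $\tau \ge \tout$, so $Q(D \cup U, \tau) = Q(\segment{D}{\tmem} \cup U, \tau)$ by \textsc{Forget}, and the characterizations above give $Q_1(E) \subseteq Q_2(E)$.

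For the converse, assume $Q_1 \sqsubseteq Q_2$ and fix an arbitrary $\tin$-update $U$ and time point $\tau \ge \tout$. The inclusion $Q(\segment{D}{\tmem} \cup U, \tau) \subseteq Q(D \cup U, \tau)$ is free by monotonicity of Datalog, so the task reduces to the reverse inclusion, which I handle by splitting on $\tau$. If $\tau > \tmem + \radius(\Pi_Q)$, a locality argument in the spirit of Lemma \ref{lem:nr-basic}, using nonrecursion, \connected{}ness, and absence of explicit time points in $\Pi_Q$, shows that any derivation of $P_Q(\vec{a}, \tau)$ can only use temporal facts whose time argument lies in $[\tau - \radius(\Pi_Q), \tau + \radius(\Pi_Q)]$; since $\tau - \radius(\Pi_Q) > \tmem$, the temporal facts of $D \setminus \segment{D}{\tmem}$ play no role, so the two sides coincide. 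If instead $\tout \le \tau \le \tmem + \radius(\Pi_Q)$, the same locality argument shows that facts of $U$ outside the update-relevant range play no role, giving $Q(D \cup U, \tau) = Q(D \cup U_\mathrm{ur}, \tau)$ where $U_\mathrm{ur}$ collects the update-relevant facts of $U$; applying the hypothesis $Q_1 \sqsubseteq Q_2$ to $U_\mathrm{ur}$ via the characterization yields $Q(D \cup U_\mathrm{ur}, \tau) \subseteq Q(\segment{D}{\tmem} \cup U_\mathrm{ur}, \tau) \subseteq Q(\segment{D}{\tmem} \cup U, \tau)$, the last inclusion again by monotonicity.

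The main obstacle is the locality lemma invoked in the two cases of the backward direction. Lemma \ref{lem:nr-basic} establishes such a property for the DTP setting relative to its specific critical update; here I need a symmetric statement applicable to arbitrary updates and to both the $D$-side and the $\segment{D}{\tmem}$-side bounds, but the underlying counting argument---each rule application shifts the time argument by at most the rule radius, and a nonrecursive derivation has depth at most the rank of $\Pi_Q$---carries over essentially unchanged, so I expect adaptation rather than new ideas.
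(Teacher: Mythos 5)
Your proposal is correct and follows essentially the same route as the paper's proof: the same correspondence between $Q_1,Q_2$ and $Q$ over $D\cup U$ versus $\segment{D}{\tmem}\cup U$, the same reduction to output-relevant output times and update-relevant update times, and the same two locality lemmas (which the paper proves separately by induction on rank, exactly the adaptation of Lemma~\ref{lem:nr-basic} you anticipate). The only differences are presentational: you state the characterization of $Q_1(E)$ and $Q_2(E)$ explicitly up front and prove the converse directly rather than by contraposition.
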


\begin{restatable}{theorem}{nrforgettingtheorem}
    \label{theorem:forgettingnonrecursive}
    $\textsc{Forget}$ is in \ptime{} in data complexity if restricted to
    nonrecursive \connected{} queries whose rules contain no time points.
\end{restatable}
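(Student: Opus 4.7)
The plan is to combine Lemma~\ref{lem:forgetting-correctness} with a polynomial-time (in data) algorithm for the induced containment $Q_1\sqsubseteq Q_2$. First I would verify that $Q_1$ and $Q_2$ are nonrecursive: the renaming $\psi$ introduces a fresh IDB predicate $P'$ for each temporal EDB predicate $P$, the only rule with $P'$ in the head is rule \emph{(iii)}, whose body uses only the EDB $P$ and the fresh unary $B$, and the grounding step \emph{(ii)} merely specialises time arguments of existing rules, so no new cycle is created. Under unary coding of time, the output-relevant and update-relevant intervals are polynomial in the size of $I$; hence $\Pi$, $D_0$, $\psi(D)$ and $\psi(D[\tmem])$ are all of polynomial size.

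The crux is to show that $Q_1\sqsubseteq Q_2$ can be decided in polynomial time in data. Since $Q$ is fixed, $\rank(\Pi_Q)$, the maximum body size of the rules in $\Pi_Q$, and the number of nonground rule \emph{templates} in $\Pi$ are all constants; only the time-point instantiations produced by step \emph{(ii)} and the facts in $D_0,\psi(D),\psi(D[\tmem])$ scale with the input. Consequently, every derivation of a candidate answer $P_Q(\vec o,\tau)$ in $\Pi_1$ or $\Pi_2$ is a proof tree of depth at most $\rank(\Pi_Q)$ and constant branching, unfolding into a conjunctive query of constant size; the choice at each internal node ranges over polynomially many rule instances, so there are only polynomially many such trees. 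I would then enumerate candidate tuples $\vec o$ over the polynomial critical domain (by an analogue of Proposition~\ref{prop:delay}) and output-relevant time points $\tau$, compute the constant-size CQs arising from the proof trees in $\Pi_1$, and test each for containment in the union of CQs arising from $\Pi_2$ via the standard homomorphism characterisation. Each pairwise check takes constant time because the CQs have constant size, so the overall procedure is polynomial in data.

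The main obstacle I expect is the asymmetric treatment of atoms $P'(\vec o,\tau)$ at times $\tau\le\tmem$: in $\Pi_1$ they are available as axioms from $\psi(D)$, whereas in $\Pi_2$ the only route to producing such atoms is rule \emph{(iii)}, whose use demands $B(\tau)$, and $B(\tau)$ holds only at update-relevant time points---all strictly greater than $\tin\ge\tmem$. The delicate step is to show that every $\Pi_1$-proof that discharges such a $\psi(D)$-leaf either admits a ``rerouted'' $\Pi_2$-proof producing the same answer from the same input EDB (via different rule choices at ancestor nodes) or else witnesses non-containment through a concrete dataset $D^*\supseteq\psi(D[\tmem])$. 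Formalising this exchange while keeping the enumeration polynomial is the technical heart of the argument; the rest is bookkeeping justified by the constant bounds on proof-tree size.
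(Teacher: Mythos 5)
Your proposal matches the paper's proof: both reduce to $Q_1 \sqsubseteq Q_2$ via Lemma~\ref{lem:forgetting-correctness} and then decide that containment by unfolding the nonrecursive queries into unions of polynomially many constant-size conjunctive queries (depth bounded by $\rank(\Pi_Q)$, branching bounded by the fixed query) and checking containment mappings pairwise. The ``delicate step'' you flag at the end is not actually part of this theorem's burden: the soundness of the reduction, including the asymmetric roles of $\psi(D)$ and $\psi(D[\tmem])$, is exactly what Lemma~\ref{lem:forgetting-correctness} already establishes, and in the containment test the history facts enter merely as unit rules that close branches during unfolding, so the standard UCQ containment criterion (each disjunct of $E_1$ contained in some disjunct of $E_2$) handles them without any additional rerouting argument.
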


This concludes our discussion of the data-dependent problems motivated by our
`online' stream reasoning algorithm (recall Algorithm 1). 
In the following section, we turn our attention to the data-independent 
problems motivated by our `offline' approach (Algorithm 2).

\subsection{Data-Independent Problems}\label{sec:delay}

Query containment can be reduced to 
both $\textsc{Delay}$ and $\textsc{Window}$ using a variant of the reduction we used  
in Section~\ref{theorem:forgetting}
for the forgetting problem.
As a result, we can show undecidability
of  both of our data-independent reasoning problems in the general case.

\begin{restatable}{theorem}{undecidabilitydelay}
    \label{theorem:undecidabilitydidelay}
    \textsc{Delay} is undecidable.
\end{restatable}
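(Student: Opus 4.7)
The plan is to reduce containment of nontemporal Datalog queries to \textsc{Delay}; since Datalog containment is undecidable (cited in the preliminaries as \cite{shmueli1993equivalence}), this will yield undecidability of \textsc{Delay}. The reduction adapts the technique underlying Theorem~\ref{theorem:forgetting}, exploiting that \textsc{Delay} quantifies universally over $\tin$, the history $D$, and the update $U$ (Definition~\ref{definition:di-delay}). The key idea is to fix $d = 0$ and build a temporal query $Q$ that simulates both $Q_1$ and $Q_2$ on a shared temporalized EDB, gating the $Q_1$-simulation by a marker placed one step in the future; this way, an update can produce new answers at $\tin$ precisely when $Q_1 \not\sqsubseteq Q_2$.

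Given nontemporal Datalog queries $Q_1 = \langle P_1, \Pi_1 \rangle$ and $Q_2 = \langle P_2, \Pi_2 \rangle$ over a common EDB schema and output arity, I build $Q = \langle P_Q, \Pi_Q \rangle$ by first appending a fresh time argument to every EDB and IDB predicate of $Q_1$ and $Q_2$. For each rule of $\Pi_1 \cup \Pi_2$, the program $\Pi_Q$ contains the temporalized copy in which every atom is assigned the same fresh time variable $t$, so that $\alpha_1 \land \dots \land \alpha_n \to \alpha$ becomes $\alpha_1(t) \land \dots \land \alpha_n(t) \to \alpha(t)$. Additionally, $\Pi_Q$ contains the output rules
\begin{align*}
P_2(\vec{x}, t) &\to P_Q(\vec{x}, t), \\
P_1(\vec{x}, t) \land M(t+1) &\to P_Q(\vec{x}, t),
\end{align*}
where $M$ is a fresh unary temporal EDB predicate. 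The reduction outputs the instance $\langle Q, 0 \rangle$ of \textsc{Delay}.

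For correctness, suppose first that $Q_1 \sqsubseteq Q_2$, and fix $\tin$, a $\tin$-history $D$, and a $\tin$-update $U$. Since every temporalized rule carries a single time variable, any derivation of $P_Q(\vec{a}, \tin)$ from $\Pi_Q \cup D \cup U$ uses only EDB facts at time $\tin$, all of which sit in $D$; if the derivation uses the first output rule, $\vec{a}$ is already in $Q(D, \tin)$, and if it uses the second rule, then $\vec{a} \in Q_1(D_{\tin})$ for the time-$\tin$ slice $D_{\tin}$ of $D$, whence by containment $\vec{a} \in Q_2(D_{\tin})$ and the first rule again derives the answer from $D$ alone. Together with monotonicity this gives $Q(D, \tin) = Q(D \cup U, \tin)$, so $\textsc{Delay}(Q, 0)$ holds. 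Conversely, if $Q_1 \not\sqsubseteq Q_2$ with witness dataset $D_0$ and tuple $\vec{a} \in Q_1(D_0) \setminus Q_2(D_0)$, then setting $\tin = 0$, $D$ to be the time-$0$ temporalization of $D_0$, and $U = \{M(1)\}$, the second output rule derives $P_Q(\vec{a}, 0)$ from $D \cup U$ while no rule derives it from $D$ alone, contradicting $\textsc{Delay}(Q, 0)$.

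The main obstacle is justifying the \emph{time-slice independence} of the construction: because each temporalized rule uses a single time variable, any derivation of $P_1(\vec{x}, \tau)$ or $P_2(\vec{x}, \tau)$ must involve only EDB facts at time $\tau$, so the simulation at $\tin$ faithfully mirrors the nontemporal evaluation of $Q_1, Q_2$ on the time-$\tin$ slice without cross-talk with data at other timestamps in $D$. A secondary verification is that $Q$ meets the paper's syntactic conditions (safety of all rules, IDB-headed rules only, well-sorted atoms), which is immediate from the construction.
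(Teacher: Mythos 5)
Your reduction is essentially identical to the paper's: the paper also reduces Datalog query containment to $\textsc{Delay}$ with $d=0$ by temporalizing both programs with a single shared time variable and gating the $Q_1$-output rule by a fresh temporal EDB marker at $t+1$ (their predicate $A$ is your $M$), and it argues correctness via the same time-slice independence of single-time-variable rules. The only detail the paper makes explicit that you elide is renaming the IDB predicates of $\Pi_1$ and $\Pi_2$ apart so the two simulations cannot interfere; this is a harmless w.l.o.g.\ normalization.
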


\begin{restatable}{theorem}{undecidabilitywindow}
    \textsc{Window} is undecidable.
\end{restatable}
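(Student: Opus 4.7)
The plan is to reduce the undecidable containment problem for nontemporal Datalog queries to $\textsc{Window}$, along the lines indicated by the preceding paragraph, i.e.\ by adapting the construction underlying Theorem~\ref{theorem:forgetting}. Given nontemporal Datalog queries $Q_1 = \langle P, \Pi_1\rangle$ and $Q_2 = \langle P, \Pi_2\rangle$ of the same arity, I would build a temporal query $Q$ together with nonnegative integers $d$ and $s$ such that $\langle Q, d, s\rangle \in \textsc{Window}$ iff $Q_1 \sqsubseteq Q_2$.

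First I would rename predicates so that $\Pi_1$ and $\Pi_2$ share only the head predicate $P$, and then temporalize every rule by appending a fresh time variable $t$ to all atoms. Next I would equip $Q$ with two complementary routes by which the temporal EDB facts drive the computation: one feeds the temporalized $\Pi_1$-rules with data sitting in a region of the history that the window will wipe out, and the other feeds the temporalized $\Pi_2$-rules with data in a region that survives the window. The output predicate $P_Q$ is derived whenever either branch produces its head. With $s$ and $d$ chosen so that, for sufficiently large $\tin$, a single nontemporal dataset can be encoded simultaneously at both a forgotten and a retained slot, $Q(D\cup U,\tout)$ will contain the answers contributed by both branches, whereas $Q(D[\tin-s]\cup U,\tout)$ will lose all $\Pi_1$-contributions because its inputs have been truncated away.

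The forward direction then follows from $Q_1 \sqsubseteq Q_2$ together with monotonicity: every $\Pi_1$-contribution on the full side is already present as a $\Pi_2$-contribution on the truncated side, so the two answer sets coincide for every $\tin$, $\tout$, $D$, and $U$. For the converse, any nontemporal counterexample $D_0$ with $Q_1(D_0)\not\subseteq Q_2(D_0)$ is turned into a concrete history by placing the encoding of $D_0$ simultaneously at the forgotten and at the retained slot for some large $\tin$, producing an answer on the full side that is absent on the truncated side and thus breaking the window equality. The main obstacle will be arranging the two routing mechanisms so that the full-vs-truncated discrepancy arises exactly when the history encodes a containment counterexample, and is not triggered spuriously by arbitrary placements of facts at other time slots or by adversarial choices of the update $U$; this is typically addressed by guarding each branch with carefully designed marker atoms whose placements are forced by the structure of the window and by the universal quantification in Definition~\ref{definition:di-forgetting}.
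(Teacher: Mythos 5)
Your overall strategy---reducing from containment of nontemporal Datalog queries, temporalizing both programs, and gating one of two derivation branches by material that the window truncates away---is the right family of construction and is indeed what the paper does. However, the specific data layout you describe breaks the reduction. You propose to feed the temporalized $\Pi_1$ with data placed in the region that the window wipes out and to feed the temporalized $\Pi_2$ with data in the region that survives, and you also rename the two programs apart so that they ``share only the head predicate.'' Since Definition~\ref{definition:di-forgetting} universally quantifies over the history $D$ and the update $U$, the adversary controls the two regions independently: it can place an arbitrary dataset in the forgotten slot and leave the retained slot empty. Then the full side $Q(D\cup U,\tout)$ contains a $Q_1$-contribution while the truncated side contains nothing, so the window equality fails even when $Q_1 \sqsubseteq Q_2$, and your reduction would declare \textsc{Window} false on a positive containment instance. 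No choice of marker atoms can repair this, because markers cannot force two independently chosen time slots to encode the same nontemporal dataset; what your layout actually tests is whether $Q_1(D_a) \subseteq Q_2(D_b)$ for all \emph{independent} pairs $D_a, D_b$, which holds only when $Q_1$ returns no answer on any dataset.

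The fix, and what the paper's Lemma~\ref{lemma:hardnesswindow} does, is to rename apart only the IDB predicates (the EDB predicates must remain shared), to encode the candidate counterexample dataset \emph{once} at a single time point $\tout > \tin$ that survives truncation (it can live entirely in the update $U$), and to place only a one-fact gate $A(\tout-1)$ in the forgotten part of the history. The rule $A(t-1) \wedge G_1^{\mathrm{t}}(\vec{x},t) \to P_Q(\vec{x},t)$ makes the $Q_1$-branch fire exactly when the gate is present, while $G_2^{\mathrm{t}}(\vec{x},t) \to P_Q(\vec{x},t)$ is always available; both branches read the very same facts at time $\tout$ because every temporalized rule is ``horizontal'' (all its atoms carry the same time variable $t$). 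Forgetting then removes only the gate, so the window condition for the instance $\langle Q, 0, 0\rangle$ collapses to $Q_1(D_0) \subseteq Q_2(D_0)$ for the single dataset $D_0$ encoded at $\tout$, and quantifying over all inputs yields exactly $Q_1 \sqsubseteq Q_2$.
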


Furthermore, our reductions from query containment preserve the shape of the queries and hence
they also provide a $\conexptime$ lower bound for
both problems in the nonrecursive case \cite{benedikt2010impact}.
In the remainder of this section, we show that this bound is tight.

The $\conexptime$ upper bounds are obtained via reductions from
$\textsc{Delay}$ and $\textsc{Window}$ into query containment for temporal
Datalog, which we detail in the remainder of this section.  Similarly to
previous sections, we assume that queries are \connected{} and also that they
do not contain explicit time points or objects; the latter restriction is
consistent with the `purity' assumption in \cite{benedikt2010impact}.

Note, however, that the upper bound in \cite{benedikt2010impact} for query containment
only holds for standard nonrecursive Datalog; therefore, we first establish
that this upper bound extends to the temporal case. Intuitively, temporal
queries can be transformed into nontemporal ones by grounding them to a finite
number of relevant time points based on their (finite) radius.

\begin{restatable}{lemma}{nrtcqinconexptime}
    \label{lem:containment}
    Query containment restricted to nonrecursive queries that are \connected{}
    and constant-free is in $\conexptime$.
\end{restatable}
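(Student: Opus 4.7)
The approach is to reduce temporal nonrecursive query containment to nontemporal nonrecursive Datalog query containment, which is in $\conexptime$ by \cite{benedikt2010impact}, via a polynomial-size grounding of both queries over a bounded interval of time points determined by the query radius. Since the queries are constant-free, no rule mentions any explicit time point or object, so every rule and fact is invariant under uniform shifts of time arguments; hence, if $Q_1 \not\sqsubseteq Q_2$, we may take a witness $(D, \vec{a}, \tau)$ and shift it so that $\tau = 0$. Setting $R = \radius(\Pi_{Q_1}) + \radius(\Pi_{Q_2})$, which is polynomial since both queries are nonrecursive, an induction on rank analogous to Lemma \ref{lem:nr-basic} shows that every derivation of $P_{Q_i}(\vec{a}, 0)$ from $\Pi_{Q_i} \cup D$ uses only facts whose time arguments lie in the interval $T = [-R, R]$. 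Therefore $Q_1 \sqsubseteq Q_2$ iff this inclusion holds on every dataset whose temporal facts are timestamped within $T$.

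Next, I would construct the grounded queries $Q_1', Q_2'$. For every temporal predicate $P$ of arity $n+1$ and every $\tau \in T$, introduce a fresh nontemporal predicate $P_\tau$ of arity $n$. Each temporal fact $P(\vec{o}, \tau)$ becomes $P_\tau(\vec{o})$, and each rule of $\Pi_{Q_i}$ is replaced by all of its groundings obtained by substituting the (unique, by \connected{}ness) time variable with every value in $T$ for which every resulting subscript still lies in $T$; rules without time variables are retained after renaming. Because each rule is \connected{}, the number of ground rules is bounded by $|\Pi_{Q_i}| \cdot (2R + 1)$, so the grounded queries have polynomial size in the original queries. By design, on any dataset $D$ supported on $T$, the temporal and grounded derivations are in exact correspondence, so $\Pi_{Q_i} \cup D \models P_{Q_i}(\vec{a}, 0)$ iff $\Pi_{Q_i}' \cup D' \models (P_{Q_i})_0(\vec{a})$, where $D'$ is the grounded counterpart of $D$.

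The main obstacle is establishing this last correspondence rigorously: one must verify by induction on derivations that each temporal rule instance firing on timestamps in $T$ matches exactly one ground rule in the grounded program (and vice versa), which requires \connected{}ness to guarantee that a single time variable controls all temporal arguments in each rule, so that substitutions consistent with membership in $T$ are exactly the ones realised by the ground rules. Once this is proved, containment of $Q_1$ in $Q_2$ reduces to containment of $Q_1'$ in $Q_2'$ at output predicate $(P_Q)_0$; since the grounded queries are nonrecursive nontemporal Datalog queries of polynomial size, the $\conexptime$ upper bound from \cite{benedikt2010impact} transfers to the temporal setting, yielding the stated bound.
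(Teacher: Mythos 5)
Your proposal is correct and takes essentially the same route as the paper's proof: exploit connectedness and the polynomially bounded radius to confine all relevant derivations to a fixed finite interval of time points, ground both programs over that interval into polynomial-size nonrecursive nontemporal Datalog queries, and invoke the $\conexptime$ bound of \cite{benedikt2010impact} (the paper centres its interval at $m=\max$ of the two radii rather than at $0$, which is immaterial). The only detail you omit is the trivial case of a rigid output predicate, where connectedness and constant-freeness imply the goal depends only on the rigid subprograms.
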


We now proceed to discussing our reductions from $\textsc{Delay}$ and
$\textsc{Window}$ into temporal query containment.

Consider a fixed, but arbitrary, instance $I = \langle Q,d \rangle$ of
$\textsc{Delay}$.  We construct queries $Q_1$ and $Q_2$ providing the
basis for our reduction.

Let $A$ and $B$ be fresh fresh unary temporal predicates, where $A$ is EDB and $B$ is IDB.
Furthermore, let $G$ be a fresh temporal IDB predicate of the same arity as $P_Q$.
Let $\Pi_1$ extend $\Pi_Q$ with the following rule:
\begin{equation}
    \label{eq:delayruleone}
    P_Q(\vec{x}, t) \land A(t) \to G(\vec{x}, t)
\end{equation}
and let $Q_1 = \langle G, \Pi_1 \rangle$.
Intuitively, $Q_1$   restricts the answers to $Q$ to
time points where $A$ holds.

Let $Q_2 = \langle G, \Pi_2 \rangle$, where $\Pi_2$ is now the program obtained
from $\psi(\Pi_Q)$ by adding the previous rule \eqref{eq:delayruleone} and the
following rules for each $k$ satisfying $-\radius(Q) \leq k \leq d$ and each
temporal EDB predicate $P$ in $\Pi_Q$, where $P' = \psi(P)$:
\begin{align}
    A(t) &\to B(t+k)\label{eq:delayruletwo}\\
    P(\vec{x}, t) \land B(t) &\to P'(\vec{x}, t)\label{eq:delayrulethree}
\end{align}
Intuitively, $Q_2$ further restricts the answers to $Q_1$ at any time $\tout$ to those that can be derived using 
facts in the interval $[\tout -\radius(Q),\tout + d]$.  

It then follows that
$Q_1 \sqsubseteq Q_2$ if and only if $\textsc{Delay}$ holds for $I$. Furthermore, the construction of $Q_1$
and $Q_2$ is feasible in $\logspace$.

\begin{restatable}{theorem}{nrdelaytheorem}
    \label{theorem:didelaynonrecursive}
    \textsc{Delay} restricted to nonrecursive queries that are \connected{} and constant-free is \conexptimecomplete{}.
\end{restatable}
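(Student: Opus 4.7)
The plan is to establish the two directions of the completeness claim separately, relying on machinery already set up in the excerpt. The lower bound is immediate: the paragraph preceding the theorem notes that query containment, already $\conexptime$-hard for nonrecursive queries \cite{benedikt2010impact}, reduces to $\textsc{Delay}$ via a shape-preserving reduction analogous to the one used for $\textsc{Forget}$; I would simply observe that this reduction produces \connected{}, constant-free queries (which is true because containment for such queries is already hard), so hardness transfers to the restricted class.

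For the upper bound, the strategy is to prove that $\textsc{Delay}$ holds for $I=\langle Q,d\rangle$ if and only if $Q_1 \sqsubseteq Q_2$, where $Q_1$ and $Q_2$ are as constructed in the excerpt. Once this biconditional is in hand, I would note that both queries are nonrecursive, \connected{}, and constant-free (the predicates $A$ and $B$ are fresh and rules \eqref{eq:delayruleone}--\eqref{eq:delayrulethree} introduce no new cycles or constants), and their construction from $I$ is manifestly in $\logspace$. Applying Lemma~\ref{lem:containment} then yields the $\conexptime$ upper bound.

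The heart of the argument is thus the biconditional. For the direction $Q_1 \sqsubseteq Q_2 \Rightarrow \textsc{Delay}$, I would fix $\tin$, a $\tin$-history $D$, a $\tin$-update $U$, and a candidate answer $\vec o \in Q(D\cup U,\tin-d)$; then construct a dataset $D'$ containing $D\cup U$, a single fact $A(\tin-d)$, and the facts $\psi(P)(\vec c,\tau)$ obtained by applying $\psi$ only to those EDB facts of $D\cup U$ lying in $[\tin-d-\radius(Q),\tin-d+d]$. The rule \eqref{eq:delayruleone} in $\Pi_1$ ensures $\vec o \in Q_1(D')$; by containment $\vec o \in Q_2(D')$, and rules \eqref{eq:delayruletwo}--\eqref{eq:delayrulethree} force every derivation in $\Pi_2$ to use only $\psi$-renamed facts whose timestamps lie in that window. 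Because $\psi$ renames EDB to IDB, $\Pi_2$ cannot ``see'' the raw facts of $U$ at time points beyond $\tin$ as EDB inputs, so the derivation of $\vec o$ uses only facts from $D$, whence $\vec o \in Q(D,\tin-d)$. Conversely, for $\textsc{Delay} \Rightarrow Q_1 \sqsubseteq Q_2$, I would take any dataset $D'$ and $\vec o \in Q_1(D',\tau)$; rule \eqref{eq:delayruleone} forces $A(\tau)\in D'$, and I would set $\tin=\tau+d$, split $D'$ into a $\tin$-history $D$ and $\tin$-update $U$, then invoke $\textsc{Delay}$ to transfer the derivation to $D$ restricted to the window $[\tau-\radius(Q),\tau+d]$, which is exactly what rules \eqref{eq:delayruletwo}--\eqref{eq:delayrulethree} permit inside $\Pi_2$.

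The main obstacle I anticipate is getting the bookkeeping right in the forward direction: one must verify that when a $\Pi_2$-derivation of $G(\vec o,\tau)$ is unfolded, every leaf EDB atom $P(\vec c,\tau')$ used is genuinely in $D$ rather than $U$, which is ensured by the combined filtering action of $B$ (via \eqref{eq:delayruletwo}--\eqref{eq:delayrulethree}) and the fact that $\psi$-renamed predicates become IDB. A careful induction on rank, together with the radius bound on how far information propagates in $\Pi_Q$, makes this precise; this is also where the \connected{} and constant-free restrictions are implicitly used, since they guarantee that $\radius(Q)$ is well-defined and bounds the temporal spread of any single derivation step, keeping the construction of $\Pi_2$ polynomial.
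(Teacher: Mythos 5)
Your proposal is correct and follows essentially the same route as the paper: hardness via the shape-preserving reduction from containment of nonrecursive constant-free Datalog queries, and the upper bound by proving $Q_1 \sqsubseteq Q_2$ iff $\textsc{Delay}(Q,d)$ for the $Q_1,Q_2$ of Section~\ref{sec:delay} and then invoking Lemma~\ref{lem:containment}, with the same key mechanism (the $B$-window $[\tin-d-\radius(Q),\tin]$ excluding the update, and the radius bound handling the converse). The only cosmetic deviation is that you add the $\psi$-renamed (IDB) facts to the dataset explicitly rather than letting rules \eqref{eq:delayruletwo}--\eqref{eq:delayrulethree} derive them, which changes nothing.
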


We conclude by providing the upper bound for $\textsc{Window}$. For this,
consider an arbitrary instance $I = \langle Q, d, s \rangle$. Similarly to the
case of $\textsc{Delay}$, we construct queries $Q_1$ and $Q_2$ for which
containment holds iff $\textsc{Window}$ holds for $I$.  Let $\mathit{In}$, $\mathit{Out}$ and $B$
be fresh unary temporal predicates, where $\mathit{In}$ is EDB and $\mathit{Out}$, $B$ are
IDB. Furthermore, as before, let $G$ be a fresh temporal IDB predicate of the
same arity as $P_Q$.

For $j$ a nonnegative
integer, let $\Pi^j$ be the program extending
$\psi(\Pi_Q)$ with the
following rules for each $k$
satisfying 
$-d < k \leq -s+\radius(Q)$,
each $\ell$ satisfying
$-j < \ell \leq -s + 2\cdot\radius(Q)$,
and each temporal EDB predicate $P$ in $\Pi_Q$, where $P' = \psi(P)$:
\begin{align}
    \label{eq:uwindowruletwo} \mathit{In}(t) &\to \mathit{Out}(t+k) \\
    \label{eq:uwindowruleone} P_Q(\vec{x}, t) \land \mathit{Out}(t) &\to G(\vec{x}, t) \\
    \label{eq:uwindowrulethree} \mathit{In}(t) &\to B(t+\ell) \\
    \label{eq:uwindowrulefour} P(\vec{x}, t) \land B(t) &\to P'(\vec{x}, t)
\end{align}
We define $Q_1 = \langle G, \Pi^{d + \radius(Q)} \rangle$ and $Q_2 = \langle G, \Pi^s \rangle$.
Intuitively, 
given a dataset for which $\mathit{In}$ holds for a set of time points $\mathbf{T}$,
query $Q_1$ 
captures the 
answers to $Q$ at time points $\tout$ within the  the interval  $[\tau-d, \tau-s+\radius(Q)]$ for some $\tau \in \mathbf{T}$;
in turn, for each such interval $[\tau-d, \tau-s+\radius(Q)]$, query $Q_2$ further restricts the answers to $Q_1$
to those that depend on input facts holding after $\tau - s$.

Since these queries can again be constructed in $\logspace$, we obtain the desired upper bound.

\begin{restatable}{theorem}{nrwindowtheorem}
    \label{theorem:window}
    \textsc{Window} restricted to nonrecursive queries that are \connected{} and constant-free is \conexptimecomplete{}.
\end{restatable}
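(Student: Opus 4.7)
The plan is to establish both bounds via query containment. The $\conexptime$ lower bound is inherited from the same style of reduction from nonrecursive query containment used in Theorem~\ref{theorem:undecidabilitydidelay}, which as noted in the text preserves nonrecursivity, \connected{}ness, and constant-freeness. For the upper bound, we use the construction of $Q_1 = \langle G, \Pi^{d+\radius(Q)}\rangle$ and $Q_2 = \langle G, \Pi^s\rangle$ preceding the statement: both queries are nonrecursive, \connected{}, and constant-free, and are computable from $I = \langle Q, d, s\rangle$ in $\logspace$. It therefore suffices to prove the key equivalence $Q_1 \sqsubseteq Q_2$ iff $\textsc{Window}$ holds for $I$, and then appeal to Lemma~\ref{lem:containment}.

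For the direction ``$\textsc{Window} \Rightarrow Q_1 \sqsubseteq Q_2$'', fix a dataset $D'$ and an answer $(\vec{o}, \tout) \in Q_1(D')$. The derivation of $G(\vec{o}, \tout)$ forces some $\mathit{In}(\tau) \in D'$ with $\tout \in (\tau - d, \tau - s + \radius(Q)]$ via rules~\eqref{eq:uwindowruletwo} and~\eqref{eq:uwindowruleone}. Decompose the EDB facts of $D'$ into a $\tau$-history $D$ and a $\tau$-update $U$, and set $\tin := \tau$. A standard nonrecursive radius argument (analogous to Lemma~\ref{lem:nr-basic}) shows that only EDB facts at times in $[\tout - \radius(Q), \tout + \radius(Q)]$ can appear in the derivation of $P_Q(\vec{o}, \tout)$; this range lies inside $Q_1$'s $B$-range $(\tin - d - \radius(Q), \tin - s + 2\radius(Q)]$ produced from $\mathit{In}(\tin)$ by rule~\eqref{eq:uwindowrulethree}, so the derivation transports to $\Pi_Q \cup D \cup U$, yielding $\vec{o} \in Q(D \cup U, \tout)$. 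Since $\tout > \tin - d$, applying $\textsc{Window}$ gives $\vec{o} \in Q(D[\tin - s] \cup U, \tout)$, and the facts supporting the latter derivation lie in $(\tin - s, \tin - s + 2\radius(Q)]$, which is exactly $Q_2$'s $B$-range for the witness $\tau$. Replaying this derivation through rule~\eqref{eq:uwindowrulefour} in $\Pi_2$ then yields $(\vec{o}, \tout) \in Q_2(D')$.

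For the converse, assume $\textsc{Window}$ fails with witnesses $\tin, \tout, D, U, \vec{o}$ satisfying $\vec{o} \in Q(D \cup U, \tout) \setminus Q(D[\tin-s] \cup U, \tout)$. A radius argument rules out $\tout > \tin - s + \radius(Q)$, since otherwise the facts within $\radius(Q)$ of $\tout$ already survive segmenting $D$ at $\tin - s$ unchanged, giving a contradiction; hence $\tout \in (\tin - d, \tin - s + \radius(Q)]$. Set $D' := D \cup U \cup \{\mathit{In}(\tin)\}$. In $Q_1(D')$, rule~\eqref{eq:uwindowruletwo} yields $\mathit{Out}(\tout)$, and $Q_1$'s wider $B$-range makes every fact of $D \cup U$ within $\radius(Q)$ of $\tout$ visible to $\psi(\Pi_Q)$, so $(\vec{o}, \tout) \in Q_1(D')$. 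In $Q_2(D')$, however, $B$ holds only at times in $(\tin - s, \tin - s + 2\radius(Q)]$, so any derivation of $P_Q(\vec{o}, \tout)$ in $\Pi_2$ must rely solely on EDB facts from $D[\tin - s] \cup U$, contradicting $\vec{o} \notin Q(D[\tin - s] \cup U, \tout)$. Thus $Q_1 \not\sqsubseteq Q_2$.

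The main obstacle is the careful alignment of the intervals defining $\mathit{Out}$ and $B$ with the parameters $d$, $s$, and $\radius(Q)$, together with handling datasets $D'$ whose $\mathit{In}$-witnesses enlarge the $B$-ranges of both queries uniformly: the forward argument must be anchored at the specific $\tau$ generating $\mathit{Out}(\tout)$ while remaining robust to extraneous $\mathit{In}$-facts, and one must ensure the $2\radius(Q)$-margin in the $B$-range is wide enough that every EDB fact possibly consulted by $\psi(\Pi_Q)$ in deriving $P_Q(\vec{o}, \tout)$ at any output-relevant $\tout$ is accounted for on both sides of the reduction.
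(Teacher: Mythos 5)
Your proposal is correct and follows essentially the same route as the paper: hardness via the containment reduction that preserves nonrecursivity, \connected{}ness and constant-freeness, and membership via the $\Pi^{d+\radius(Q)}/\Pi^{s}$ construction together with Lemma~\ref{lem:containment}, with the equivalence to containment established by the same radius-based interval bookkeeping. The only presentational difference is that you argue the two directions as ``\textsc{Window} implies containment'' and ``failure of \textsc{Window} implies failure of containment'' without the paper's normalisation to $\tin=0$ and $U=\emptyset$, and you spell out the direction the paper leaves as ``similar''.
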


\section{Related Work}

The main challenges posed by stream processing and 
the basic architecture of a stream management system 
were first discussed in \cite{babu2001continuous,babcock2002models}.
\citeauthor{arasu2006cql}~(\citeyear{arasu2006cql})
proposed the CQL query language, which 
extends SQL with  a notion of \emph{window}---a mechanism 
that allows one to reduce stream processing to traditional query evaluation. 
Since then, there have been numerous extensions and variants of CQL, which include 
a number of  stream query languages for the Semantic Web \cite{barbieri2009csparql,le2011native,le2013elastic,dell2015towards}.

In recent years, there have been several proposals for a general-purpose rule-based language in the context of stream reasoning.
Streamlog \cite{zaniolo2012streamlog} is a temporal Datalog language, which differs from the language considered in our paper in that it provides
nonmonotonic negation and restricts the syntax so that only facts over time points explicitly present in the data can be derived. Furthermore, the 
focus in \cite{zaniolo2012streamlog} is on dealing with so-called `blocking queries', which are those whose answers may depend on 
input facts arbitrarily far in the future; for this, a syntactic fragment of the language is provided that precludes blocking queries.
LARS is a temporal rule-based language featuring window constructs
and negation interpreted according to the stable model semantics \cite{beck2015lars,beck2015answer,beck2016equivalent}.
The semantics of LARS is rather different from that of temporal Datalog; in particular, 
the number of time points 
in a model is  considered as part of the input to query evaluation, and hence is restricted to be finite; 
furthermore, the notion of window  is built-in in LARS.

Stream reasoning has been 
studied in the context of 
RDF-Schema \cite{barbieri2010incremental},
and ontology-based data access \cite{calbimonte2010enabling,ozcep2014stream}.  
In these works, the input data is assumed to arrive as a stream, but 
the ontology language is assumed to be nontemporal. 
Stream reasoning has also been considered in the unrelated context of 
complex event processing \cite{anicic2011ep,dao2015enriching}.

There have been a number of proposals for rule-based languages in the context of 
temporal reasoning; here, the focus is on query evaluation over static temporal data, rather than on 
reasoning problems that are specific to stream processing. 
Our temporal Datalog language is a notational variant of
Datalog$_\mathrm{1S}$---the core language for temporal deductive databases
\cite{chomicki1988temporal,chomicki1989relational,chomicki1990polynomial}.
Templog is an extension of Datalog with modal temporal operators \cite{abadi1989temporal}, which was 
shown to be captured by Datalog$_\mathrm{1S}$ \cite{baudinet1992temporal}.
Datalog was extended with integer periodicity and gap-order constraints in \cite{toman1998datalog}; such constraints
allow for the representation of infinite periodic phenomena.
Finally, DatalogMTL is a recent Datalog extension based on metric temporal logic \cite{brandt2017ontology}.

In the setting of database constraint checking, a problem related to our window 
problem was considered by 
\citeauthor{chomicki1995encoding}~\shortcite{chomicki1995encoding}, 
who obtained some positive results for queries formulated in temporal 
first-order logic.

\section{Conclusion and Future Work}

In this paper, we have proposed novel decision problems relevant to the design of 
stream reasoning algorithms, and have studied their computational properties for temporal Datalog.
These problems capture the key challenges behind rule-based stream reasoning, where 
rules can propagate information both to past and future time points. 
Our results suggest that rule-based stream reasoning is feasible in practice for 
nonrecursive temporal Datalog queries. Our problems are, however, either intractable in data complexity or
undecidable in the general case. 

We have made several mild technical assumptions in our upper bounds for nonrecursive queries, 
which we plan to lift in future work.
Furthermore, we have assumed throughout the paper that numbers in the input are encoded in unary;  we are currently looking into 
the impact of binary encoding on the complexity of our problems.
Finally, we are planning to study extensions of nonrecursive temporal Datalog for which 
decidability of all our problems can be ensured.

\section*{Acknowledgments}

This research was supported by 
the SIRIUS Centre for Scalable Data Access in the Oil and Gas Domain, 
the Royal Society, 
and the EPSRC projects DBOnto, MaSI$^3$, and ED$^3$.

\bibliographystyle{aaai}
\bibliography{bibliography}

\ifextendedversion
    \appendix
    \newpage
\onecolumn

\begin{definition}
    A \emph{derivation} of a fact $\alpha$ from a program $\Pi$ is a
    finite labelled tree such that:
    \begin{inparaenum}[\it(i)]
    \item each node is labelled with a ground instance of a rule in $\Pi$;
    \item fact $\alpha$ is the head of the rule labelling the root;
    \item if the rule of a node $w$ has a non-empty body containing atoms 
        $\alpha_1, \dots, \alpha_m$, then 
        $w$ has $m$ children and $\alpha_i$ is the head of the rule labelling the $i$-th child.
    \end{inparaenum}
\end{definition}

\section{Proofs for Section~\ref{subsec:dtp}}

\dtpcriticaldomain*
\begin{proof}
    If \textsc{DTP} holds for $I$, then the condition of the proposition clearly
    holds as well.  For the converse, assume that $\textsc{DTP}$ does not hold for
    $I$ and hence there exists a tuple $\vec{c}$ of objects and a $\tin$-update
    $U$ such that $\vec{c} \in Q(D \cup U, \tout)$ and
    $\vec{c} \notin Q(D, \tout)$.  Let $h$ be the function mapping every object
    $o \in \criticaldomain{I}$ to itself and every other object to $o_I$, let
    $\vec{o} = h(\vec{c})$, and let $V = h(U)$.  Clearly,
    $\vec{o} \in Q(D \cup V, \tout)$, where $\vec{o}$ is a tuple over
    $\criticaldomain{I}$, and $V$ is a $\tin$-update involving only objects in
    $\criticaldomain{I}$.  It remains to show that $\vec{o} \notin Q(D, \tout)$.
    If $o_I \notin \vec{o}$, then $\vec{o} = \vec{c}$ and
    $\vec{c} \notin Q(D, \tout)$ holds by our assumption.  Otherwise,
    $\vec{o} \notin Q(D, \tout)$ holds because $o_I$ does not occur in
    $\Pi_Q \cup D$.
\end{proof}

\dtpupper*
\begin{proof}
    Assume that \textsc{DTP} holds for $I$. We show that
    $\vec{o} \in \criticalquery{I}(D \cup \criticalupdate{I}, \tout)$ implies
    $\vec{o} \in \criticalquery{I}(D, \tout)$ for every tuple $\vec{o}$ over
    $\criticaldomain{I}$.  Let $\vec{o}$ be a tuple over $\criticaldomain{I}$
    such that $\vec{o} \in \criticalquery{I}(D \cup \criticalupdate{I}, \tout)$.
    Let $\delta$ be a derivation of $P_Q(\vec{o}, \tout)$ from 
    $\Pi_{\criticalquery{I}} \cup D \cup \criticalupdate{I}$.  Let $\delta'$ be
    the derivation obtained from $\delta$ by first removing each node labelled by
    an instance of any of the
    additional rules introduced in Definition~\ref{def:critical-query}
    and
    then replacing each $P'$ with its corresponding EDB predicate $P$---i.e., 
    the predicate $P$ such that $P' = \psi(P)$.  
    Let $U$ be the $\tin$-update consisting of each temporal EDB fact labelling a leaf of
    $\delta'$ and having time argument strictly bigger than $\tin$. Then, by the
    construction of $\criticalquery{I}$, $\delta'$ is a derivation of 
    $P_Q(\vec{o}, \tout)$ from $\Pi_Q \cup D \cup U$, and hence
    $\vec{o} \in Q(D \cup U, \tout)$.  Therefore, $\vec{o} \in Q(D, \tout)$ by
    Proposition~\ref{prop:delay} because \textsc{DTP} holds for $I$ by
    assumption, and hence $\vec{o} \in \criticalquery{I}(D, \tout)$ by
    the previously observed properties 
    of $\criticalquery{I}$.

    For the converse, assume that
    $\vec{o} \in \criticalquery{I}(D \cup \criticalupdate{I}, \tout)$ implies
    $\vec{o} \in \criticalquery{I}(D, \tout)$ for every tuple $\vec{o}$ over
    $\criticaldomain{I}$. We prove that \textsc{DTP} holds for $I$ using
    Proposition~\ref{prop:delay}, by showing that
    $\vec{o} \in Q(D \cup U, \tout)$ implies $\vec{o} \in Q(D, \tout)$ for every
    tuple $\vec{o}$ over $\criticaldomain{I}$ and $\tin$-update $U$ involving
    only objects of $\criticaldomain{I}$. Let $\vec{o}$ be such a tuple and $U$
    such a $\tin$-update, and suppose $\vec{o} \in Q(D \cup U, \tout)$.
    By the previously observed properties of $\criticalquery{I}$ and
    $\criticalupdate{I}$, we then have
    $\vec{o} \in \criticalquery{I}(D \cup \criticalupdate{I}, \tout)$, and hence
    $\vec{o} \in \criticalquery{I}(D, \tout)$ by assumption.  Therefore,
    $\vec{o} \in Q(D, \tout)$ by the properties 
    of $\criticalquery{I}$.
\end{proof}

\begin{lemma}
    \label{lem:dtp-lower}
    There exists a \logspace{}-computable many-one reduction $\phi$ from query
    evaluation to $\textsc{DTP}$ such that, for each instance
    $I = \langle Q, D, \vec{a} \rangle$ of query evaluation, the query $Q'$ in
    $\phi(I)$ is independent of $D$ and $\vec{a}$.
\end{lemma}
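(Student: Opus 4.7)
The plan is to build a reduction that isolates the candidate answer $\vec{a}$ inside $D'$ by using a single rigid EDB predicate, while $Q'$ itself depends only on $Q$. The core idea is to design $Q'$ so that (i) its query predicate $\edb{G}$ can never be derived except when a rigid predicate $\edb{A}$ (used to encode the candidate $\vec{a}$ in the history) is matched, and (ii) a single additional rule can be fired by a fresh temporal EDB predicate $\edb{Trigger}$ exclusively available to $\tin$-updates, forcing $\edb{G}(\vec{b},0)$ whenever the update provides any $\edb{Trigger}$ fact.

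Concretely, assuming without loss of generality that $Q = \langle P_Q, \Pi_Q\rangle$ is temporal (the rigid case is handled by first lifting $P_Q(\vec{x})$ to $P_Q(\vec{x},0)$ via a trivial auxiliary rule depending only on $Q$), I would introduce a fresh temporal IDB predicate $\edb{G}$ of the same arity as $P_Q$, a fresh rigid EDB predicate $\edb{A}$ matching the object positions of $P_Q$, and a fresh unary temporal EDB predicate $\edb{Trigger}$. Then $\Pi_{Q'}$ extends $\Pi_Q$ with the two rules
\[
P_Q(\vec{x},0) \land \edb{A}(\vec{x}) \to \edb{G}(\vec{x},0), \qquad
\edb{Trigger}(t) \land \edb{A}(\vec{x}) \to \edb{G}(\vec{x},0),
\]
and $Q' = \langle \edb{G}, \Pi_{Q'}\rangle$ depends only on $Q$. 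Given $\langle Q, D, \vec{a}\rangle$ with $\vec{a}=(\vec{b},\tau)$, I would shift every time argument in $D$ by $-\tau$ to obtain $D_0$, set $D' = D_0 \cup \{\edb{A}(\vec{b})\}$, $\tin = \max(0,\text{largest time in } D_0)$, and $\tout = 0$. This is clearly \logspace{}-computable under unary encoding.

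For correctness, the key observation is that, since $\edb{A}$ is rigid EDB and $\tin$-updates contain only temporal facts, $\edb{A}(\vec{x})$ holds iff $\vec{x}=\vec{b}$ in every extension $D' \cup U$; because both new rules guard $\edb{G}$ by $\edb{A}(\vec{x})$, we obtain $Q'(D' \cup U,0) \subseteq \{\vec{b}\}$ for every $\tin$-update $U$. If $\vec{a}\in Q(D)$, then $P_Q(\vec{b},0)$ is derivable from $\Pi_Q \cup D_0$ (by the standard time-shift bijection between derivations), so the first new rule gives $Q'(D',0)=\{\vec{b}\}$, and monotonicity of Datalog forces $Q'(D'\cup U,0)=\{\vec{b}\}$ for every $U$; hence DTP holds. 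If $\vec{a}\notin Q(D)$, then $Q'(D',0)=\emptyset$ because $\edb{Trigger}$ is absent from $D'$ and $P_Q(\vec{b},0)$ is not derivable, but choosing $U = \{\edb{Trigger}(\tin+1)\}$ fires the second rule to yield $\edb{G}(\vec{b},0)$, so DTP fails.

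The main subtlety I expect to need care with is ruling out that rules of $\Pi_Q$ themselves, especially ones propagating information backward in time, could under some unconstrained update $U$ newly derive $P_Q(\vec{b},0)$ and thereby silently alter the equality being tested. This concern is neutralised by the rigid-EDB bottleneck on $\edb{G}$: since $\edb{A}$ is frozen to the single tuple $\vec{b}$, every additional $P_Q$-derivation produced by $U$ can only populate $\edb{G}$ at the tuple $\vec{b}$ itself, and by monotonicity it can only enlarge $Q'(D' \cup U,0)$ relative to $Q'(D',0)$, never shrink it. Hence the only possible direction of change is from $\emptyset$ to $\{\vec{b}\}$, which by the $\edb{Trigger}$ rule occurs iff $\vec{a}\notin Q(D)$, completing the equivalence.
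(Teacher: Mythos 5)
Your overall strategy is the same as the paper's: guard a fresh goal predicate $G$ by an answer-encoding EDB atom that is placed only in the history (so no update can change \emph{which} tuple could be an answer), and add a second rule fired by a fresh temporal EDB predicate that only an update can supply, so that \textsc{DTP} holds precisely when the candidate answer is already derivable from the history alone. The paper realises this with a temporal predicate $\edb{A}(\vec{x},t)$, the fact $\edb{A}(\vec{o},\tau)$ in $D'$, and the \textsc{DTP} instance $\langle Q',D',\tau,\tau\rangle$; you realise it with a rigid $\edb{A}(\vec{x})$ plus a time shift that normalises the candidate time point $\tau$ to $0$. The ``rigid-EDB bottleneck'' argument itself is sound, since updates contain only temporal facts.

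The time shift, however, is a genuine gap. In the general setting of this lemma a rule of $\Pi_Q$ may mention explicit time points (a time term may be a time point), and the ``standard time-shift bijection between derivations'' you invoke fails for such programs: for a rule such as $E(x,t)\land F(x,3)\to P_Q(x,t)$, derivability of $P_Q(\vec{b},\tau)$ from $\Pi_Q\cup D$ requires $F(\vec{b},3)\in D$, whereas derivability of $P_Q(\vec{b},0)$ from $\Pi_Q\cup D_0$ requires $F(\vec{b},3+\tau)\in D$, and these differ whenever $\tau\neq 0$. So your reduction is correct only for queries that are constant-free in the temporal sort, while the lemma quantifies over arbitrary query-evaluation instances (and the paper only imposes constant-freeness in later sections, not here). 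The fix is exactly what forces the paper's formulation: do not shift the data; make $\edb{A}$ temporal; add $\edb{A}(\vec{o},\tau)$ to $D'$; replace your hard-coded $0$ by a time variable $t$ in both new rules (with the trigger atom at $t+1$ so it can only come from an update); and take $\tout=\tin=\tau$. With that change $Q'$ still depends only on $Q$, and the rest of your argument goes through unchanged.
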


\begin{proof}
    Let $I = \langle Q, D, \vec{a} \rangle$ be an instance of query evaluation.
    We assume w.l.o.g.\ that $Q$ is temporal and hence
    $\vec{a} = \langle \vec{o}, \tau \rangle$ with $\vec{o}$ a tuple of
    objects---otherwise, simply consider tuple $\langle \vec{a}, 0 \rangle$
    instead of $\vec{a}$ and query $ \langle P, \Pi \rangle$ with
    $\Pi = \Pi_Q \cup \{P_Q(\vec{x}) \rightarrow P(\vec{x}, 0)\}$ instead of $Q$.

    We now define the instance $\phi(I)$ of $\textsc{DTP}$ corresponding to
    $I$.  Let $\edb{T}$ and $\edb{A}$ be fresh temporal predicates, where
    $\edb{T}$ is EDB and unary and $\edb{A}$ is EDB and of the same arity as
    $P_Q$.  Let $D' = D \cup \{\edb{A}(\vec{o}, \tau)\}$ and let $Q' $ of the
    same arity as $Q$ where $\Pi_{Q'}$ is $\Pi_Q$ extended with the following
    rules:
    \begin{align*}
        \edb{T}(t+1) \wedge \edb{A}(\vec{x}, t) \rightarrow \alignhere
        P_{Q'}(\vec{x},t)
        \\
        P_Q(\vec{x}, t) \wedge \edb{A}(\vec{x}, t) \rightarrow \alignhere
        P_{Q'}(\vec{x},t)
    \end{align*}
    We argue that $\vec{o} \in Q(D, \tau)$ if and only if $\textsc{DTP}$ holds
    for $\phi(I) = \langle Q', D', \tau, \tau \rangle$.  If
    $\vec{o} \in Q(D, \tau)$, we show that
    $Q'(D' \cup U, \tau) \subseteq Q'(D',\tau)$ for every $\tau$-update $U$ and
    hence $\textsc{DTP}$ holds for $\phi(I)$.  Assume that
    $\vec{c} \in Q'(D' \cup U, \tau)$ for some $\tau$-update $U$. Then, since
    $P_{Q'}(\vec{c},\tau)$ can only be entailed by one of the two new rules in
    $\Pi_{Q'}$, dataset $D' \cup U$ must contain the fact $A(\vec{c},\tau)$;
    note, however, that this fact cannot be contained in $U$ because $U$ is a
    $\tau$-update, and it is also not in $D$ because it mentions $A$. Therefore,
    $\vec{c} = \vec{o}$; but now, by the assumption that $\vec{o} \in Q(D, \tau)$
    and by the construction of $Q'$ and $D'$, we have $\vec{o} \in Q'(D', \tau)$,
    as required.

    Next, assume that $\textsc{DTP}$ holds for
    $\phi(I) = \langle Q', D', \tau, \tau \rangle$.  We show that
    $\vec{o} \in Q(D, \tau)$.  Consider the $\tau$-update $U$ containing the fact
    $T(\tau + 1)$.  Then, $\vec{o} \in Q'(D' \cup U, \tau)$ because
    $A(\vec{o}, \tau) \in D'$, and hence $\vec{o} \in Q'(D', \tau)$ because
    $\textsc{DTP}$ holds for $\phi(I)$ by assumption.  We have that
    $\vec{o} \in Q'(D', \tau)$ implies $\vec{o} \in Q(D', \tau)$ because
    $T(\tau+1) \notin D'$ since $T$ is fresh.  Therefore,
    $\vec{o} \in Q(D, \tau)$ because $D' \setminus D = \{ A(\vec{o}, \tau) \}$
    and $A$ does not occur in $Q$.
\end{proof}

\dtptheorem*
\begin{proof}
    Hardness follows by Lemma~\ref{lem:dtp-lower},
    since query evaluation is \pspacecomplete{} in data complexity 
    by the results in \cite{chomicki1988temporal}.

    We show an algorithm that decides \textsc{DTP} on $I = \langle Q, D, \tin, \tout \rangle$ 
    in polynomial space if the query $Q$ is considered fixed.
    According to Lemma~\ref{lem:dtp-upper}, 
    it is sufficient to iterate over all tuples $\vec{o}$ of objects from $\criticaldomain{I}$,
    rejecting if 
    $\vec{o} \in \criticalquery{I}(D \cup \criticalupdate{I}, \tout)$ and
    $\vec{o} \notin \criticalquery{I}(D, \tout)$,
    and accepting if we can complete all the iterations without rejecting.
    Let $a$ be the maximum arity of a predicate in $Q$,  
    let $c$ be the number of objects in $\Pi_Q \cup D$,
    and let $p$ be the number of predicates in $Q$.
    Note that, with respect to the size of the input,
    $a$ and $p$ are constant and $c$ is linear.
    We can build $\criticalquery{I}$ in constant time because $\criticalquery{I}$ depends only on $Q$,
    and we can build $\criticalupdate{I}$ in polynomial time because the number of facts in 
    $\criticalupdate{I}$ is at most $1 + p \cdot (c+1)^a$.
    The number of iterations is polynomial because the number relevant object tuples is $(c+1)^a$.
    Finally, note that we can check both $\vec{o} \in \criticalquery{I}(D \cup \criticalupdate{I}, \tout)$ and 
    $\vec{o} \notin \criticalquery{I}(D \cup \criticalupdate{I}, \tout)$ in polynomial space,
    since query evaluation and its complement are \pspacecomplete{} in data complexity 
    by the results in \cite{chomicki1988temporal}.
\end{proof}

\subsubsection{Nonrecursive case}

\nrdtpcriticalupdate*
\begin{proof}
    We prove the claim by induction on the rank of $P$. We assume w.l.o.g.\ that
    $U$ contains only predicates in $\Pi_{Q}$.

    In the base case $\rank(P) = 0$.  Let
    $\Pi_Q \cup D \cup U \models P(\vec{o}, \tau)$ such that
    $|\tau - \tau'|\le a \cdot (\rank(\Pi_Q)-\rank(P))$ for some
    $\tau'\in\mathbf{T}$.  We show
    $\Pi_Q \cup D \cup \nrcriticalupdate{I} \models P(\vec{o}, \tau)$.  Since
    $\rank(P)=0$, we have $P(\vec{o}, \tau) \in \Pi_Q \cup D \cup U$ because $P$
    occurs only in facts. Moreover, we have
    $|\tau - \tau'|\leq a \cdot \rank(\Pi_Q)\le\radius(\Pi_Q)$. We distinguish two
    cases. If $\tau\le\tin$, then $P(\vec{o},\tau)\in\Pi_Q\cup D$ since $U$ only
    contains facts with time points after $\tin$, and the claim
    follows. Otherwise, we have $\tin<\tau\le\tau'+\radius(\Pi_Q)$, and hence
    $\tau$ is critical. Since $\nrcriticalupdate{I}$ contains all facts involving
    only EDB predicates in $\Pi_Q$, objects in $\criticaldomain{I}$, and critical
    time points, we then have $P(\vec{o},\tau)\in\nrcriticalupdate{I}$, and the
    claim follows.

    For the inductive step, we assume that the claim holds for every predicate of
    rank at most $n$ and show it for $\rank(P)=n+1$. Let
    $\Pi_Q \cup D \cup U \models P(\vec{o}, \tau)$ such that
    $|\tau-\tau'|\le a\cdot(\rank(\Pi_Q)-n-1)$ for some $\tau'\in\mathbf{T}$.
    Let $\delta$ be a derivation of $P(\vec{o}, \tau)$ from
    $\Pi_Q \cup D \cup U$.  We show
    $\Pi_Q \cup D \cup \nrcriticalupdate{I} \models P(\vec{o}, \tau)$.  Let $r$
    be the label of the root of $\delta$, and let $r'$ be a rule in $\Pi_Q$ such
    that $r$ is an instance of $r'$. It suffices to show that
    $\Pi_Q \cup D \cup \nrcriticalupdate{I} \models \alpha$ for each atom
    $\alpha\in\body(r)$. Let $\alpha$ be an arbitrary such atom. 
    We have $\alpha=P_1(\vec{o}_1,\tau_1)$ for $\tau_1$ a time
    term, by our assumption that $Q$ does not contain rigid atoms.
    Since $\delta$ is a derivation, we have $\Pi_Q \cup D \cup U \models \alpha$.
    We distinguish two subcases.

    If the atom corresponding to $\alpha$ in $r'$ mentions a time variable, so does its head because $\Pi_Q$ is \connected{}, and hence we have
    $|\tau_1-\tau|\le a$. Consequently, since
    $|\tau-\tau'|\le a\cdot(\rank(\Pi_Q)-n-1)$, we have
    $|\tau_1-\tau'|\le a\cdot(\rank(\Pi_Q)-n)\le a\cdot(\rank(\Pi_Q)-\rank(P_1))$;
    $\Pi_Q \cup D \cup \nrcriticalupdate{I} \models \alpha$ then follows from
    $\Pi_Q \cup D \cup U \models \alpha$ by the inductive hypothesis.

    If the atom corresponding to $\alpha$ in $r'$ mentions no time variable,
    $\tau_1$ must be a time point, and hence $\tau_1\in\mathbf{T}$. Clearly,
    $|\tau_1-\tau_1|=0\le a\cdot(\rank(\Pi_Q)-n)$, and hence
    $\Pi_Q \cup D \cup \nrcriticalupdate{I} \models \alpha$ follows from
    $\Pi_Q \cup D \cup U \models \alpha$ by the inductive hypothesis.
\end{proof}

\nrdtpmain*
\begin{proof}
    If $\textsc{DTP}$ holds for $I$, then trivially
    $\vec{o} \in Q(D \cup \nrcriticalupdate{I}, \tout)$ implies
    $\vec{o} \in Q(D, \tout)$ for every tuple $\vec{o}$ over
    $\criticaldomain{I}$, because $\nrcriticalupdate{I}$ is a $\tin$-update.  For
    the converse, assume that $\vec{o} \in Q(D \cup \nrcriticalupdate{I}, \tout)$
    implies $\vec{o} \in Q(D, \tout)$ for every tuple $\vec{o}$ over
    $\criticaldomain{I}$.  We prove that $\textsc{DTP}$ holds for $I$ by showing
    that $\vec{o} \in Q(D \cup U, \tout)$ implies $\vec{o} \in Q(D, \tout)$ for
    every tuple $\vec{o}$ over $\criticaldomain{I}$ and $\tin$-update $U$
    involving only objects of $\criticaldomain{I}$; the claim then holds by
    Proposition~\ref{prop:delay}.  Let $\vec{o}$ be a tuple over
    $\criticaldomain{I}$ and let $U$ be a $\tin$-update involving only objects of
    $\criticaldomain{I}$ such that $\vec{o} \in Q(D \cup U, \tout)$. Since
    $\tout\in\mathbf{T}$ and $|\tout-\tout|=0$, by Lemma~\ref{lem:nr-basic} for
    $\tau=\tau'=\tout$ it then follows that
    $\vec{o} \in Q(D \cup \criticalupdate{I}, \tout)$.  By our assumption,
    $\vec{o} \in Q(D, \tout)$.
\end{proof}

\nrdtptheorem*
\begin{proof}
    We show an algorithm that decides \textsc{DTP} on $I = \langle Q, D, \tin, \tout \rangle$ 
    in polynomial time if the query $Q$ is considered fixed.
    According to Lemma~\ref{lem:nonrec-correct}, 
    it is sufficient to iterate over all tuples $\vec{o}$ of objects from $\criticaldomain{I}$,
    rejecting if 
    $\vec{o} \in Q(D \cup \nrcriticalupdate{I}, \tout)$ and
    $\vec{o} \notin Q(D, \tout)$,
    and accepting if we can complete all the iterations without rejecting.
    Let $a$ be the maximum arity of a predicate in $Q$,  
    let $c$ be the number of objects in $\Pi_Q \cup D$,
    and let $p$ be the number of predicates in $Q$.
    Note that, with respect to the size of the input, 
    the values $a$, $p$ and $\radius(Q)$ are constant;
    furthermore, $\tau_0 - \tin$ and $c$ are linear.
    We can build $\nrcriticalupdate{I}$ in polynomial time because the number of facts in 
    $\nrcriticalupdate{I}$ is bounded by $p \cdot (\radius(Q) + \tau_0 - \tin) \cdot (c+1)^a$.
    The number of iterations is polynomial because the number of relevant object tuples is $c^a$.
    Finally, checking both
    $\vec{o} \in Q(D \cup \nrcriticalupdate{I}, \tout)$ and
    $\vec{o} \notin Q(D, \tout)$ is in \aczero{}.
\end{proof}

\section{Proofs for Section~\ref{subsec:hardness-forgetting}}

\begin{lemma} \label{lemma:hardnessforgetting} %
    There exists a \logspace{}-computable many-one reduction $\phi$ from datalog
    query containment to $\textsc{Forget}$ such that, for every instance
    $I = \langle Q_1, Q_2 \rangle$ of datalog query containment, the query in
    $\phi(I)$ is nonrecursive if $Q_1$ and $Q_2$ are nonrecursive.
\end{lemma}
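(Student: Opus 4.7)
The plan is to reduce Datalog query containment to $\textsc{Forget}$ by placing the two input programs side by side inside a single temporal program and guarding one of them with a temporal ``switch'' fact that is removed by the forgetting operation. Given $I = \langle Q_1, Q_2 \rangle$ with $Q_i = \langle P_i, \Pi_i \rangle$, I temporalise every rule of $\Pi_i$ by appending a single fresh time variable $t$ to every atom, and rename the IDB predicates of $\Pi_i$ with a superscript $(i)$ so that $\Pi_1$ and $\Pi_2$ share their EDB signature but have disjoint IDB signatures. Let $\edb{Trig}$ be a fresh unary temporal EDB predicate and $P_Q$ a fresh temporal IDB predicate of arity one greater than that of $P_1$ and $P_2$. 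The image $\phi(I)$ is the instance $\langle \langle P_Q, \Pi_Q\rangle, \{\edb{Trig}(0)\}, 0, 0, 0 \rangle$, where $\Pi_Q$ consists of: for every rule $\beta_1 \wedge \cdots \wedge \beta_n \to \alpha$ of $\Pi_1$, the \emph{gated} temporalisation $\beta_1(\vec{y}_1, t) \wedge \cdots \wedge \beta_n(\vec{y}_n, t) \wedge \edb{Trig}(0) \to \alpha^{(1)}(\vec{x}, t)$; for every rule of $\Pi_2$, its ungated temporalisation $\beta_1(\vec{y}_1, t) \wedge \cdots \wedge \beta_n(\vec{y}_n, t) \to \alpha^{(2)}(\vec{x}, t)$; and the two projection rules $P_1^{(1)}(\vec{x}, t) \to P_Q(\vec{x}, t)$ and $P_2^{(2)}(\vec{x}, t) \to P_Q(\vec{x}, t)$. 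Assuming without loss of generality that neither $\Pi_i$ contains empty-body rules, every produced rule is safe because $t$ occurs in a body atom.

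I would prove correctness by reading off the semantics slicewise in time. Let $U_t = \{E(\vec{a}) \mid E(\vec{a}, t) \in U\}$ denote the nontemporal $t$-slice of an update $U$. Because each rule produced from $\Pi_i$ uses a single time variable $t$ shared across all its atoms, the $t$-slice of the least Herbrand model of $\Pi_Q \cup D' \cup U$ restricted to the $\Pi_i$-copy predicates coincides with the Datalog evaluation of $\Pi_i$ on $U_t$, subject to the proviso that the $\Pi_1$-copy produces nothing unless $\edb{Trig}(0)$ is available. Hence for every $0$-update $U$ and every $t \geq 1$ I obtain $Q(D \cup U, t) = Q_1(U_t) \cup Q_2(U_t)$ while $Q(D[0] \cup U, t) = Q_2(U_t)$; the boundary case $t = 0$ is degenerate because no EDB facts at time $0$ occur in either $D \cup U$ or $D[0] \cup U$, so both answer sets are empty. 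As $U$ ranges over all $0$-updates, $U_t$ ranges over all nontemporal EDB datasets over the shared signature, and therefore equality of the two answer sets for every update and every $\tau \geq 0$ is equivalent to $Q_1 \sqsubseteq Q_2$, as required.

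The step I expect to be the main obstacle is preserving nonrecursiveness, which is precisely what the second clause of the lemma demands. A naive attempt to broadcast $\edb{Trig}$ through time by rules such as $\edb{Trig}(t) \to \edb{Trig}'(t+1)$ together with $\edb{Trig}'(t) \to \edb{Trig}'(t+1)$ would introduce a cycle in the predicate dependency graph and destroy the claim; placing the explicit time point $0$ inside $\edb{Trig}(0)$ sidesteps this, exploiting that an integer time point is itself a legal time term. With this choice the $\Pi_Q$-dependency graph is the disjoint union of renamed copies of the $\Pi_1$- and $\Pi_2$-dependency graphs together with a sink node $P_Q$ pointing only at the two output predicates, so acyclicity of each $\Pi_i$ carries over to $\Pi_Q$. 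Finally, the construction is \logspace{}-computable: it performs local predicate renaming, appends one time argument to each atom, and emits a linear number of additional rules.
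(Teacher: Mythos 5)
Your reduction is correct and matches the paper's own construction in all essentials: both temporalise the two programs slice-wise (one shared time variable per rule, shared EDB signature, disjoint renamed IDB signatures), gate the $Q_1$ branch with a fresh EDB fact placed at the unique forgettable time point, and observe that forgetting that fact is harmless for every update and output time iff $Q_1 \sqsubseteq Q_2$. The only substantive difference is that you pin the gate to the explicit time point $0$ inside every rule of the $Q_1$ copy, whereas the paper gates only the final projection step with a single rule $A(t-1)\wedge G_1^{\mathrm{t}}(\vec{x},t)\to P_Q(\vec{x},t)$ containing no explicit time points --- immaterial for this lemma, though the paper's constant-free form is the variant that gets reused for the \textsc{Delay} and \textsc{Window} lower bounds.
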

\begin{proof}
    Let $\langle Q_1, Q_2 \rangle$ be an instance of query containment with $Q_1$
    and $Q_2$ datalog queries. Without loss of generality,
    $P_{Q_1} = P_{Q_2} = G$.  For any rigid $n$-ary IDB predicate $P$ and
    $i \in \{ 1,2 \}$, let $P_i$ be a fresh rigid $n$-ary IDB predicate uniquely
    associated with $P$ and $i$.  For any rigid $n$-ary EDB (resp., IDB)
    predicate $P$, let $P^\mathrm{t}$ be a fresh temporal $(n+1)$-ary EDB (IDB)
    predicate uniquely associated with $P$.  Let $t$ be a time variable.  For
    $i \in \{ 1,2 \}$, let $\program_i$ be $\program_{Q_i}$ after replacing each
    rigid $n$-ary IDB predicate $P$ with $P_i$; let $\program_i'$ be
    $\program_i$ after replacing each rigid atom $P(\vec{u})$ with the temporal
    atom $P^\mathrm{t}(\vec{u}, t)$.  Let $A$ be a fresh temporal unary EDB
    predicate.  Let $Q$ be the query such that $P_Q$ is a fresh temporal IDB
    predicate of the same arity as $G_1^\mathrm{t}$ (or, equivalently, as
    $G_2^\mathrm{t}$), and $\program_Q$ is $\program_1' \cup \program_2'$
    extended with the following rules:
    \begin{align}
        \label{eq:forgetruleone} A(t-1) \wedge G_1^\mathrm{t}(\vec{x},t) \rightarrow \alignhere P_Q(\vec{x},t) \\
        \label{eq:forgetruletwo} G_2^\mathrm{t}(\vec{x},t) \rightarrow \alignhere P_Q(\vec{x},t) 
    \end{align}
    Clearly, $Q$ can be constructed in logarithmic space w.r.t.\ the size of
    $Q_1$ and $Q_2$.

    Let $D = \{ A(0) \}$.  We show that $Q_1 \sqsubseteq Q_2$ iff
    $\textsc{Forget}$ holds for
    $\phi(\langle Q_1, Q_2 \rangle) = \langle Q, D, 1, 1, 0 \rangle$.

    Assume that $Q_1 \sqsubseteq Q_2$ holds.  We show that $\textsc{Forget}$
    holds for $\langle Q, D, 1, 1, 0 \rangle$, by showing that
    $Q(D \cup U, \tau)\subseteq Q(D[0] \cup U, \tau)$ for every $1$-update $U$
    and time point $\tau \geq 1$.  Let $\vec{o}$ be a tuple of objects, let $U$
    be a $1$-update and let $\tau \geq 1$ such that
    $\vec{o} \in Q(D \cup U, \tau)$.  Let $\delta$ be a derivation of
    $P_Q(\vec{o}, \tau)$ from $\Pi_Q \cup D \cup U$.  The root of $\delta$ is
    labelled with an instance of either rule~\eqref{eq:forgetruleone} or
    rule~\eqref{eq:forgetruletwo} since $P_Q$ does not occur in
    $\program_1' \cup \program_2'$.  We consider the two cases separately.  If
    the label is an instance of rule \eqref{eq:forgetruleone}, then
    $\Pi_Q \cup D \cup U \models G_1^\mathrm{t}(\vec{o},\tau)$; then
    $\Pi_1' \cup D \cup U \models G_1^\mathrm{t}(\vec{o},\tau)$ by the
    construction of $\Pi_Q$; then
    $\Pi_1' \cup U \models G_1^\mathrm{t}(\vec{o},\tau)$ because facts in
    $\Pi_1'$ all have $t$ as time argument, and $\tau$ satisfies $\tau \geq 1$
    by assumption; then $\vec{o} \in Q_1(U')$ by the construction of $\Pi_1'$,
    where $U'$ is the dataset consisting of each rigid fact $P(\vec{c})$ for
    $P^\mathrm{t}(\vec{c}, \tau) \in U$; then $\vec{o} \in Q_2(U')$ since
    $Q_1\sqsubseteq Q_2$ by assumption; then
    $\Pi_2' \cup U \models G_2^\mathrm{t}(\vec{o},\tau)$ by the construction of
    $\Pi_2'$; then $\Pi_Q \cup U \models G_2^\mathrm{t}(\vec{o},\tau)$ because
    $\Pi_2' \subseteq \Pi_Q$; then $\vec{o} \in Q(U, \tau)$ by
    rule~\eqref{eq:forgetruletwo}, and hence $\vec{o} \in Q(D[0] \cup U, \tau)$
    since $D[0] = \emptyset$.  If the root of $\delta$ is labelled with an
    instance of rule \eqref{eq:forgetruletwo}, then
    $\Pi_Q \cup D \cup U \models G_2^\mathrm{t}(\vec{o}, \tau)$; then
    $\Pi_2' \cup D \cup U \models G_2^\mathrm{t}(\vec{o}, \tau)$ by the
    construction of $\Pi_Q$; then
    $\Pi_2' \cup U \models G_2^\mathrm{t}(\vec{o}, \tau)$ because facts in
    $\Pi_2'$ all have $t$ as time argument and $\tau$ satisfies $\tau \geq 1$
    by assumption; then $\Pi_Q \cup U \models G_2^\mathrm{t}(\vec{o}, \tau)$
    because $\Pi_2' \subseteq \Pi_Q$, and so
    $\Pi_Q \cup D[0] \cup U \models G_2^\mathrm{t}(\vec{o}, \tau)$ because
    $D[0] = \emptyset$.

    For the converse, assume that $\textsc{Forget}$ holds for
    $\langle Q, D, 1, 1, 0 \rangle$, and hence
    $Q(D \cup U, \tau)\subseteq Q(D[0] \cup U, \tau)$ for every $1$-update $U$
    and time point $\tau \geq 1$.  We show $Q_1(D')\subseteq Q_2(D')$ for every
    dataset $D'$.  Let $\vec{o}$ be a tuple of objects and $D'$ a dataset such
    that $\vec{o} \in Q_1(D')$.  Let $U$ be the $1$-update consisting of each
    fact $P^\mathrm{t}(\vec{c}, 2)$ for $P(\vec{c}) \in D'$.  Then,
    $\Pi_1' \cup U \models G_1^\mathrm{t}(\vec{o},2)$ by the construction of
    $\Pi_1'$; then $\Pi_Q \cup U \models G_1^\mathrm{t}(\vec{o},2)$ because
    $\Pi_1' \subseteq \Pi_Q$, and hence $\vec{o} \in Q(D \cup U, 2)$ by rule
    \eqref{eq:forgetruleone}.  It follows that $\vec{o} \in Q(D[0] \cup U, 2)$
    by assumption, and hence $\vec{o} \in Q(U, 2)$ because $D[0] = \emptyset$.
    Then the root of every derivation of $P_Q(\vec{o}, 2)$ from $\Pi_Q \cup U$
    must be an instance of rule \eqref{eq:forgetruletwo}.  Therefore,
    $\Pi_Q \cup U \models G_2^\mathrm{t}(\vec{o}, 2)$; then
    $\Pi_2' \cup U \models G_2^\mathrm{t}(\vec{o}, 2)$ by the construction of
    $\Pi_Q$, and hence $\vec{o} \in Q_2(D')$ by the construction of $\Pi_2'$
    and $U$.
\end{proof}

\forgettingtheorem*
\begin{proof}
    The claim follows by Lemma~\ref{lemma:hardnessforgetting} since query
    containment for datalog is undecidable by the results
    in~\cite{shmueli1993equivalence}.
\end{proof}

\subsubsection{Nonrecursive Case}

\begin{lemma} \label{lem:nr-forget-crit-tp} %
    Let $a$ be the maximum radius of a rule in $\Pi_Q$. For each dataset $D'$,
    predicate $P$, objects $\vec{o}$, time point $\tau$ and set $\mathbf{T}$
    containing $\tau$ and each time point in $\Pi_Q$,
    $\Pi_Q\cup D'\models P(\vec{o},\tau)$ implies
    $\Pi_Q\cup D'[\min(\mathbf{T})-a\cdot\rank(P)-1]\models P(\vec{o},\tau)$.
\end{lemma}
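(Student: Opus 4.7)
The plan is to bound how far into the past any derivation of $P(\vec{o}, \tau)$ from $\Pi_Q \cup D'$ can reach, and then conclude that such a derivation survives restricting $D'$ to its $(\min(\mathbf{T}) - a \cdot \rank(P) - 1)$-segment. Concretely, I fix an arbitrary derivation $\delta$ of $P(\vec{o}, \tau)$ and prove, by induction on the depth $d$ of a node $w$ in $\delta$, that whenever the atom labelling $w$ is temporal, say $Q(\vec{c}, \sigma)$, one has $\sigma \geq \min(\mathbf{T}) - a \cdot d$. The base case is immediate because the root is labelled by $P(\vec{o}, \tau)$ and $\tau \in \mathbf{T}$ by assumption.

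For the inductive step I consider a node at depth $d+1$ carrying a body atom of the rule instance at its parent at depth $d$. If that body atom is rigid, it has no time argument and imposes nothing. If its time argument is an explicit time point of $\Pi_Q$, then $\sigma \in \mathbf{T}$, so $\sigma \geq \min(\mathbf{T})$. Otherwise its time argument involves a temporal variable; by the \connected{} property combined with rule safety, this variable must also appear in the head of the rule, so the body's and head's time arguments differ by at most the rule's radius, and hence by at most $a$. Applying the inductive hypothesis to the parent then gives $\sigma \geq \min(\mathbf{T}) - a(d+1)$, and the first two cases are only stronger.

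Finally, nonrecursivity of $\Pi_Q$ implies that the rank of head predicates strictly decreases along every root-to-leaf path of $\delta$ (since the head predicate of a rule is $\Pi_Q$-dependent on each of its body predicates), so $\delta$ has depth at most $\rank(P)$. Every leaf of $\delta$ is labelled by a fact, and by the inductive claim each temporal leaf fact has time argument at least $\min(\mathbf{T}) - a \cdot \rank(P)$, i.e., strictly greater than $\min(\mathbf{T}) - a \cdot \rank(P) - 1$, so by definition of the segment it lies in $D'[\min(\mathbf{T}) - a \cdot \rank(P) - 1]$; rigid facts are unconditionally retained. Hence $\delta$ is also a derivation from $\Pi_Q \cup D'[\min(\mathbf{T}) - a \cdot \rank(P) - 1]$, which establishes the lemma. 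The delicate point I anticipate is keeping the case split induced by \connected{}ness exhaustive: rules anchored to explicit time points break the uniform ``shift by at most $a$'' behaviour, but since such a step only pins $\sigma$ to a point of $\mathbf{T}$ it strengthens the invariant, and the three cases combine without extra bookkeeping.
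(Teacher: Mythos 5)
Your proof is correct and follows essentially the same approach as the paper's: both arguments bound the temporal drift of each derivation step by the maximum rule radius $a$ (using connectedness to relate body and head time arguments, with explicit time points handled separately) and bound the number of steps by $\rank(P)$ via nonrecursivity. The only difference is cosmetic: you run a single induction on node depth within one fixed derivation and then invoke the depth bound $\rank(P)$ at the leaves, whereas the paper inducts directly on $\rank(P)$ and re-applies the lemma to subderivations, shifting the segment bound via the inclusion $D'[\min(\mathbf{T}_\alpha)-a\cdot n_\alpha-1]\subseteq D'[\min(\mathbf{T})-a\cdot(n+1)-1]$.
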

\begin{proof}
    We proceed by induction on $\rank(P)$. For the base case, let $\rank(P)=0$,
    and suppose $\Pi_Q\cup D'\models P(\vec{o},\tau)$ for some $D'$, $\vec{o}$
    and $\tau$. Since $\rank(P)=0$, $P$ must be EDB (otherwise, facts involving
    $P$ cannot be entailed by $\Pi_Q\cup D'$ as $P$ does not occur in rule heads
    in $\Pi_Q$ and may not occur in $D'$), and hence
    $P(\vec{o},\tau)\in\Pi_Q\cup D'$. But then
    $P(\vec{o},\tau)\in\Pi_Q\cup D'[\min(\mathbf{T})-1]$ since
    $\min(\mathbf{T})\le\tau$; consequently,
    $\Pi_Q\cup D'[\min(\mathbf{T})-1]\models P(\vec{o},\tau)$, as
    required.

    For the inductive step, suppose the claim holds for all predicates of rank at
    most $n$, and let $\Pi_Q\cup D'\models P(\vec{o},\tau)$ where
    $\rank(P)=n+1$. We show
    $\Pi_Q\cup D'[\min(\mathbf{T})-a\cdot(n+1)-1]\models P(\vec{o},\tau)$. Let
    $\delta$ be a derivation of $P(\vec{o},\tau)$ from $\Pi_Q\cup D'$ whose root
    is labelled with an instance $r$ of a rule in $\Pi_Q$. Then, for each
    temporal body atom $\alpha=P'(\vec{o}',\tau')$ of $r$, we have
    $\Pi_Q\cup D'\models\alpha$. Hence, by the inductive hypothesis, for each
    such $\alpha$ we have
    $\Pi_Q\cup D'[\min(\mathbf{T}_\alpha)-a\cdot n_\alpha-1]\models\alpha$, where
    $\mathbf{T}_\alpha$ is the set consisting of $\tau'$ and each time point in
    $\Pi_Q$, and $n_\alpha=\rank(P')\le n$. Note that $\tau'$ is either a time
    point in $\mathbf{T}$ or $\tau'\ge\tau-a$; thus,
    $\min(\mathbf{T}_\alpha)\ge\min(\mathbf{T})-a$, and hence
    $\min(\mathbf{T}_\alpha)-a\cdot
    n_\alpha-1\ge\min(\mathbf{T})-a\cdot(n_\alpha+1)-1\ge\min(\mathbf{T})-a\cdot(n+1)-1$,
    where the last inequality holds since $n_\alpha\le n$. Consequently, for each
    $\alpha$,
    $D'[\min(\mathbf{T}_\alpha)-a\cdot n_\alpha-1]\subseteq
    D'[\min(\mathbf{T})-a\cdot (n+1)-1]$, and hence
    $\Pi_Q\cup D'[\min(\mathbf{T})-a\cdot (n+1)-1]\models\alpha$ by monotonicity
    of entailment. On the other hand, for all rigid body atoms $\beta$ in $r$,
    $\Pi_Q\cup D'\models\beta$ implies
    $\Pi_Q\cup D'[\min(\mathbf{T})-a\cdot(n+1)-1]\models\beta$ since $\Pi_Q$ is \connected{} and hence the validity of $\beta$ does not depend on temporal facts.  
    The claim then follows by $r$.
\end{proof}

\begin{lemma} \label{lem:small-updates} %
    Let $a$ be the maximum radius of a rule in $\Pi_Q$ and let $\mathbf{T}$ be
    the set of time points in $\Pi_Q$.  For each time point $\tau_a$, each
    dataset $D'$, and each time point
    $\tau \leq \tau_0 + a \cdot (\rank(\Pi_Q) - \rank(P))$, where $\tau_0$ is the
    maximum among $\tau_a$ and the time points in $\mathbf{T}$,
    $\Pi_Q \cup D' \models P(\vec{o}, \tau)$ implies
    $\Pi_Q \cup D'' \models P(\vec{o}, \tau)$, where $D''$ consists of each fact
    in $D'$ with time argument $\tau'$ satisfying
    $\tau' \leq \tau_0 + \radius(\Pi_Q)$.
\end{lemma}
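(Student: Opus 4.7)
The plan is to prove the lemma by induction on $\rank(P)$, mirroring the structure of Lemma~\ref{lem:nr-forget-crit-tp} but bounding time arguments from above rather than from below. The overall strategy is to argue that, if $\tau$ satisfies the rank-dependent bound stated in the lemma, then every derivation of $P(\vec{o},\tau)$ can be taken to use only facts with time argument at most $\tau_0 + \radius(\Pi_Q)$, because the time arguments appearing in subderivations decrease (in terms of distance from $\tau_0$) at a controlled rate governed by the radius $a$ and the rank of the predicates involved.

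For the base case $\rank(P) = 0$, the predicate $P$ cannot appear in the head of any rule, so $P(\vec{o},\tau) \in D'$. The hypothesis gives $\tau \le \tau_0 + a \cdot \rank(\Pi_Q)$, and since $\radius(\Pi_Q)$ equals the number of rules in $\Pi_Q$ times the maximum rule radius $a$, while $\rank(\Pi_Q)$ is bounded by the number of predicates (and hence by the number of rules), we have $a \cdot \rank(\Pi_Q) \le \radius(\Pi_Q)$, so $P(\vec{o},\tau) \in D''$ and the claim is immediate.

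For the inductive step, assume the claim for predicates of rank at most $n$, and let $\rank(P) = n+1$ with $\Pi_Q \cup D' \models P(\vec{o},\tau)$. Fix a derivation with root labelled by a ground instance $r$ of a rule $r' \in \Pi_Q$, and consider an arbitrary body atom $\alpha = P'(\vec{o}', \tau')$ of $r$. It suffices to show $\Pi_Q \cup D'' \models \alpha$ and then conclude via $r$. Since $\Pi_Q$ is \connected{}, there are two cases for the time term of the corresponding atom of $r'$. If it mentions a time variable, then $|\tau' - \tau| \le a$, and combined with the hypothesis $\tau \le \tau_0 + a \cdot (\rank(\Pi_Q) - n - 1)$ and the fact that $\rank(P') \le n$, we obtain $\tau' \le \tau + a \le \tau_0 + a \cdot (\rank(\Pi_Q) - \rank(P'))$, which is exactly the form required by the inductive hypothesis. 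If the time term is a constant, then $\tau' \in \mathbf{T}$, and so $\tau' \le \tau_0$ by definition of $\tau_0$, which again fits the inductive hypothesis for any rank. In either case, the IH yields $\Pi_Q \cup D'' \models \alpha$. Rigid body atoms of $r'$ (if any) carry no time argument, so their derivability is unaffected by the time-based truncation from $D'$ to $D''$.

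The main obstacle will be the careful bookkeeping of the rank-dependent bound $\tau_0 + a \cdot (\rank(\Pi_Q) - \rank(P))$: we must verify in each case of the inductive step that the time argument of a body atom $\alpha$ satisfies the analogous bound with $\rank(P')$ in place of $\rank(P)$, making essential use both of the \connected{}ness assumption (so that a body time term involving a variable is within radius $a$ of the head time term) and of the strict rank decrease $\rank(P') < \rank(P)$ that comes from $P$ being $\Pi_Q$-dependent on $P'$. No further subtlety is needed beyond matching these inequalities; as in Lemma~\ref{lem:nr-forget-crit-tp}, once the bound is propagated correctly across one inductive step, the rest is routine.
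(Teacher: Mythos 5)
Your proposal is correct and follows essentially the same argument as the paper: induction on $\rank(P)$, a case split on whether a body atom's time term mentions a variable (bounded displacement by $a$ via \connected{}ness) or is a constant in $\mathbf{T}$, and the observation that $a\cdot\rank(\Pi_Q)\le\radius(\Pi_Q)$ to close the base case. The only (immaterial) difference is that the paper's base case also accounts for rank-$0$ facts that occur in $\Pi_Q$ itself rather than in $D'$.
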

\begin{proof}
    Assume $\Pi_Q \cup D' \models P(\vec{o}, \tau)$ for each
    $\tau \leq \tau_0 + a \cdot (\rank(\Pi_Q) - \rank(P))$.  We prove
    $\Pi_Q \cup D'' \models P(\vec{o}, \tau)$ by induction on the rank of $P$.

    In the base case, $\rank(P) = 0$.  Since $P$ occurs only in facts,
    $P(\vec{o}, \tau) \in \Pi_Q \cup D'$.  If $\tau \in \mathbf{T}$, then
    $P(\vec{o}, \tau) \in \Pi_Q$, and hence
    $P(\vec{o}, \tau) \in \Pi_Q \cup D''$.  Otherwise, $P(\vec{o}, \tau) \in D'$
    and $\tau \leq \tau_0 + a\cdot\rank(\Pi_Q)\le\tau_0 + \radius(\Pi_Q)$, and
    hence $P(\vec{o}, \tau) \in \Pi_Q \cup D''$ by the definition of $D''$. In
    either case, the claim follows.

    For the inductive step, we assume that the claim holds for every predicate of
    rank at most $n$ and we show it for $\rank(P) = n + 1$.  Let $\delta$ be a
    derivation of $P(\vec{o}, \tau)$ from $\Pi_Q \cup D'$, let $r$ label the root
    of $\delta$, and let $r'$ be a rule in $\Pi_Q$ such that $r$ is an instance
    of $r'$.  It suffices to show $\Pi_Q \cup D'' \models \alpha$ for each atom
    $\alpha \in \body(r)$.  Let $\alpha$ be an arbitrary such atom.  Since
    $\delta$ is a derivation, $\Pi_Q \cup D' \models \alpha$. We distinguish two
    cases.

    If $\alpha$ is rigid, the claim follows from $\Pi_Q \cup D' \models \alpha$
    since $D'$ and $D''$ coincide on rigid facts and the
    validity of $\alpha$ depends only on rigid facts 
    because $\Pi_Q$ is \connected{}.

    Otherwise, we have $\alpha = P_1(\vec{o}_1, \tau_1)$. If
    $\tau_1 \in \mathbf{T}$, then
    $\tau_1 \leq \tau_0\leq\tau_0+a\cdot(\rank(\Pi_Q)-\rank(P_1))$; then
    $\Pi_Q \cup D'' \models \alpha$ by the inductive hypothesis.  Now, let
    $\tau_1 \notin \mathbf{T}$.  Then the atom corresponding to $\alpha$ in $r'$
    mentions a time variable $t$ and $\head(r')$ mentions the same variable $t$
    because $r'$ is \connected{}; hence, we have
    $\vert \tau_1 - \tau \vert \leq a$, and thus
    $\tau_1 \leq \tau_0 + a \cdot (\rank(\Pi_Q) - n)\le \tau_0 + a \cdot
    (\rank(\Pi_Q) - \rank(P_1))$.  Therefore, $\Pi_Q \cup D'' \models \alpha$ by
    the inductive hypothesis.
\end{proof}

\nrforgettingcorrectness*
\begin{proof}[Proof sketch]
    Assume $Q_1 \sqsubseteq Q_2$.  We prove that \textsc{Forget} holds
    for $I$ by showing that $Q(D \cup U, \tau)\subseteq Q(D[\tmem] \cup
    U, \tau)$ for every $\tin$-update $U$ and time point $\tau \geq
    \tout$.
    Note that $Q$ is \connected{} and constant-free and, hence, 
    so are $Q_1$ and $Q_2$.
    We argue that it suffices to show $Q(D \cup U, \tau)\subseteq
    Q(D[\tmem] \cup U, \tau)$ for every $\tin$-update $U$ and time point
    $\tau$ satisfying $\tout \leq \tau \leq \tmem + \radius(Q)$.  If
    $\vec{o} \in Q(D \cup U, \tau)$ for $\tau > \tmem + \radius(Q)$,
    then $\vec{o} \in Q(D[\tau - \radius(Q) - 1] \cup U, \tau)$ by
    Lemma~\ref{lem:nr-forget-crit-tp}, and hence $\vec{o} \in Q(D[\tmem] \cup
    U, \tau)$.  Furthermore, by Lemma~\ref{lem:small-updates}, it
    suffices to show that $\vec{o} \in Q(D \cup U, \tau)$ implies
    $\vec{o} \in Q(D[\tmem] \cup U, \tau)$ for every tuple $\vec{o}$,
    $\tin$-update $U$ with time points smaller than or equal to $\tmem + 2\cdot
    \radius(Q)$, and time point $\tau$ satisfying $\tout \leq \tau
    \leq \tmem + \radius(Q)$.
    Now, let $\vec{o} \in Q(D \cup U, \tau)$ 
    for $U$ a $\tin$-update with time points smaller than or equal to $\tmem + 2\cdot\radius(Q)$,
    and $\tau$ satisfying $\tout \leq \tau \leq \tmem + \radius(Q)$.
    We have that $\vec{o} \in Q_1(U, \tau)$ by construction of $Q_1$.
    Hence $\vec{o} \in Q_2(U, \tau)$ by our assumption,
    and hence $\vec{o} \in Q(D[\tmem] \cup U, \tau)$ by construction of $Q_2$,
    as required.

    For the converse, assume $Q_1 \not \sqsubseteq Q_2$.  There is a
    time point $\tau$, dataset $U$ and a tuple $\vec{o}$ such that
    $\vec{o} \in Q_1(U, \tau)$ and $\vec{o} \notin Q_2(U, \tau)$.  In
    particular, $\vec{o} \in Q_1(U', \tau)$ holds for the subset of
    $U$ containing time points $\tau'$ with $\tin < \tau' \leq \tmem +
    2\cdot\radius(Q)$ by construction of $Q_1$.  Note that $U'$ is a
    $\tin$-update.  Furthermore, $\tau$ satisfies $\tout \leq \tau
    \leq \tmem + \radius(Q)$ by construction of $Q_1$.  Then, $\vec{o}
    \in Q_1(U', \tau)$ implies $\vec{o} \in Q(D \cup U', \tau)$, and
    $\vec{o} \notin Q_2(U, \tau)$ implies $\vec{o} \notin Q_2(U',
    \tau)$ by monotonicity of entailment, which implies $\vec{o}
    \notin Q(D[\tmem] \cup U', \tau)$.  Therefore, \textsc{Forget}
    does not hold for $I$.
\end{proof}

\nrforgettingtheorem*
\begin{proof}[Proof sketch]
    Let $Q_1$ and $Q_2$ be the left and right critical queries for $I$.
    In order to check whether $\textsc{Forget}$ holds for $I$, it
    suffices to check $Q_1 \sqsubseteq Q_2$ by
    Lemma~\ref{lem:forgetting-correctness}.  Clearly, $Q_1$ and $Q_2$
    can be built in polynomial time.  We argue next that $Q_1
    \sqsubseteq Q_2$ can be checked in polynomial time.

    Let $E_i^\tau$, for a time point $\tau$, be the (temporal) UCQ
    consisting of (the leaves of) each maximal unfolding of $Q_i$
    starting with the atom $P_Q(\vec{x},\tau)$, and let
    $E_i=\bigvee_{\tout\le\tau\le\tmem+\radius(Q)}E_i^\tau$. Clearly,
    $E_i$ is equivalent to $Q_i$ since, by construction, $Q_i$ can only
    derive facts about $P_Q$ between $\tout$ and
    $\tmem+\radius(Q)$. Note that if $Q$ is fixed, then for each
    $E_i^\tau$, the number of conjuncts in each CQ in $E_i^\tau$ is
    bounded by a constant $c$, the arity of each such conjunct is
    bounded by a constant $a$, and the number of CQs in each $E_i^\tau$
    is bounded by $r^{\rank(\Pi_Q)+1}$, where $r$ is the number of rules
    in $\Pi_i'$; importantly, $r$ and hence $r^{\rank(\Pi_Q)+1}$ is
    polynomial in the size of the input. Moreover, since $Q$ is \connected{}, 
    $E_i$ mentions no time variables, and hence each
    temporal CQ in $E_i$ can be equivalently seen as a nontemporal CQ.

    In order to check the containment $Q_1 \sqsubseteq Q_2$, we can
    equivalently check whether for each CQ $Q_1'$ in $E_1$ there is a CQ
    $Q_2'$ in $E_2$ such that $Q_1' \sqsubseteq Q_2'$; for this, it is
    well-known that we can equivalently check whether there is a
    containment mapping from $Q_2'$ to $Q_1'$.  The number of pairs of
    queries to check is bounded by $(\tmem+\radius(Q)-\tout+1)\cdot
    r^{2(\rank(\Pi_Q)+1)}$.  Furthermore, the size and number of
    possible containment mappings is bounded (resp., polynomially and
    exponentially) in $c\cdot a$. Hence, we can generate all the
    possible containment mappings for a pair of queries in constant
    time, and check them in polynomial time.
\end{proof}

\section{Containment of nonrecursive queries}

\nrtcqinconexptime*
\begin{proof}[Proof sketch]
    We proceed by a reduction
    to datalog query containment.  Without loss of generality,
    $P_{Q_1} = P_{Q_2} = G$ for some predicate $G$.

    If $G$ is rigid, since we have assumed that $Q_1$ and $Q_2$ contain no time points and are \connected{}, $G$ recursively depends only on rigid
    predicates. Therefore, it suffices to consider the containment problem for
    the nonrecursive datalog subprograms of $\Pi_{Q_1}$ and $\Pi_{Q_2}$, which is
    \conexptimecomplete{} by the results in~\cite{benedikt2010impact}.

    If $G$ is temporal, let $m$ be the maximum between the radiuses of
    $\Pi_{Q_1}$ and $\Pi_{Q_2}$, and let $\mathbf{T}$ be the time interval
    $[0,2m]$.  Let $\Pi_i$ be the grounding of $\Pi_{Q_i}$ on the temporal
    arguments with time points in $\mathbf{T}$. Note that, since we have assumed that $Q_1$ and $Q_2$ are \connected{}, the size of $\Pi_i$ is bounded by $m$
    times the size of $\Pi_{Q_i}$, i.e., cubically in the size of $\Pi_{Q_i}$.
    Let $G'$ be a fresh temporal IDB predicate of the same arity as $G$.  Let
    $\Pi_i'$ be $\Pi_i$ extended with the rule
    \begin{equation*}
        G(\vec{x}, m) \to G'(\vec{x}, m)
    \end{equation*}
    Let $Q'_i$ be the query $\langle G', \Pi_i' \rangle$.  We have that
    $Q_1 \sqsubseteq Q_2$ iff $Q_1' \sqsubseteq Q_2'$, because
    \begin{inparaenum}[\it(i)]
    \item derivations of facts at $m$ involve only time points in $\mathbf{T}$ and,
    \item for each dataset $D$ and time point $n$, there is a dataset $D'$ such
        that all derivations of facts at $n$ w.r.t.\ $Q_1\cup D$ and $Q_2\cup D$
        are isomorphic to derivations of facts at $m$ w.r.t.\ $Q_1\cup D'$ and
        $Q_2\cup D'$.
    \end{inparaenum}
    Finally, each $Q_i'$ is temporally ground, and hence can be seen as a datalog
    query.  The claim once again follows by~\cite{benedikt2010impact} since
    $Q_1'$, $Q_2'$ are polynomial in $Q_1$, $Q_2$.
\end{proof}

\section{Proofs for Section~\ref{sec:delay}}

\subsubsection{Delay}

\begin{lemma} \label{lemma:hardnessdidelay} %
    There exists a \logspace{}-computable many-one reduction $\phi$ from
    containment of datalog queries to $\textsc{Delay}$ such that, for every
    instance $I = \langle Q_1, Q_2 \rangle$ of query containment, the query in
    $\phi(I)$ is nonrecursive if $Q_1$ and $Q_2$ are nonrecursive.
\end{lemma}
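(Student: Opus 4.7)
The plan is to adapt the reduction used in Lemma~\ref{lemma:hardnessforgetting} so that a future $\tin$-update, rather than the absence of forgotten history, is what ``triggers'' the application of $\Pi_{Q_1}$. Concretely, given an instance $I = \langle Q_1, Q_2 \rangle$ of datalog query containment with $P_{Q_1} = P_{Q_2} = G$, I would build the reduction $\phi$ as follows. For every rigid $n$-ary IDB predicate $P$ occurring in $\Pi_{Q_i}$, introduce a fresh rigid IDB predicate $P_i$ uniquely associated with $P$ and $i$; for every rigid $n$-ary predicate $P$, introduce a fresh temporal $(n{+}1)$-ary predicate $P^\mathrm{t}$ of the same sort (EDB iff $P$ is EDB) and let $\Pi_i'$ be obtained from $\Pi_{Q_i}$ by first replacing each rigid IDB $P$ with $P_i$, and then replacing every remaining rigid atom $P(\vec{u})$ with $P^\mathrm{t}(\vec{u}, t)$ for a common time variable $t$. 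Let $A$ be a fresh temporal unary EDB predicate and $P_Q$ a fresh temporal IDB predicate of the arity of $G_1^\mathrm{t}$; set $\program_Q = \Pi_1' \cup \Pi_2'$ extended with
\begin{align*}
    G_1^\mathrm{t}(\vec{x}, t) \wedge A(t+1) &\to P_Q(\vec{x}, t) \\
    G_2^\mathrm{t}(\vec{x}, t) &\to P_Q(\vec{x}, t)
\end{align*}
and define $\phi(I) = \langle Q, 0 \rangle$. The reduction is clearly computable in logarithmic space, and $Q$ is nonrecursive whenever $Q_1$ and $Q_2$ are.

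Next I would establish correctness in two directions. For the forward direction, assuming $Q_1 \sqsubseteq Q_2$, I would show $Q(D \cup U, \tin) \subseteq Q(D, \tin)$ for every $\tin$, $\tin$-history $D$ and $\tin$-update $U$ (the reverse inclusion is immediate by monotonicity). A derivation of $P_Q(\vec{o}, \tin)$ from $\program_Q \cup D \cup U$ must end with an instance of one of the two new rules. If it ends with the rule headed by $G_2^\mathrm{t}$, I would use the structural observation that every atom in a rule of $\Pi_2'$ uses the same time variable $t$, so each derivation of $G_2^\mathrm{t}(\vec{o}, \tin)$ uses only facts at time $\tin$, which lie in $D$; translating this derivation back to $Q_2$ and the rigid dataset $D' = \{P(\vec{c}) \mid P^\mathrm{t}(\vec{c}, \tin) \in D\}$ gives $\vec{o} \in Q_2(D')$, hence $\vec{o} \in Q(D, \tin)$. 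Otherwise the derivation ends with the first rule and uses some $A(\tin+1) \in U$, and likewise $\vec{o} \in Q_1(D')$; then $Q_1 \sqsubseteq Q_2$ yields $\vec{o} \in Q_2(D')$, so $\vec{o} \in Q(D, \tin)$ via the second rule.

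For the converse, assume $Q_1 \not\sqsubseteq Q_2$, witnessed by some $\vec{o}$ and datalog dataset $D'$ with $\vec{o} \in Q_1(D') \setminus Q_2(D')$. Set $\tin = 0$, let $D = \{P^\mathrm{t}(\vec{c}, 0) \mid P(\vec{c}) \in D'\}$, and $U = \{A(1)\}$. By construction, $\program_Q \cup D \cup U \models P_Q(\vec{o}, 0)$ via the $G_1^\mathrm{t}$-rule, using the bijection between datalog derivations over $D'$ and derivations of $G_1^\mathrm{t}$-facts at $0$. On the other hand, $A$ is absent from $D$ and $G_2^\mathrm{t}(\vec{o}, 0)$ would correspond, by the same bijection, to $\vec{o} \in Q_2(D')$, which fails. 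Hence $\vec{o} \notin Q(D, 0)$, contradicting $\textsc{Delay}$ for $\phi(I)$.

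The main obstacle, as with Lemma~\ref{lemma:hardnessforgetting}, is justifying the ``time-$t$ locality'' claim: any derivation of $G_i^\mathrm{t}(\vec{o}, \tau)$ from $\Pi_i' \cup D \cup U$ uses only facts at time $\tau$, so it corresponds to a datalog derivation of $G(\vec{o})$ in $Q_i$ from the rigid projection of the time-$\tau$ slice of $D \cup U$. This should follow by straightforward induction on derivations, using that (i) every rule of $\Pi_i'$ has the same time variable $t$ in every atom, and (ii) the IDB renaming prevents the two temporalized programs from interfering; the remaining verification that both implications relate $Q_1 \sqsubseteq Q_2$ to $\textsc{Delay}$ for $\phi(I)$ is then routine.
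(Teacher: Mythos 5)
Your reduction is exactly the one the paper uses (same temporalisation of the two programs via disjoint IDB renamings, the same two bridging rules with the update-triggered $A(t+1)$ guard, and delay parameter $0$), and your correctness argument rests on the same time-$t$ locality observation; the only cosmetic difference is that you argue the right-to-left direction by contraposition where the paper argues it directly. The proposal is correct and essentially identical to the paper's proof.
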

\begin{proof}
    Let $\langle Q_1, Q_2 \rangle$ be an instance of query containment with $Q_1$
    and $Q_2$ datalog queries.  Without loss of generality,
    $P_{Q_1} = P_{Q_2} = G$.  For any rigid $n$-ary IDB predicate $P$ and for
    $i \in \{ 1,2 \}$, let $P_i$ be a fresh rigid $n$-ary IDB predicate uniquely
    associated with $P$ and $i$.  For any rigid $n$-ary EDB (resp., IDB)
    predicate $P$, let $P^\mathrm{t}$ be a fresh temporal $(n+1)$-ary EDB (IDB)
    predicate uniquely associated with $P$.  Let $t$ be a time variable.  For
    $i \in \{ 1,2 \}$, let $\program_i$ be $\program_{Q_i}$ after replacing each
    rigid $n$-ary IDB predicate $P$ with $P_i$; and let $\program_i'$ be
    $\program_i$ after replacing each rigid atom $P(\vec{u})$ with the temporal
    atom $P^\mathrm{t}(\vec{u}, t)$.  Let $A$ be a fresh temporal unary EDB
    predicate.  Let $Q$ be the query such that $P_Q$ is a fresh temporal IDB
    predicate of the same arity as $G_i^\mathrm{t}$, and $\program_Q$ is
    $\program_1' \cup \program_2'$ extended with the following rules:
    \begin{align}
        \label{eq:didelayruleone} A(t+1) \wedge G_1^\mathrm{t}(\vec{x},t) \rightarrow \alignhere P_Q(\vec{x},t) \\
        \label{eq:didelayruletwo} G_2^\mathrm{t}(\vec{x},t) \rightarrow \alignhere P_Q(\vec{x},t) 
    \end{align}
    It is easily seen that $Q$ can be constructed in logarithmic space w.r.t.\
    the size of $Q_1$ and $Q_2$.

    We show that $Q_1 \sqsubseteq Q_2$ if and only if $\textsc{Delay}$ holds for $\phi(\langle Q_1, Q_2 \rangle) = \langle Q, 0 \rangle$.

    Assume that $Q_1 \sqsubseteq Q_2$ holds.  We prove that
    $\textsc{Delay}(Q, 0)$ is true by showing
    $Q(D \cup U, \tau)\subseteq Q(D, \tau)$ for every dataset $D$, time point
    $\tau$, and $\tau$-update $U$.  Let $\vec{o}$ be a tuple of objects, $D$ a
    dataset, $\tau$ a time point, and let $U$ be a $\tau$-update such that
    $\vec{o} \in Q(D \cup U, \tau)$.  Let $\delta$ be a derivation of
    $P_Q(\vec{o}, \tau)$ from $\Pi_Q \cup D \cup U$.  Then the root of
    $\delta$ is labelled with an instance of either rule~\eqref{eq:didelayruleone} or
    rule~\eqref{eq:didelayruletwo}, since $P_Q$ does not occur in
    $\Pi_1' \cup \Pi_2'$.  First, suppose the root is labelled with an instance of
    rule~\eqref{eq:didelayruleone}.  Clearly,
    $\Pi_Q \cup D \cup U \models G_1^\mathrm{t}(\vec{o}, \tau)$. Since the
    atoms of all rules in $\Pi_Q$ but \eqref{eq:didelayruleone} have $t$ as a
    time argument, no derivation $\delta'$ of $G_1^\mathrm{t}(\vec{o}, \tau)$
    from $\Pi_Q \cup D \cup U$ contains a time point different from $\tau$,
    and hence no atom in $U$; it follows that $\delta'$ is also a derivation
    of $G_1^\mathrm{t}(\vec{o}, \tau)$ from $\Pi_Q \cup D$, and hence
    $\Pi_Q \cup D \models G_1^\mathrm{t}(\vec{o}, \tau)$, which implies
    $\vec{o} \in Q_1(D')$ by construction of $Q$, where $D'$ consists of each
    fact $P(\vec{c})$ for $P(\vec{c}, \tau) \in D$.  Therefore, since
    $Q_1\sqsubseteq Q_2$ by assumption, $\vec{o} \in Q_2(D')$. Thus,
    $\Pi_Q \cup D \models G_2^\mathrm{t}(\vec{o}, \tau)$ by the construction of
    $Q$, and hence $\vec{o} \in Q(D, \tau)$ by rule~\eqref{eq:didelayruletwo}.
    Now, suppose the root of $\delta$ is labelled with an instance of
    rule~\eqref{eq:didelayruletwo}.  Clearly,
    $\Pi_Q \cup D \cup U \models G_2^\mathrm{t}(\vec{o}, \tau)$. Since the
    atoms of all rules in $\Pi_Q$ but \eqref{eq:didelayruleone} have $t$ as a
    time argument, no derivation $\delta'$ of $G_2^\mathrm{t}(\vec{o}, \tau)$
    from $\Pi_Q \cup D \cup U$ contains a time point different from $\tau$,
    and hence no atom in $U$; it follows that $\delta'$ is also a derivation
    of $G_2^\mathrm{t}(\vec{o}, \tau)$ from $\Pi_Q \cup D$, and hence
    $\Pi_Q \cup D \models G_2^\mathrm{t}(\vec{o}, \tau)$.  Thus,
    $\vec{o} \in Q(D, \tau)$ by rule~\eqref{eq:didelayruletwo}.

    For the converse, assume that $\textsc{Delay}(Q, 0)$ is true.  We prove
    $Q_1 \sqsubseteq Q_2$ by showing $Q_1(D)\subseteq Q_2(D)$ for every dataset
    $D$.  Let $\vec{o}$ be a tuple of objects and $D$ a dataset such that
    $\vec{o} \in Q_1(D)$.  By construction,
    $\Pi_Q \cup D' \models G_1^{\mathrm{t}}(\vec{o}, \tau)$ for each time point
    $\tau$, where $D'$ consists of each fact $P(\vec{c}, \tau)$ for
    $P(\vec{c}) \in D$.  By rule~\eqref{eq:didelayruleone},
    $\vec{o} \in Q(D' \cup U, \tau)$, where $U$ is the $\tau$-update containing
    $A(\tau + 1)$.  Since $\textsc{Delay}(Q, 0)$ is true by assumption, we have
    that $\vec{o} \in Q(D', \tau)$.  Since $A(\tau + 1) \notin D'$ and $P_Q$
    does not occur in $\Pi_1' \cup \Pi_2'$, the root of any derivation of
    $P_Q(\vec{o}, \tau)$ from $\Pi_Q \cup D'$ must be labelled with an instance of
    rule~\eqref{eq:didelayruletwo}, and hence
    $\Pi_Q \cup D' \models G_2^\mathrm{t}(\vec{o}, \tau)$.  Therfore,
    $\vec{o} \in Q_2(D)$ by construction.
\end{proof}

\undecidabilitydelay*
\begin{proof}
    The claim follows by Lemma~\ref{lemma:hardnessdidelay}
    since query containment for datalog is undecidable by the results in~\cite{shmueli1993equivalence}.
\end{proof}

\nrdelaytheorem*
\begin{proof}
    We first prove hardness.
    The reduction $\phi$ in Lemma~\ref{lemma:hardnessdidelay} is such that,
    for each instance $I = \langle Q_1, Q_2 \rangle$ of query containment,
    the query in $\phi(I)$ is \connected{},
    and is also nonrecursive constant-free if $Q_1$ and $Q_2$ are nonrecursive constant-free.
    Furthermore,
    query containment is 
    \conexptimehard{} already for nonrecursive constant-free datalog queries,
    by the results in~\cite{benedikt2010impact}.

    For the upper bound, we show that there is a \logspace{}-computable many-one
    reduction $\phi$ from \textsc{Delay} restricted to nonrecursive queries to
    query containment for nonrecursive queries; the result then follows by
    Lemma~\ref{lem:containment}.
    Consider the construction given in Section~\ref{sec:delay}.
    We show that $Q_1 \sqsubseteq Q_2$ iff $\textsc{Delay}(Q, d)$ holds.  

    Assume $Q_1 \sqsubseteq Q_2$.  We show that $\textsc{Delay}(Q, d)$
    holds by showing that $Q(D \cup U, \tin - d)\subseteq Q(D, \tin - d)$ 
    for every $\tin$-history $D$, time point $\tin$, 
    and $\tin$-update $U$. 
    Let $\vec{o} \in Q(D \cup U, \tin - d)$ for any tuple $\vec{o}$,
    time point $\tin$, $\tin$-history $D$, and $\tin$-update $U$.  
    We assume without loss of generality that $A$ does not occur in 
    $D \cup U$ since $A$ does not occur in $\Pi_Q$.  
    We show $\vec{o} \in Q(D, \tin - d)$.  
    Let $D'=\{A(\tin - d)\}$.  
    Since $\Pi_Q \subseteq \Pi_1$, $\vec{o} \in Q(D \cup U, \tin - d)$ 
    implies $\Pi_1 \cup D \cup U \models P_Q(\vec{o}, \tin - d)$; 
    hence, $\vec{o} \in Q_1(D \cup D' \cup U, \tin - d)$ 
    by rule~\eqref{eq:delayruleone}.  
    Then, $\vec{o} \in Q_2(D \cup D' \cup U, \tin - d)$ by our assumption.  
    Note that $\Pi_2 \cup D \cup D' \cup U \not \models B(\tau)$ 
    for any $\tau > \tin$ because $B$ is IDB and can only be derived 
    by rule~\eqref{eq:delayruletwo}, and $A$ does not occur in $D \cup U$; 
    hence no fact $P(\vec{c}, \tau)$ occurs in a derivation of 
    $P_Q(\vec{o}, \tin - d)$ from $\Pi_2 \cup D \cup D' \cup U$, 
    since predicate $P$ occurs only in rule \eqref{eq:delayrulethree}, 
    which requires $B(\tau)$; hence $\vec{o} \in Q_2(D \cup D', \tin - d)$ 
    because $D$ is a $\tin$-history and $U$ is a $\tin$-update;
    and hence $\vec{o} \in Q(D, \tin -d)$ 
    by the construction of $Q_2$.

    Assume $Q_1 \not\sqsubseteq Q_2$, and hence there is a tuple
    $\vec{o}$, a time point $\tau$, and a dataset $D$ such that
    $\vec{o} \in Q_1(D, \tau)$ and $\vec{o} \notin Q_2(D, \tau)$.  Let
    $D'$ contain each fact in $D$ with time argument at most
    $\tin=\tau+d$, and let $U = D \setminus D'$---note that $U$ is a
    $\tin$-update.  We show that $\textsc{Delay}(Q, d)$ does not hold
    by showing $\vec{o} \in Q(D' \cup U, \tin - d)$ and $\vec{o}
    \notin Q(D', \tin - d)$ (where $\tin-d=\tau$).

    We first show $\vec{o} \in Q(D' \cup U, \tin - d)$.  We have that
    $\vec{o} \in Q_1(D' \cup U, \tin - d)$ because $D = D' \cup U$;
    hence $\Pi_1 \cup D' \cup U \models P_Q(\vec{o}, \tin - d)$ by
    rule~\eqref{eq:delayruleone}; hence $\vec{o} \in Q(D' \cup U, \tin
    - d)$ because $\Pi_1$ is $\Pi_Q$ extended with
    rule~\eqref{eq:delayruleone}, which derives $G$ that does not
    occur in $\Pi_Q$.

    We next show $\vec{o} \notin Q(D', \tin - d)$.  Let $D''$ be the
    set consisting of each fact in $D'$ with time argument $\tau$
    satisfying $\tin - d - \radius(Q) \leq \tau$.  Note that each time
    point in $D''$ has time argument $\tau$ satisfying $\tin - d -
    \radius(Q) \leq \tau \leq \tin$, since the time points in $D'$ are
    at most $\tin$.  We have that $\vec{o} \notin Q_2(D, \tin - d)$
    implies $\vec{o} \notin Q_2(D'', \tin - d)$ by monotonicity of
    entailment because $D'' \subseteq D$.  We show by contraposition
    that $\vec{o} \notin Q_2(D'', \tin - d)$ implies $\vec{o} \notin
    Q(D'', \tin - d)$.  Let $\delta$ be a derivation of $P_Q(\vec{o},
    \tin - d)$ from $\Pi_Q \cup D''$.  Let $\delta'$ be the derivation
    obtained from $\delta$ by first adding a fresh root labelled with
    the proper instance of rule~\eqref{eq:delayruleone} and having the
    root of $\delta$ as a child; then replacing each EDB predicate $P$
    from $\Pi_Q$ with the corresponding $P'$; then, for each node $v$
    and for each atom $P'(\vec{c}, \tau)$ in the body of the label of
    $v$, we add a child to $v$ labelled with the proper instance of
    rule~\eqref{eq:delayrulethree}; and finally, for each node having
    $B$ in the body of its label, we add the proper instance of
    rule~\eqref{eq:delayruletwo}.  We have that $\delta'$ is a
    derivation of $G(\vec{o}, \tin -d)$ from $\Pi_2 \cup D''$
    because:
    \begin{inparaenum}[\it(i)]
    \item $A(\tin - d)$ is in $D''$ since $\vec{o} \in Q_1(D' \cup U,
        \tin - d)$, and
    \item there is an instance of rule~\eqref{eq:delayruletwo}
        deriving $B(\tau)$ from $A(\tin - d)$ for each $P(\vec{c}, \tau)
        \in D''$, since we have that $\tau$ satisfies $\tin - d -
        \radius(Q) \leq \tau \leq \tin$ as observed before.
    \end{inparaenum}
    Finally, $\vec{o} \notin Q_2(D'', \tin - d)$ implies $\vec{o}
    \notin Q_2(D', \tin - d)$ by Lemma~\ref{lem:nr-forget-crit-tp}.
\end{proof}

\subsubsection{Window}

\begin{lemma} \label{lemma:hardnesswindow} %
    There exists a \logspace{}-computable many-one reduction $\phi$ from
    containment of datalog queries to $\textsc{Window}$ such that, for every
    instance $I = \langle Q_1, Q_2 \rangle$ of query containment, the query in
    $\phi(I)$ is nonrecursive if $Q_1$ and $Q_2$ are nonrecursive.
\end{lemma}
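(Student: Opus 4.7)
The plan is to adapt the reduction of Lemma~\ref{lemma:hardnessdidelay} by shifting the role of the auxiliary predicate $A$ from ``only available in the update'' to ``only available in the forgotten portion of the history'', and setting the Window parameters to $d=0$ and $s=0$. Concretely, given a containment instance $\langle Q_1,Q_2\rangle$ with common output predicate $G$, I would reuse the temporalisation from the Delay reduction to obtain $\Pi_1',\Pi_2'$ with fresh temporal IDB predicates $G_1^\mathrm{t},G_2^\mathrm{t}$ and fresh temporal EDB predicates $P^\mathrm{t}$ for each EDB $P$, and let $\Pi_Q = \Pi_1' \cup \Pi_2'$ extended with
\begin{align*}
    A(t-1)\wedge G_1^\mathrm{t}(\vec{x},t) &\to P_Q(\vec{x},t),\\
    G_2^\mathrm{t}(\vec{x},t) &\to P_Q(\vec{x},t),
\end{align*}
for $A$ a fresh temporal unary EDB predicate and $P_Q$ a fresh temporal IDB predicate of the same arity as $G_i^\mathrm{t}$. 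The map $\phi$ sends $\langle Q_1,Q_2\rangle$ to $\langle \langle P_Q,\Pi_Q\rangle,0,0\rangle$; this is \logspace{}-computable, and since $P_Q$ has no incoming dependency from the rest of $\Pi_Q$, it preserves nonrecursivity.

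For the forward direction, assuming $Q_1\sqsubseteq Q_2$, I would establish the nontrivial inclusion $Q(D\cup U,\tout)\subseteq Q(D[\tin]\cup U,\tout)$ for every $\tin$-history $D$, $\tin$-update $U$, and $\tout>\tin$, by case-splitting on the root of a derivation of $P_Q(\vec{o},\tout)$. If the root is an instance of the second rule, the derivation of $G_2^\mathrm{t}(\vec{o},\tout)$ lives entirely at time $\tout>\tin$, hence inside $U$, and survives forgetting. If the root is an instance of the first rule, the only fact at risk is $A(\tout-1)$, which is lost precisely when $\tout=\tin+1$; in this subcase, the uniform-time structure of $\Pi_1'$ forces every leaf of the $G_1^\mathrm{t}(\vec{o},\tin+1)$-derivation to be a $U$-fact of the form $P^\mathrm{t}(\vec{c},\tin+1)$, so the rigid image $E$ of those leaves satisfies $\vec{o}\in Q_1(E)$; containment then yields $\vec{o}\in Q_2(E)$, producing $G_2^\mathrm{t}(\vec{o},\tin+1)$ from $U$ alone and hence $\vec{o}\in Q(D[\tin]\cup U,\tout)$ via the second rule.

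For the backward direction, I would start from a witness $\vec{o}\in Q_1(D)\setminus Q_2(D)$, and take $\tin=0$, $\tout=1$, $D'=\{A(0)\}$, and $U=\{P^\mathrm{t}(\vec{c},1):P(\vec{c})\in D\}$. The first new rule with $t=1$, combined with the derivation of $G_1^\mathrm{t}(\vec{o},1)$ obtained from the temporalisation of $D$, yields $\vec{o}\in Q(D'\cup U,1)$; on the other hand, $D'[0]=\emptyset$ so $D'[0]\cup U=U$ contains neither $A(0)$ nor the data needed to derive $G_2^\mathrm{t}(\vec{o},1)$, giving $\vec{o}\notin Q(U,1)$. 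Hence $\textsc{Window}$ fails on $\phi(\langle Q_1,Q_2\rangle)$.

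The main obstacle is the $\tout=\tin+1$ subcase of the forward direction: the forgotten fact $A(\tin)$ kills the only derivation through the first rule, and salvaging the answer requires exploiting the time-uniformity of $\Pi_1'$-derivations to extract a rigid counterexample to $Q_1\sqsubseteq Q_2$ from the surviving $U$-facts at $\tin+1$ and re-derive through the second rule.
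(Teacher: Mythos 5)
Your proposal is correct and matches the paper's own reduction essentially verbatim: the same temporalised programs $\Pi_1',\Pi_2'$, the same two bridging rules with $A(t-1)$, the same parameters $d=s=0$, and the same backward-direction witness $D'=\{A(0)\}$, $U=\{P^\mathrm{t}(\vec{c},1):P(\vec{c})\in D\}$. The only (harmless) difference is in the forward direction for the rule-$(1)$ case: you invoke the containment $Q_1\sqsubseteq Q_2$ only in the subcase $\tout=\tin+1$ where $A(\tout-1)$ is actually forgotten, whereas the paper reroutes every such derivation through $G_2^\mathrm{t}$ uniformly.
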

\begin{proof}
    Let $I = \langle Q_1, Q_2 \rangle$ be an instance of query containment with $Q_1$
    and $Q_2$ datalog queries. 
    Without loss of generality, $P_{Q_1} = P_{Q_2} = G$.  
    For any rigid $n$-ary IDB predicate $P$ and $i \in \{ 1,2 \}$, 
    let $P_i$ be a fresh rigid $n$-ary IDB predicate uniquely
    associated with $P$ and $i$.  
    For any rigid $n$-ary EDB (resp., IDB) predicate $P$, 
    let $P^\mathrm{t}$ be a fresh temporal $(n+1)$-ary EDB (IDB)
    predicate uniquely associated with $P$.  
    Let $t$ be a time variable.  
    For $i \in \{ 1,2 \}$, let $\program_i$ be $\program_{Q_i}$ 
    after replacing each rigid $n$-ary IDB predicate $P$ with $P_i$; 
    let $\program_i'$ be $\program_i$ after replacing each rigid atom $P(\vec{u})$ 
    with the temporal atom $P^\mathrm{t}(\vec{u}, t)$.  
    Let $A$ be a fresh temporal unary EDB predicate.  
    Let $Q$ be the query such that $P_Q$ is a fresh temporal IDB
    predicate of the same arity as $G_1^\mathrm{t}$ (or, equivalently, as
    $G_2^\mathrm{t}$), and $\program_Q$ is $\program_1' \cup \program_2'$
    extended with the following rules:
    \begin{align}
        \label{eq:windowruleone} A(t-1) \wedge G_1^\mathrm{t}(\vec{x},t) \rightarrow \alignhere P_Q(\vec{x},t) \\
        \label{eq:windowruletwo} G_2^\mathrm{t}(\vec{x},t) \rightarrow \alignhere P_Q(\vec{x},t) 
    \end{align}
    Clearly, $Q$ can be constructed in logarithmic space w.r.t.\ the size of
    $Q_1$ and $Q_2$.

    We show that $Q_1 \sqsubseteq Q_2$ iff $\textsc{Window}$ 
    holds for $\phi(I) = \langle Q, 0, 0 \rangle$.

    Assume that $Q_1 \sqsubseteq Q_2$ holds.  
    We show that $\textsc{Window}$
    holds for $\langle Q, 0, 0 \rangle$, by showing that
    $Q(D \cup U, \tout)\subseteq Q(D[\tin] \cup U, \tout)$ 
    for every dataset $D$, 
    time points $\tin$ and $\tout$ 
    such that $\tout > \tin$,
    and $\tin$-update $U$.
    Let $\vec{o}$ be a tuple,
    let $D$ be a dataset,
    let $\tin$ and $\tout$ be time points
    such that $\tout > \tin$,
    and let $U$ be a $\tin$-update
    such that 
    $\vec{o} \in Q(D \cup U, \tout)$.
    Let $\delta$ be a derivation of $P_Q(\vec{o}, \tout)$ 
    from $\Pi_Q \cup D \cup U$.
    The root of $\delta$ is labelled with an instance of either 
    rule~\eqref{eq:windowruleone} or rule~\eqref{eq:windowruletwo} 
    since $P_Q$ does not occur in $\program_1' \cup \program_2'$.
    We consider the two cases separately.  
    If the label is an instance of rule \eqref{eq:windowruleone}, 
    then $\Pi_Q \cup D \cup U \models G_1^\mathrm{t}(\vec{o},\tout)$; 
    then $\Pi_1' \cup D \cup U \models G_1^\mathrm{t}(\vec{o},\tout)$ 
    by the construction of $\Pi_Q$; 
    then $\Pi_1' \cup D[\tin] \cup U \models G_1^\mathrm{t}(\vec{o},\tout)$ because atoms in
    $\Pi_1'$ all have $t$ as time argument and $\tout > \tin$ by assumption;
    then $\vec{o} \in Q_1(D')$ by the construction of $\Pi_1'$,
    where $D'$ is the dataset consisting of each rigid fact $P(\vec{c})$ for
    $P^\mathrm{t}(\vec{c}, \tout) \in D[\tin] \cup U$;
    then $\vec{o} \in Q_2(D')$ since
    $Q_1\sqsubseteq Q_2$ by assumption; then
    $\Pi_2' \cup D[\tin] \cup U \models G_2^\mathrm{t}(\vec{o},\tout)$ by the constructions of
    $\Pi_2'$ and $D'$; 
    then $\Pi_Q \cup D[\tin] \cup U \models G_2^\mathrm{t}(\vec{o},\tout)$ because
    $\Pi_2' \subseteq \Pi_Q$; then $\vec{o} \in Q(D[\tin] \cup U, \tout)$ by
    rule~\eqref{eq:windowruletwo}.
    If the root of $\delta$ is labelled with an
    instance of rule \eqref{eq:windowruletwo}, then
    $\Pi_Q \cup D \cup U \models G_2^\mathrm{t}(\vec{o}, \tout)$; then
    $\Pi_2' \cup D \cup U \models G_2^\mathrm{t}(\vec{o}, \tout)$ by the
    construction of $\Pi_Q$; then
    $\Pi_2' \cup D[\tin] \cup U \models G_2^\mathrm{t}(\vec{o}, \tout)$ because atoms in
    $\Pi_2'$ all have $t$ as time argument and $\tout > \tin$ by assumption; 
    then $\Pi_Q \cup D[\tin] \cup U \models G_2^\mathrm{t}(\vec{o}, \tau)$
    because $\Pi_2' \subseteq \Pi_Q$.

    For the converse,
    assume $Q_1 \not \sqsubseteq Q_2$,
    and hence 
    there is a tuple $\vec{o}$ of objects and a dataset $D$
    such that $\vec{o} \in Q_1(D)$ and $\vec{o} \notin Q_2(D)$.
    We show that \textsc{Window} does not hold on $\langle Q, 0, 0 \rangle$.
    Let $D' = \{ A(0) \}$,
    and let $\tout = 1$ and let $\tin = 0$.
    Let $U$ be the $\tin$-update containing each temporal
    fact $P^\mathrm{t}(\vec{c}, \tout)$ for $P(\vec{c}) \in D$.
    We have that $\vec{o} \in Q_1(D)$ implies 
    $\Pi_Q \cup U \models G_1^\mathrm{t}(\vec{o}, \tout)$
    by the construction of $\Pi_1'$.
    Hence $\vec{o} \in Q(D' \cup U, \tout)$
    by rule~\eqref{eq:windowruleone}.
    We show that that $\vec{o} \notin Q(D'[\tin] \cup U, \tout)$.
    Note that $D'[\tin] = \emptyset$.
    Let us assume by contradiction that $\vec{o} \in Q(U, \tout)$.
    Let $\delta$ be a derivation of $P_Q(\vec{o}, \tout)$ from $\Pi_Q \cup U$.
    The root of $\delta$ is labelled with an instance of either 
    rule~\eqref{eq:windowruleone} or rule~\eqref{eq:windowruletwo} 
    since $P_Q$ does not occur in $\program_1' \cup \program_2'$.
    We discuss the two cases separately.
    If the root of $\delta$ is labelled with an instance of rule~\eqref{eq:windowruleone},
    then $A(\tout - 1) \in U$, which cannot be because $U$ contains no 
    time point different from $\tout$.
    If the root of $\delta$ is labelled with an instance of rule~\eqref{eq:windowruletwo},
    then $\Pi_Q \cup U \models G_2^\mathrm{t}(\vec{o}, \tout)$;
    then $\Pi_2' \cup U \models G_2^\mathrm{t}(\vec{o}, \tout)$ by construction of $\Pi_Q$;
    then $\vec{o} \in Q_2(D)$ by construction of $\Pi_2$ and $U$,
    which contradicts our assumption.
\end{proof}

\undecidabilitywindow*
\begin{proof}
    The claim follows by Lemma~\ref{lemma:hardnesswindow}
    since query containment for datalog is undecidable by the results in~\cite{shmueli1993equivalence}.
\end{proof}

\nrwindowtheorem*
\begin{proof}
    We first prove hardness.
    The reduction $\phi$ in Lemma~\ref{lemma:hardnesswindow} is such that,
    for each instance $I = \langle Q_1, Q_2 \rangle$ of query containment,
    the query in $\phi(I)$ is \connected{},
    and is also nonrecursive constant-free 
    if $Q_1$ and $Q_2$ are nonrecursive constant-free.
    Furthermore,
    query containment is 
    \conexptimehard{} already for nonrecursive constant-free datalog queries,
    by the results in~\cite{benedikt2010impact}.

    For the upper bound, we next show that there is a
    \logspace{}-computable many-one reduction $\phi$ from
    \textsc{Window} restricted to nonrecursive queries to nonrecursive
    query containment; the claim then follows by
    Lemma~\ref{lem:containment}.
    Consider the construction given in Section~\ref{sec:delay}.
    We argue that $Q_1 \sqsubseteq Q_2$ iff $\textsc{Window}(Q, d, s)$ holds.

    For the direction from left to right, suppose $Q_1\sqsubseteq Q_2$,
    and let $D$ be a dataset, $\tin$, $\tout$ time points such that
    $\tout>\tin-d$, and $U$ a $\tin$-update. We need to show $Q(D\cup
    U,\tout)=Q(D[\tin-s]\cup U,\tout)$.  Without loss of generality,
    we show the claim for $\tin=0$ and $U=\emptyset$; since $Q$
    contains no time points, for each $D'$, $U'$ and $\tin'$ we have
    $Q(D,\tout)=Q(D'\cup U',\tout')$, where $\tout=\tout'-\tin'$ and
    $D$ is obtained from $D'\cup U'$ by replacing each temporal fact
    $P(\vec{o},\tau)$ with $P(\vec{o},\tau-\tin')$; thus, $Q(D'\cup
    U',\tout')=Q(D'[\tin'-s]\cup U',\tout')$ holds if and only if so
    does $Q(D,\tout)=Q(D[-s],\tout)$.

    The inclusion $Q(D[-s],\tout)\subseteq Q(D,\tout)$ is immediate by
    monotonicity of entailment. For the other inclusion, suppose
    $\vec{o}\in Q(D,\tout)$. We distinguish two cases.

    If $\tout>-s+\radius(Q)$, the derivation of $P_Q(\vec{o},\tout)$
    from $D\cup\Pi_Q$ does not involve facts at time points before
    $-s+1$; this implies $\Pi_Q\cup D[-s]\models P_Q(\vec{o},\tout)$ and hence
    $\vec{o}\in Q(D[-s],\tout)$.

    Similarly, if $-d<\tout\le-s+\radius(Q)$, we have $\Pi_Q\cup
    D'\models P_Q(\vec{o},\tout)$, where $D'$ is obtained from
    $D[-d-\radius(Q)]$ by additionally removing all temporal facts
    holding after $-s+2\cdot\radius(Q)$, since no derivation of
    $P_Q(\vec{o},\tout)$ from $\Pi_Q\cup D$ involves facts at or
    before $-d-\radius(Q)$, or after $-s+2\cdot\radius(Q)$.  Consider
    $D''=D\cup\{\mathit{In}(0)\}$. By construction, we then have
    $\Pi^{d+\radius(Q)}\cup D''\models P'(\vec{c},\tau)$ if and only
    if $P(\vec{c},\tau)\in D'$. Consequently, we have
    $\Pi^{d+\radius(Q)}\cup D''\models P_Q(\vec{o},\tout)$, and, since
    $-d<\tout\le-s+\radius(Q)$, also $\Pi^{d+\radius(Q)}\cup
    D''\models \mathit{Out}(\tout)$; thus, $\vec{o}\in Q_1(D'',\tout)$. By
    assumption, we obtain $\vec{o}\in Q_2(D'',\tout)$, i.e.,
    $\Pi^{s}\cup D''\models G(\vec{o},\tout)$, and hence $\Pi^{s}\cup
    D''\models P_Q(\vec{o},\tout)$. By construction, any derivation of
    $P_Q(\vec{o},\tout)$ from $\Pi^{s}\cup D''$ can only involve
    temporal facts holding after $-s$. Thus, given a derivation
    $\delta$ of $P_Q(\vec{o},\tout)$ from $\Pi^{s}\cup D''$, by
    replacing each subtree of $\delta$ whose root is labelled by
    $P(\vec{c}',\tau')\land B(\tau')\to P'(\vec{c}',\tau')$ with the
    leaf $P(\vec{c}',\tau')$, we obtain a derivation of
    $P_Q(\vec{o},\tout)$ from $\Pi_Q\cup D[-s]$. Consequently,
    $\vec{o}\in Q(D[-s],\tout)$.

    The direction from right to left is similar.
\end{proof}

\fi

\end{document}